\documentclass{article}

\usepackage{microtype}
\usepackage{graphicx}
\usepackage{subcaption}
\usepackage{booktabs} 

\usepackage{hyperref}
\usepackage{wrapfig}
\newenvironment{icmlbiography}[2][]{%
	\par\vspace{1em}
	\begin{wrapfigure}{l}{1.1in}
		\vspace{-15pt}
		#1
		\vspace{-15pt}
	\end{wrapfigure}
	\noindent\textbf{#2}
}{%
	\par
}

\usepackage[preprint]{icml2026}

\usepackage{amsmath}
\usepackage{amssymb}
\usepackage{mathtools}
\usepackage{amsthm}
\usepackage{multirow}
\usepackage[table]{xcolor}
\usepackage[capitalize,noabbrev]{cleveref}

\theoremstyle{plain}
\newtheorem{theorem}{Theorem}[section]

\newtheorem{lemma}[theorem]{Lemma}
\newtheorem{corollary}[theorem]{Corollary}
\theoremstyle{definition}
\newtheorem{definition}[theorem]{Definition}

\theoremstyle{remark}

\usepackage[disable,textsize=tiny]{todonotes}

\icmltitlerunning{Prompt Tuning for CLIP on the Pretrained Manifold}

\begin{document}
	
	\twocolumn[
	\icmltitle{Prompt Tuning for CLIP on the Pretrained Manifold}
	
	
	
	\icmlsetsymbol{equal}{*}
	
	\icmlsetsymbol{corr}{, *}
	
	\begin{icmlauthorlist}
		\icmlauthor{Xi Yang}{gzu}
		\icmlauthor{Yuanrong Xu}{hit}
		\icmlauthor{Weigang Zhang}{hit}
		\icmlauthor{Guangming Lu}{hit}
		\icmlauthor{David Zhang}{cuhksz}
		\icmlauthor{Jie Wen}{hit,corr}
	\end{icmlauthorlist}
	
	\icmlaffiliation{gzu}{Guizhou University, Guiyang, China}
	\icmlaffiliation{hit}{Harbin Institute of Technology, China}
	\icmlaffiliation{cuhksz}{The Chinese University of Hong Kong, Shenzhen, China}

	\icmlcorrespondingauthor{Jie Wen}{jiewen\_pr@126.com}
	
	\icmlkeywords{Machine Learning, ICML}
	
	\vskip 0.3in
	]
	
	
	
	\printAffiliationsAndNotice{}  
	\begin{abstract}
		Prompt tuning introduces learnable prompt vectors that adapt pretrained vision-language models to downstream tasks in a parameter-efficient manner. However, under limited supervision, prompt tuning alters pretrained representations and drives downstream features away from the pretrained manifold toward directions that are unfavorable for transfer. This drift degrades generalization. To address this limitation, we propose ManiPT, a framework that performs prompt tuning on the pretrained manifold. ManiPT introduces cosine consistency constraints in both the text and image modalities to confine the learned representations within the pretrained geometric neighborhood. Furthermore, we introduce a structural bias that enforces incremental corrections, guiding the adaptation along transferable directions to mitigate reliance on shortcut learning. From a theoretical perspective, ManiPT alleviates overfitting tendencies under limited data. Our experiments cover four downstream settings: unseen-class generalization, few-shot classification, cross-dataset transfer, and domain generalization. Across these settings, ManiPT achieves higher average performance than baseline methods. Notably, ManiPT provides an explicit perspective on how prompt tuning overfits under limited supervision.
	\end{abstract}
	\section{Introduction}
	Recent large-scale pretrained vision-language models (VLMs) such as CLIP~\citep{radford2021learning}, ALIGN~\citep{jia2021scaling}, and Florence~\citep{yuan2021florence} leverage massive image and text-aligned data to learn general representations. These models support open-vocabulary recognition and transfer~\citep{radford2021learning,gu2021openvocabulary} and are widely adopted as foundation models for downstream tasks. However, under practical constraints such as label scarcity or domain shifts, direct full fine-tuning incurs high computational costs and risks compromising the structure of pretrained representations. By freezing the pretrained backbone and introducing a small number of learnable vectors to modulate model behavior, prompt tuning maintains strong performance on downstream benchmarks while significantly reducing the number of trainable parameters and is now a common paradigm for adapting VLMs.
	
	CLIP establishes a transferable feature manifold via large-scale pretraining. Under limited supervision, however, prompt learning tends to exploit local discriminative signals. These signals contain some transferable semantics but also capture spurious correlations that are valid only within the limited training data, such as background patterns or texture artifacts. Although this strategy ensures separability on in-domain data, it causes adapted representations to deviate from the pretrained manifold, as illustrated in Figure~\ref{fig:1}. Without the geometric support of the large-scale pretrained manifold, the adapted representations fail to generalize reliably to unseen classes or cross-domain distributions, which impairs overall generalization performance.
	\begin{figure}[t!]
		\centering
		\includegraphics[width=\linewidth]{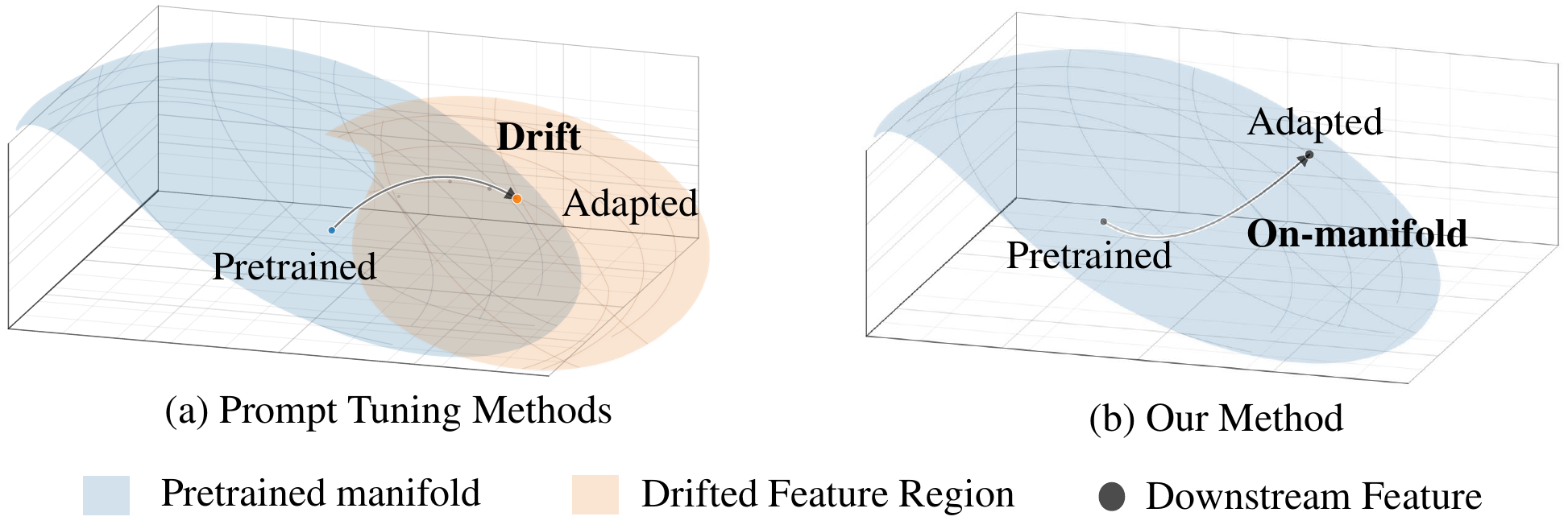}
		\caption{Manifold drift and manifold preservation on the CLIP feature space.
			(a) \textbf{Prompt tuning.} Under limited supervision, prompt tuning drives adapted representations away from the pretrained manifold.
			(b) \textbf{ManiPT.} The proposed method constrains adapted representations to stay close to the pretrained manifold.}
		\label{fig:1}
		\vspace{-20pt}
	\end{figure}
	
	While existing prompt-based adaptation methods mitigate overfitting by making prompts more expressive or adding heuristic regularization, they seldom explicitly control how prompt updates change representations relative to the frozen CLIP features. For example, CoOp and CoCoOp~\citep{zhou2022coop,zhou2022cocoop} improve transfer by learning class- or instance-conditioned prompts. MaPLe~\citep{khattak2023maple} and related deep prompting methods~\citep{jia2022vpt} inject prompts into multiple layers to increase capacity, and CLIP-Adapter~\citep{gao2021clipadapter} attaches shallow adapters onto frozen CLIP to bridge domain gaps. Regularization-oriented approaches such as PromptSRC~\citep{yao2023promptsrc} and CoPrompt~\citep{zhang2023coprompt} introduce additional losses that aim to stabilize training, but these penalties are typically defined on logits, parameters, or prompt content rather than directly constraining feature geometry.
	
	In ManiPT, we instead treat the frozen CLIP representations as proxies for the pretrained manifold. By enforcing feature-level cosine consistency with these frozen references on both the visual and textual sides, ManiPT confines the learned representations within the pretrained geometric neighborhood. However, simply staying within the neighborhood does not guarantee transferability, as shortcut solutions may still exist locally. To address this, ManiPT introduces a structural bias through normalized additive aggregation. This design enforces incremental corrections, guiding the representations to adjust along transferable directions while suppressing reliance on dataset-specific shortcuts.
	
	Building on this analysis, we propose ManiPT, a framework for prompt tuning on the pretrained manifold, shown in Figure~\ref{fig:1}. Our contributions are summarized below:
	\vspace{-10pt}
	\begin{itemize}
		\item We identify manifold drift as a critical factor limiting generalization under limited supervision and propose ManiPT to mitigate the resulting degradation.
		\item We introduce cosine consistency constraints to prevent manifold drift and a structural bias to mitigate shortcut learning through incremental corrections.
		\item We provide theoretical guarantees on generalization error and demonstrate that ManiPT consistently outperforms baseline methods across extensive experiments.
	\end{itemize}
	\vspace{-11pt}
	\section{Related Work}
	\textbf{Manifold Learning.} Manifold learning~\citep{tenenbaum2000global} is built on the assumption that high-dimensional observations often lie approximately on a low-dimensional nonlinear manifold. Early studies~\citep{roweis2000lle} mainly focus on recovering global geometric structure from local neighborhood relations and use this structure for nonlinear dimensionality reduction and representation learning. Building on this line of work, manifold regularization~\citep{belkin2006manifold} explicitly injects a geometric prior of smoothness along the data manifold into the learning objective and, in settings such as semi-supervised learning, uses graph-based regularization to constrain variation at unlabeled samples, which leverages the geometry of the marginal distribution to improve generalization. With the rise of deep learning, manifold ideas expand from explicit embeddings to geometric constraints in latent spaces. Contractive Auto-Encoders~\citep{rifai2011contractive} encourage representations to be sensitive to variations along tangent directions of the manifold while remaining robust to perturbations in directions orthogonal to the manifold, so that local manifold structure can be captured. Manifold Mixup~\citep{verma2019manifold} performs interpolation in hidden layers to smooth the decision boundary and to learn intermediate representations that transfer well across tasks. In the same spirit, we treat the pretrained CLIP feature space as a robust manifold. Our goal extends beyond merely preserving this structure. We aim to constrain the feature adaptation within the manifold neighborhood while biasing the adaptation toward transferable directions along the manifold surface.
	
	\textbf{Prompt Tuning.} Early attempts such as CoOp~\citep{zhou2022coop} substitute handcrafted templates with learnable context vectors. To address limited generalization on unseen classes, subsequent approaches such as CoCoOp~\citep{zhou2022cocoop} condition prompts on image instances. MaPLe~\citep{khattak2023maple} enhances modality alignment through hierarchical prompting. DPC~\citep{wang2023dpc} decouples prompts into dual branches for specialization. DePT~\citep{xu2023dept} isolates channel-wise biases to reduce interference. To mitigate overfitting and catastrophic forgetting, recent studies introduce regularization strategies that maintain the discriminative power of the backbone, including PromptSRC~\citep{yao2023promptsrc} and CoPrompt~\citep{zhang2023coprompt}, which we use as strong baselines. More recent research focuses on integrating prior knowledge and external information. TAC~\citep{liu2023tac} exploits task-aware priors to generate task contexts, and TextRefiner~\citep{chen2023textrefiner} utilizes internal visual knowledge to refine prompts. Approaches such as LLaMP~\citep{zhang2023llamp}, TAP~\citep{wang2023tap}, and ATPrompt~\citep{li2023atprompt} leverage external large language models (LLMs) to enrich semantic descriptions and structured attributes of prompts. Existing methods primarily enhance adaptation capacity by increasing prompt plasticity, which often necessitates heuristic regularization to mitigate the resulting overfitting. Distinct from these approaches, ManiPT adopts a geometric perspective. We introduce consistency constraints to confine the learned representations within the pretrained geometric neighborhood and a structural bias to enforce incremental corrections. This dual mechanism ensures that the model not only stays on the manifold but also guides the feature adaptation along transferable directions while suppressing dataset-specific shortcuts.
	\vspace{-10pt}
	\section{Preliminaries and Insights}
	\label{sec:preliminaries}
	We follow the standard CLIP-based protocol~\citep{radford2021learning,zhou2022coop} for image classification. The main symbols and notation used throughout this section and the subsequent analysis are summarized in Appendix~\ref{app:notation}.
	
	\textbf{CLIP-based Image Classification.} Given a label set $\mathcal{C}=\{1,\dots,C\}$ and an image $\mathbf{x}$, CLIP maps texts and images into a shared space via frozen encoders $\mathcal{T}$ and $\mathcal{V}$. For class $c$ with template $\mathbf{t}_c$, the normalized text feature is $\mathbf{z}^{\mathrm{txt}}_{c}=\mathcal{T}(\mathbf{t}_c)/\|\mathcal{T}(\mathbf{t}_c)\|\in\mathbb{R}^{d}$, where $d$ is the embedding dimension. Similarly, the normalized visual feature is $\mathbf{z}^{\mathrm{vis}}_{\mathbf{x}}=\mathcal{V}(\mathbf{x})/\|\mathcal{V}(\mathbf{x})\|\in\mathbb{R}^{d}$. We compute logits $\ell_c(\mathbf{x})=\tau (\mathbf{z}^{\mathrm{vis}}_{\mathbf{x}})^{\top} \mathbf{z}^{\mathrm{txt}}_{c}$ with temperature $\tau$, and the predicted label is $\hat{y}(\mathbf{x})=\arg\max_{c} \ell_c(\mathbf{x})$. The probabilistic prediction is:
	\begin{equation}
		p(c\,|\,\mathbf{x})=\frac{\exp(\ell_c(\mathbf{x}))}{\sum_{j=1}^{C}\exp(\ell_j(\mathbf{x}))}.
	\end{equation}
	
	\textbf{Prompt Tuning on CLIP.} Prompt tuning adapts CLIP to a downstream task by freezing the pretrained text encoder $\mathcal{T}$ and image encoder $\mathcal{V}$ and learning a small set of prompt parameters that modulate the representations.
	
	For each class $c$, prompt tuning replaces the handcrafted context in the class template with $m$ learnable context vectors.
	Let $\mathbf{P}_0=\{\mathbf{p}_1,\dots,\mathbf{p}_m\}$ denote the learnable context. The prompted text sequence is denoted by $\mathbf{t}_c(\mathbf{P}_0)$, and the resulting class text feature is:
	\begin{equation}
		\label{eq:text_prompt_shallow}
		\mathbf{h}^{\mathrm{txt}}_{c}
		=\frac{\mathcal{T}\!\left(\mathbf{t}_c(\mathbf{P}_0)\right)}{\left\|\mathcal{T}\!\left(\mathbf{t}_c(\mathbf{P}_0)\right)\right\|}.
	\end{equation}
	Beyond input-level context, we inject learnable prompts into intermediate Transformer~\citep{vaswani2017attention} layers. We denote the deep prompts for the text encoder as $\{\mathbf{P}_{k}\}_{k=1}^{K-1}$, where $K$ is the number of layers.
	Collecting all text-side prompts as $\mathcal{P}^{\mathrm{txt}}=\{\mathbf{P}_0,\mathbf{P}_1,\dots,\mathbf{P}_{K-1}\}$, the prompted text feature can be written as:
	\begin{equation}
		\label{eq:text_prompt}
		\mathbf{h}^{\mathrm{txt}}_{c}
		=\frac{\mathcal{T}\!\left(\mathbf{t}_c,\mathcal{P}^{\mathrm{txt}}\right)}{\left\|\mathcal{T}\!\left(\mathbf{t}_c,\mathcal{P}^{\mathrm{txt}}\right)\right\|}.
	\end{equation}
	On the visual side, prompt vectors can also be introduced into the image encoder. We denote the input-level visual prompts by $\tilde{\mathbf{P}}_0$ (inserted after patch embedding) and the deep visual prompts by $\{\tilde{\mathbf{P}}_{k}\}_{k=1}^{K-1}$.
	Let $\mathcal{P}^{\mathrm{vis}}=\{\tilde{\mathbf{P}}_0,\tilde{\mathbf{P}}_1,\dots,\tilde{\mathbf{P}}_{K-1}\}$ be all visual prompts.
	The prompted image feature is then:
	\begin{equation}
		\label{eq:vis_prompt}
		\mathbf{h}^{\mathrm{vis}}_{\mathbf{x}}
		=\frac{\mathcal{V}\!\left(\mathbf{x},\mathcal{P}^{\mathrm{vis}}\right)}{\left\|\mathcal{V}\!\left(\mathbf{x},\mathcal{P}^{\mathrm{vis}}\right)\right\|}.
	\end{equation}
	With prompt-adapted features, CLIP performs classification by cosine similarity matching between $\mathbf{h}^{\mathrm{vis}}_{\mathbf{x}}$ and the class text features $\mathbf{h}^{\mathrm{txt}}_{c}$. We compute:
	\begin{equation}
		\ell_c(\mathbf{x})=\tau \cdot \left( \mathbf{h}^{\mathrm{vis}}_{\mathbf{x}} \right)^{\top} \mathbf{h}^{\mathrm{txt}}_{c},
	\end{equation}
	followed by a softmax over classes to obtain $p(c\,|\,\mathbf{x})$.
	
	\textbf{Representing Manifold Drift and Shortcut Reliance.}
	Based on the manifold hypothesis~\citep{tenenbaum2000global}, we posit that CLIP pretrained features exhibit a low-dimensional geometric structure despite residing in a high-dimensional space. As this intrinsic structure is not explicitly accessible, we utilize the principal component analysis (PCA) subspace~\citep{jolliffe2002principal} of the pretrained feature point cloud as a computable approximation to quantify the manifold shift induced by prompt tuning. Specifically, let $\mathbf{Z}\in\mathbb{R}^{N\times d}$ and $\mathbf{H}\in\mathbb{R}^{N\times d}$ denote the normalized feature point clouds of the pretrained and prompt-tuned models respectively, where $N$ is the number of samples and $d$ is the feature dimension. Let $\boldsymbol{\mu}$ be the mean of the pretrained feature cloud, and let $\bar{\mathbf{Z}}$ and $\bar{\mathbf{H}}$ denote the pretrained and prompt-tuned features centered by this common mean. We perform PCA on $\bar{\mathbf{Z}}$ and let $\mathbf{V}^{\mathrm{pca}}\in\mathbb{R}^{d\times d^{\mathrm{pca}}}$ collect the top-$d^{\mathrm{pca}}$ principal directions. We define the projection matrix $\mathbf{P}^{\mathrm{pca}}=\mathbf{V}^{\mathrm{pca}}(\mathbf{V}^{\mathrm{pca}})^\top\in\mathbb{R}^{d\times d}$ onto this principal subspace, and let $\mathbf{I}_d\in\mathbb{R}^{d\times d}$ be the identity matrix. Then the approximate manifold shift is formulated as follows. See Appendix~\ref{app:drift_definition} for detailed definitions.
	\begin{equation}
		\label{eq:manifold_shift}
		\Delta
		=
		\frac{\|\bar{\mathbf{H}}(\mathbf{I}_d-\mathbf{P}^{\mathrm{pca}})\|_F^2}{\|\bar{\mathbf{H}}\|_F^2}
		-
		\frac{\|\bar{\mathbf{Z}}(\mathbf{I}_d-\mathbf{P}^{\mathrm{pca}})\|_F^2}{\|\bar{\mathbf{Z}}\|_F^2},
	\end{equation}
	where $\|\cdot\|_F$ denotes the Frobenius norm. $\Delta>0$ signifies prompt-tuned features drift away from the principal subspace of the pretrained CLIP model, increasing the risk of overfitting to dataset-specific biases.
	\begin{figure*}[t!]
		\centering
		\includegraphics[width=\linewidth]{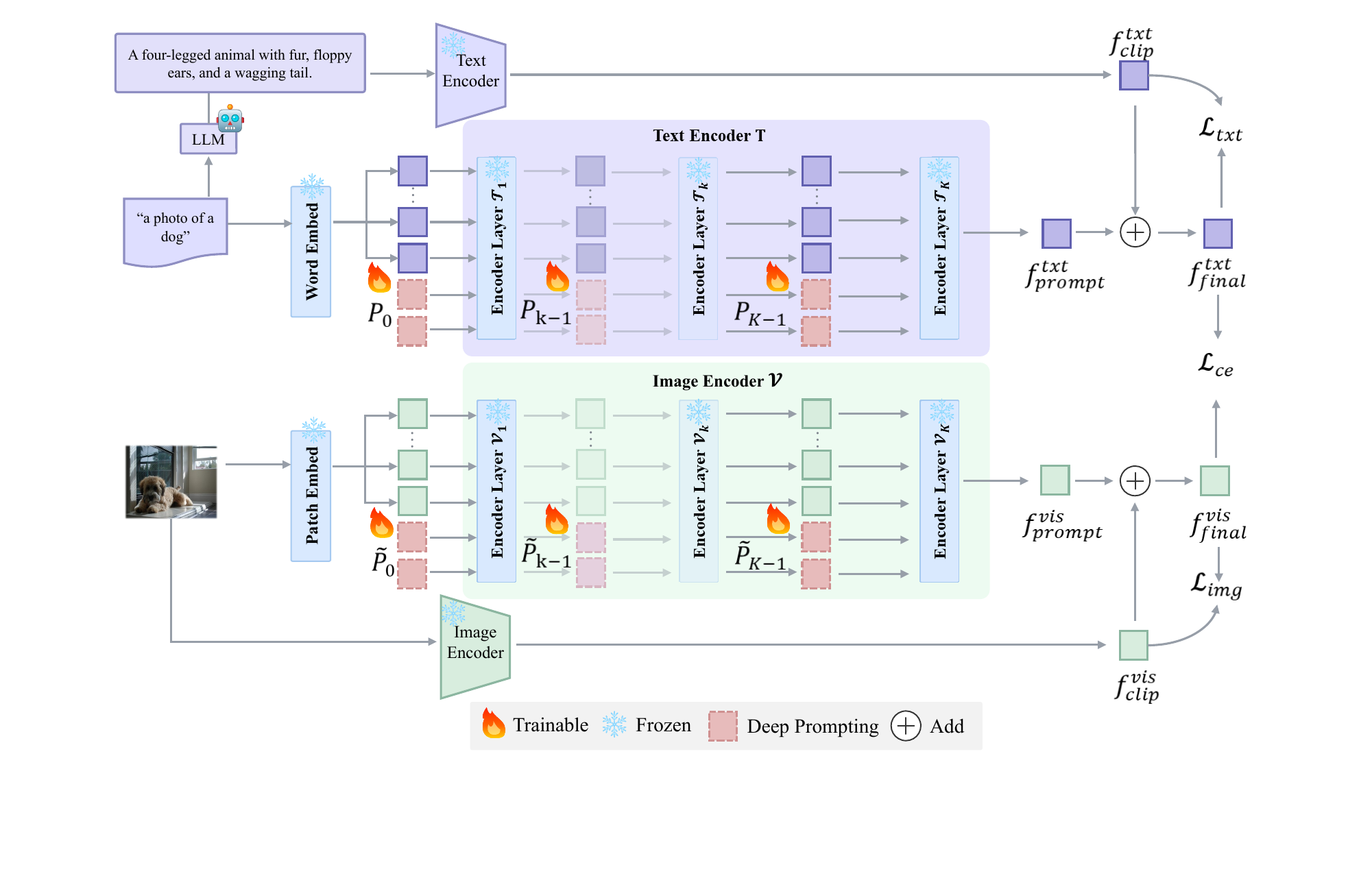} 
		\caption{Overview of ManiPT. The framework enriches class descriptions with an LLM, constructs a text feature bank as semantic prototypes, and applies cosine consistency constraints together with a structural bias to keep prompt tuning near the pretrained manifold.}
		\label{fig:main_arch}
		\vspace{-10pt}
	\end{figure*}
	
	Consider the decomposition of the feature space into a transferable subspace $\mathcal{Q}$ supported by pretraining and a shortcut subspace $\mathcal{S}$ containing dataset-specific variations. Let $\boldsymbol{\delta}$ denote the feature shift induced by prompt tuning. We formulate the decomposition as:
	\begin{equation}
		\label{eq:delta_decomposition}
		\boldsymbol{\delta} = \boldsymbol{\delta}^{\mathcal{Q}} + \boldsymbol{\delta}^{\mathcal{S}},
	\end{equation}
	where $\boldsymbol{\delta}^{\mathcal{Q}}$ resides in $\mathcal{Q}$ and represents refinement along robust semantic directions while $\boldsymbol{\delta}^{\mathcal{S}}$ belongs to $\mathcal{S}$ and denotes deviation into the shortcut subspace. When the training set is small, minimizing empirical risk tends to exploit discriminative signals present in $\mathcal{S}$ such as background noise or texture biases. These signals are sufficient to separate the limited support set but lack semantic universality. Under limited supervision, optimization tends to amplify $\boldsymbol{\delta}^{\mathcal{S}}$, causing the learned representations to diverge from the robust semantic directions supported by the pretrained manifold.
	
	\textbf{A Framework for Performing Prompt Tuning on the Pretrained Manifold.} Building on the analysis in Eq.~(\ref{eq:manifold_shift}) and Eq.~(\ref{eq:delta_decomposition}), we propose a framework to perform prompt tuning on the pretrained manifold. ManiPT imposes cosine consistency constraints to confine the learned representations within the pretrained geometric neighborhood. However, since shortcut components $\boldsymbol{\delta}^{\mathcal{S}}$ may still persist within the manifold neighborhood, we further introduce a structural bias to enforce incremental corrections, thereby biasing the adaptation toward transferable directions.
	\vspace{-10pt}
	\section{Prompt Tuning on the Pretrained Manifold}
	This section presents ManiPT and components from Section~\ref{sec:preliminaries}. Figure~\ref{fig:main_arch} shows the pipeline. We describe LLM-based knowledge enrichment, cosine consistency constraints, and the structural bias. Finally, we show that when empirical risks are comparable, ManiPT can theoretically attain a lower population risk than standard prompt tuning.
	\vspace{-10pt}
	\subsection{LLM-based Knowledge Enrichment}
	\label{sec:knowledge_enrichment}
	To reduce shortcut semantics learned under few-shot supervision, we construct stable semantic reference texts for each class and encode them into a text feature bank serving as a semantic prototype for subsequent constraints on the pretrained manifold. For each class $c\in \mathcal{C}$, we query an LLM to generate descriptions, yielding a description set $A_c=\{a_{c,j}\}_{j=1}^{n}$, where $n$ is the count per class, and aggregate these as $A=\bigcup_{c\in \mathcal{C}}A_c$. For efficiency, we pre-encode these descriptions into normalized CLIP text features:
	\begin{equation}
		\label{eq:feature_bank}
		E_c=\left\{\frac{\mathcal{T}(a)}{\|\mathcal{T}(a)\|}\ \bigg|\ a\in A_c\right\}.
	\end{equation}
	\vspace{-20pt}
	\subsection{Cosine Consistency Constraints}
	\label{sec:consistency_constraints}
	As shown in Eq.~(\ref{eq:manifold_shift}) and Eq.~(\ref{eq:delta_decomposition}), the feature change induced by prompt tuning decomposes into a component $\boldsymbol{\delta}^{\mathcal{Q}}$ in the transferable subspace and a shortcut component $\boldsymbol{\delta}^{\mathcal{S}}$ in the non-transferable subspace. Under limited supervision, optimization is more prone to amplifying $\boldsymbol{\delta}^{\mathcal{S}}$, causing manifold drift ($\Delta>0$). Since CLIP makes predictions based on cosine similarity between normalized features, we impose cosine consistency constraints on both modalities. This explicitly confines the feature adaptation within the pretrained geometric neighborhood, preventing significant deviation from the valid support of the pretrained manifold.
	
	\textbf{Visual-side Consistency Constraint.}
	For sample $\mathbf{x}_i$, let $\mathbf{z}^{\mathrm{vis}}_{\mathbf{x}_i}$ and $\mathbf{h}^{\mathrm{vis}}_{\mathbf{x}_i}$ be the normalized visual features from the frozen and prompt-tuned branches. We define the visual-side consistency loss as:
	\begin{equation}
		\mathcal{L}^{\mathrm{img}}
		=
		\frac{1}{N}\sum_{i=1}^{N}
		\left(1-\left\langle \mathbf{h}^{\mathrm{vis}}_{\mathbf{x}_i},\, \mathbf{z}^{\mathrm{vis}}_{\mathbf{x}_i}\right\rangle\right).
	\end{equation}
	This constraint explicitly limits the geometric deviation of the adapted features from the frozen representations.
	
	\textbf{Text-side Consistency Constraint.}
	To obtain a more stable semantic reference, we leverage the LLM description feature set $E_c$ constructed in Section~\ref{sec:knowledge_enrichment}. For each class $c$, we aggregate it into a normalized semantic prototype $\mathbf{w}_{c}=(\sum_{\boldsymbol{\eta}\in E_c} \boldsymbol{\eta})/\|\sum_{\boldsymbol{\eta}\in E_c} \boldsymbol{\eta}\|$.
	Let $\mathbf{h}^{\mathrm{txt}}_{c}$ be the normalized prompt-tuned text feature. The text-side consistency loss is:
	\begin{equation}
		\mathcal{L}^{\mathrm{txt}}
		=
		\frac{1}{C}\sum_{c\in \mathcal{C}}
		\left(1-\left\langle \mathbf{h}^{\mathrm{txt}}_{c},\, \mathbf{w}_{c}\right\rangle\right).
	\end{equation}
	Unlike the visual constraint, we incorporate LLM-derived semantic priors. This ensures the prompted text features remain semantically anchored to robust class descriptions, mitigating the risk of semantic drift.
	
	\subsection{Structural Bias}
	\label{sec:structural_bias}
	While consistency constraints confine the learned representations within a geometric neighborhood, this is a necessary but not sufficient condition for transferability, as shortcut solutions may still exist locally. To address this, we further introduce a structural bias. Specifically, we explicitly preserve the frozen CLIP representations in the final features used for classification. This design restricts prompt learning to incremental corrections on top of the pretrained manifold.
	
	For sample $\mathbf{x}$ and class $c$, we obtain normalized frozen features $\mathbf{z}^{\mathrm{vis}}_{\mathbf{x}}, \mathbf{z}^{\mathrm{txt}}_{c}$ and prompt features $\mathbf{h}^{\mathrm{vis}}_{\mathbf{x}}, \mathbf{h}^{\mathrm{txt}}_{c}$. We aggregate them residually and renormalize to produce final classification features:
	\begin{equation}
		\label{eq:fused_features}
		\begin{aligned}
			\mathbf{f}^{\mathrm{vis}}_{\mathbf{x}}
			&=
			\frac{\mathbf{z}^{\mathrm{vis}}_{\mathbf{x}}+\mathbf{h}^{\mathrm{vis}}_{\mathbf{x}}}
			{\left\|\mathbf{z}^{\mathrm{vis}}_{\mathbf{x}}+\mathbf{h}^{\mathrm{vis}}_{\mathbf{x}}\right\|},\\
			\mathbf{f}^{\mathrm{txt}}_{c}
			&=
			\frac{\mathbf{z}^{\mathrm{txt}}_{c}+\mathbf{h}^{\mathrm{txt}}_{c}}
			{\left\|\mathbf{z}^{\mathrm{txt}}_{c}+\mathbf{h}^{\mathrm{txt}}_{c}\right\|}.
		\end{aligned}
	\end{equation}
	
	This additive structure induces a geometric contraction, which imposes a hard constraint on the optimization behavior. Instead of allowing arbitrary deviations, this design enforces incremental corrections. By anchoring the final representations to the frozen representations, the structural bias effectively biases the adaptation toward transferable directions implied by the pretraining, thereby reducing reliance on dataset-specific shortcut components.
	
	\textbf{Classification Objective.}
	Based on the fused final representations, we follow CLIP's cosine similarity matching form to compute the logit for each class $c\in \mathcal{C}$:
	\begin{equation}
		\ell_c(\mathbf{x})=\tau \cdot \left(\mathbf{f}^{\mathrm{vis}}_{\mathbf{x}}\right)^{\top} \mathbf{f}^{\mathrm{txt}}_{c}.
	\end{equation}
	We optimize prompt parameters using cross-entropy loss:
	\begin{equation}
		\mathcal{L}^{\mathrm{ce}}
		=
		-\frac{1}{N}\sum_{i=1}^{N}\log
		\frac{\exp(\ell_{y_i}(\mathbf{x}_i))}
		{\sum_{c\in \mathcal{C}}\exp(\ell_c(\mathbf{x}_i))}.
	\end{equation}
	
	\textbf{Training Objective.} For conceptual clarity, we formulate the consistency constraints in this section using the intermediate prompt-adapted features. However, to leverage the structural bias defined in Eq.~(\ref{eq:fused_features}), our final training objective applies these constraints specifically to the fused representations $\mathbf{f}^{\mathrm{vis}}_{\mathbf{x}}$ and $\mathbf{f}^{\mathrm{txt}}_{c}$. This design ensures that the actual representations used for decision-making remain geometrically aligned with the pretrained representations.
	
	Consequently, the total objective function is formulated to minimize the classification error while satisfying these manifold consistency constraints:
	\begin{equation}
		\label{eq:total_loss}
		\mathcal{L}^{\mathrm{total}} = \mathcal{L}^{\mathrm{ce}} + \lambda \bigl(\mathcal{L}^{\mathrm{img}} + \mathcal{L}^{\mathrm{txt}}\bigr),
	\end{equation}
	where $\lambda$ balances regularization strength. Optimizing this objective on structurally biased features guides the model to discover transferable directions within the pretrained manifold. The pseudocode of ManiPT is summarized in Appendix~\ref{alg:manipt1}, and training and inference costs are compared in Appendix~\ref{app:efficiency}. ManiPT incurs slightly higher latency than single-branch methods due to dual-branch fusion but remains suitable for real-time deployment.
	
	\subsection{Theoretical Analysis}
	\label{sec:theory_main}
	We develop the core theoretical guarantees for ManiPT by analyzing its empirical risk, a geometric contraction property induced by the structural bias, and an explicit upper bound on the population risk. Detailed proofs and supporting derivations are provided in Appendix~\ref{app:theory2}.
	\begin{definition}[Empirical Risk]\label{def:emp_risk}
		Consider an input image space $\mathcal{X}$, a discrete label space $\mathcal{Y}$, and a distribution $\mathcal{D}$ over $\mathcal{X}\times \mathcal{Y}$.
		Let $\mathcal{C}=\{1,\dots,C\}$ be the label index set.
		Let $\mathcal{D}^{\mathrm{train}}=\{(\mathbf{x}_i,y_i)\}_{i=1}^{N}$ be a training set drawn i.i.d.\ from $\mathcal{D}$.
		Let $\boldsymbol{\ell}^{\boldsymbol{\theta}}_{\mathbf{x}}\in\mathbb R^{C}$ be the logit vector with parameters $\boldsymbol{\theta}$ for input $\mathbf{x}$.
		Let $p^{\boldsymbol{\theta}}(c\,|\,\mathbf{x})=\exp(\ell^{\boldsymbol{\theta}}_c(\mathbf{x}))/\sum_{c'\in \mathcal{C}}\exp(\ell^{\boldsymbol{\theta}}_{c'}(\mathbf{x}))$ be the softmax prediction.
		Define the empirical risk
		\begin{equation*}
			\widehat R_{N}(\boldsymbol{\theta})
			=
			-\frac{1}{N}\sum_{i=1}^{N}\log p^{\boldsymbol{\theta}}(y_i\,|\,\mathbf{x}_i).
		\end{equation*}
	\end{definition}
	\begin{lemma}[Contraction Induced by Additive Fusion]\label{lem:contraction}
		Let $d$ be the embedding dimension and let $\mathbb S^{d-1}=\{\mathbf{z}\in\mathbb R^d:\|\mathbf{z}\|_2=1\}$ be the unit sphere.
		Let $\boldsymbol{\varphi}\in\mathbb S^{d-1}$ and $\boldsymbol{\psi}\in\mathbb S^{d-1}$ satisfy $\boldsymbol{\varphi}+\boldsymbol{\psi}\neq 0$, which is equivalent to $\langle \boldsymbol{\varphi},\boldsymbol{\psi}\rangle>-1$.
		Define the fused unit vector $\boldsymbol{\omega}=(\boldsymbol{\varphi}+\boldsymbol{\psi})/\|\boldsymbol{\varphi}+\boldsymbol{\psi}\|_2$.
		Then the fusion map contracts toward $\boldsymbol{\varphi}$ in the sense that
		\begin{equation*}
			\|\boldsymbol{\omega}-\boldsymbol{\varphi}\|_2^2 \le \frac12\|\boldsymbol{\psi}-\boldsymbol{\varphi}\|_2^2.
		\end{equation*}
	\end{lemma}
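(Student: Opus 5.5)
Let $s=\langle\boldsymbol{\varphi},\boldsymbol{\psi}\rangle\in(-1,1]$. The plan is to reduce both sides of the inequality to explicit functions of $s$ and compare them directly. Since all three vectors are unit vectors, every squared distance is a function of an inner product. First, $\|\boldsymbol{\psi}-\boldsymbol{\varphi}\|_2^2 = 2-2s$. Second, $\|\boldsymbol{\varphi}+\boldsymbol{\psi}\|_2^2 = 2+2s>0$, which is positive by hypothesis. This gives
\begin{equation*}
\langle \boldsymbol{\omega},\boldsymbol{\varphi}\rangle=\frac{1+s}{\sqrt{2+2s}}=\sqrt{\frac{1+s}{2}}.
\end{equation*}
Consequently $\|\boldsymbol{\omega}-\boldsymbol{\varphi}\|_2^2 = 2-2\sqrt{(1+s)/2}$.

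It then remains to show
\begin{equation*}
2-2\sqrt{\tfrac{1+s}{2}} \le 1-s, \qquad s\in(-1,1].
\end{equation*}
Set $t=\sqrt{(1+s)/2}\in(0,1]$, so that $s=2t^2-1$. The inequality becomes $2-2t\le 2-2t^2$, i.e.\ $t^2\le t$. This holds because $0<t\le 1$, and equality occurs exactly when $t=1$, i.e.\ $\boldsymbol{\psi}=\boldsymbol{\varphi}$. Geometrically, $\boldsymbol{\omega}$ bisects the angle $\alpha$ between $\boldsymbol{\varphi}$ and $\boldsymbol{\psi}$. The claim then reads $4\sin^2(\alpha/4)\le 2\sin^2(\alpha/2)$, and the substitution above is just the half-angle identity written in terms of $t=\cos(\alpha/2)$.

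There is no real obstacle. The only points needing care are the following:
\begin{itemize}
\item Confirm $\langle\boldsymbol{\omega},\boldsymbol{\varphi}\rangle>0$, which follows from $s>-1$, so that taking the positive square root is legitimate.
\item Check that the hypothesis $\boldsymbol{\varphi}+\boldsymbol{\psi}\ne 0$ is equivalent to $s>-1$. For unit vectors, $\|\boldsymbol{\varphi}+\boldsymbol{\psi}\|_2^2=2+2s$ vanishes exactly when $s=-1$.
\end{itemize}

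As a sanity check, the constant $\tfrac12$ is not attained for $\boldsymbol{\psi}\ne\boldsymbol{\varphi}$. It is, however, the limit of the ratio $(2-2t)/(2-2t^2)=1/(1+t)$ as $t\to1$, so it is the sharp constant.
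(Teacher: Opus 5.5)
Your proof is correct and is essentially the paper's: the paper also writes both sides in terms of $\gamma=\langle\boldsymbol{\varphi},\boldsymbol{\psi}\rangle$ and reduces the claim to $\sqrt{q}\ge q$ with $q=(1+\gamma)/2$, which is your $t^2\le t$ with $t=\sqrt{q}$. One slip in your optional sharpness remark (it does not affect the proof): $1/(1+t)$ is the ratio against $\tfrac12\|\boldsymbol{\psi}-\boldsymbol{\varphi}\|_2^2$, so the actual contraction ratio $\|\boldsymbol{\omega}-\boldsymbol{\varphi}\|_2^2/\|\boldsymbol{\psi}-\boldsymbol{\varphi}\|_2^2=1/(2(1+t))$ tends to $\tfrac14$ as $t\to1$; the constant $\tfrac12$ is sharp only in the opposite limit $t\to0^+$, i.e.\ as $\boldsymbol{\psi}\to-\boldsymbol{\varphi}$.
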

	Lemma~\ref{lem:contraction} demonstrates that the structural bias constrains the final representation to lie geometrically closer to the frozen reference than the prompt-only representation. This contraction property provides the theoretical basis for the incremental correction mechanism.
	\begin{lemma}\label{lem:logit_perturb}
		Let $\tau>0$ be the temperature and define logits $\ell^{\boldsymbol{\theta}}_c(\mathbf{x})=\tau\bigl\langle \mathbf{f}^{\mathrm{vis}}_{\mathbf{x}},\mathbf{f}^{\mathrm{txt}}_{c}\bigr\rangle$ and reference logits $\ell^{0}_c(\mathbf{x})=\tau\bigl\langle \mathbf{z}^{\mathrm{vis}}_{\mathbf{x}},\mathbf{w}_{c}\bigr\rangle$. Define logit perturbation vector $\Delta\boldsymbol{\ell}^{\boldsymbol{\theta}}_{\mathbf{x}}=\boldsymbol{\ell}^{\boldsymbol{\theta}}_{\mathbf{x}}-\boldsymbol{\ell}^{0}_{\mathbf{x}}\in\mathbb R^{C}$. Define the consistency regularizer $\mathcal L^{\mathrm{con}}(\boldsymbol{\theta})=\mathcal L^{\mathrm{img}}(\boldsymbol{\theta})+\mathcal L^{\mathrm{txt}}(\boldsymbol{\theta})$.
		Then for training samples $\{\mathbf{x}_i\}_{i=1}^N$,
		\begin{equation*}
			\frac1N\sum_{i=1}^N\|\Delta\boldsymbol{\ell}^{\boldsymbol{\theta}}_{\mathbf{x}_i}\|_2^2
			\le
			4\tau^2 C\,\mathcal L^{\mathrm{con}}(\boldsymbol{\theta}).
		\end{equation*}
	\end{lemma}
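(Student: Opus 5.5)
The proof decomposes each logit perturbation into a visual-side and a text-side contribution. We work with the training objective as stated, so the consistency losses are evaluated on the fused features: $\mathcal L^{\mathrm{img}}=\frac1N\sum_i(1-\langle \mathbf f^{\mathrm{vis}}_{\mathbf x_i},\mathbf z^{\mathrm{vis}}_{\mathbf x_i}\rangle)$ and $\mathcal L^{\mathrm{txt}}=\frac1C\sum_c(1-\langle \mathbf f^{\mathrm{txt}}_c,\mathbf w_c\rangle)$. For unit vectors the identity $\|\mathbf a-\mathbf b\|_2^2=2(1-\langle\mathbf a,\mathbf b\rangle)$ converts these cosine losses into squared Euclidean distances.

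For fixed $\mathbf x$ and $c$ we write
\begin{equation*}
\Delta\ell_c(\mathbf x)=\tau\bigl\langle \mathbf f^{\mathrm{vis}}_{\mathbf x}-\mathbf z^{\mathrm{vis}}_{\mathbf x},\mathbf f^{\mathrm{txt}}_c\bigr\rangle+\tau\bigl\langle \mathbf z^{\mathrm{vis}}_{\mathbf x},\mathbf f^{\mathrm{txt}}_c-\mathbf w_c\bigr\rangle .
\end{equation*}
All the vectors involved are unit vectors, so Cauchy--Schwarz gives $|\Delta\ell_c(\mathbf x)|\le\tau(a_{\mathbf x}+b_c)$, where $a_{\mathbf x}=\|\mathbf f^{\mathrm{vis}}_{\mathbf x}-\mathbf z^{\mathrm{vis}}_{\mathbf x}\|_2$ and $b_c=\|\mathbf f^{\mathrm{txt}}_c-\mathbf w_c\|_2$. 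The elementary inequality $(a+b)^2\le 2a^2+2b^2$ then yields $|\Delta\ell_c|^2\le 2\tau^2(a_{\mathbf x}^2+b_c^2)$.

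Next we sum over $c\in\mathcal C$ and average over the $N$ training samples:
\begin{equation*}
\frac1N\sum_i\|\Delta\boldsymbol\ell_{\mathbf x_i}\|_2^2\le 2\tau^2\Bigl(\frac{C}{N}\sum_i a_{\mathbf x_i}^2+\sum_c b_c^2\Bigr).
\end{equation*}
By the unit-vector identity, $a_{\mathbf x_i}^2=2(1-\langle\mathbf f^{\mathrm{vis}}_{\mathbf x_i},\mathbf z^{\mathrm{vis}}_{\mathbf x_i}\rangle)$, so $\frac1N\sum_i a_{\mathbf x_i}^2=2\mathcal L^{\mathrm{img}}$. Likewise $\sum_c b_c^2=2C\,\mathcal L^{\mathrm{txt}}$. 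Substituting gives $2\tau^2(2C\mathcal L^{\mathrm{img}}+2C\mathcal L^{\mathrm{txt}})=4\tau^2C\,\mathcal L^{\mathrm{con}}$, which is exactly the claimed bound.

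The only delicate point is bookkeeping. The text term is summed over $C$ classes but normalized by $1/C$ in $\mathcal L^{\mathrm{txt}}$, while the image term acquires a factor $C$ from summing over classes. Both therefore come out with the same factor $C$, and this is what makes the single constant $4\tau^2 C$ suffice.

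One alternative reading has the constraints applied to the unfused $\mathbf h$ features rather than the fused $\mathbf f$ features, as in the conceptual formulation before the training objective. Under that reading we would first invoke Lemma~\ref{lem:contraction} with $\boldsymbol\varphi=\mathbf z^{\mathrm{vis}}_{\mathbf x}$ and $\boldsymbol\psi=\mathbf h^{\mathrm{vis}}_{\mathbf x}$, obtaining $a_{\mathbf x}^2\le\frac12\|\mathbf h^{\mathrm{vis}}_{\mathbf x}-\mathbf z^{\mathrm{vis}}_{\mathbf x}\|^2$, which only improves the constant. On the text side, however, $\mathbf f^{\mathrm{txt}}_c$ fuses $\mathbf z^{\mathrm{txt}}_c$ rather than $\mathbf w_c$, so the lemma does not directly give $b_c^2\le\frac12\|\mathbf h^{\mathrm{txt}}_c-\mathbf w_c\|^2$. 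That case would need an extra term bounding $\|\mathbf z^{\mathrm{txt}}_c-\mathbf w_c\|$, so we adopt the fused-feature convention that the paper explicitly states for its training objective.
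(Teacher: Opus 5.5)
Your proof is correct and follows the paper's argument. The paper uses the same split $\Delta\boldsymbol{\ell}^{\boldsymbol{\theta}}_{\mathbf{x}}=\tau(\mathbf{F}^{\mathrm{txt}})^\top(\mathbf{f}^{\mathrm{vis}}_{\mathbf{x}}-\mathbf{z}^{\mathrm{vis}}_{\mathbf{x}})+\tau(\mathbf{F}^{\mathrm{txt}}-\mathbf{W})^\top\mathbf{z}^{\mathrm{vis}}_{\mathbf{x}}$, written in matrix form with $\|(\mathbf{F}^{\mathrm{txt}})^\top\|_{\mathrm{op}}\le\sqrt{C}$ and $\|\mathbf{F}^{\mathrm{txt}}-\mathbf{W}\|_F^2=2C\,\mathcal L^{\mathrm{txt}}$ instead of your coordinatewise Cauchy--Schwarz, and it reaches the same constant under the same fused-feature definitions of $\mathcal L^{\mathrm{img}}$ and $\mathcal L^{\mathrm{txt}}$ that you adopted.
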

	Lemma~\ref{lem:logit_perturb} shows that cosine consistency explicitly bounds the logit perturbation magnitude relative to the frozen reference model, ensuring the feature adaptation is confined within the geometric neighborhood.
	\begin{corollary}\label{cor:gen_bound_main}
		Let $\phi_y(\boldsymbol{\ell})=\log\sum_{c\in \mathcal{C}}\exp(\ell_c)-\ell_y$ be the cross-entropy loss for label $y$ and logit vector $\boldsymbol{\ell}$.
		Let the population risk be $R(\boldsymbol{\theta})=\mathbb E^{(\mathbf{x},y)\sim \mathcal{D}}[\phi_y(\boldsymbol{\ell}^{\boldsymbol{\theta}}_{\mathbf{x}})]$ and let $\widehat R_{N}(\boldsymbol{\theta})=\frac1N\sum_{i=1}^N\phi_{y_i}(\boldsymbol{\ell}^{\boldsymbol{\theta}}_{\mathbf{x}_i})$ be the empirical risk defined in Definition~\ref{def:emp_risk}.
		Let $\varepsilon>0$ and define the localized parameter set $\Omega^{\varepsilon}=\{\boldsymbol{\theta}:\mathcal L^{\mathrm{con}}(\boldsymbol{\theta})\le \varepsilon\}$.
		Let $B=\log C+2\tau$ and let $\rho\in(0,1)$ be the confidence level.
		With probability at least $1-\rho$, for all $\boldsymbol{\theta}\in\Omega^{\varepsilon}$,
		\begin{equation*}
			R(\boldsymbol{\theta})
			\le
			\widehat R_{N}(\boldsymbol{\theta})
			+2\,\widehat{\mathfrak R}_{N}\bigl(\mathcal{H}^{\varepsilon}\bigr)
			+4B\sqrt{\frac{\log(4/\rho)}{2N}}.
		\end{equation*}
		Let $\mathcal{H}^{\varepsilon}=\{(\mathbf{x},y)\mapsto \phi_y(\boldsymbol{\ell}^{\boldsymbol{\theta}}_{\mathbf{x}})-\phi_y(\boldsymbol{\ell}^{0}_{\mathbf{x}}):\boldsymbol{\theta}\in\Omega^{\varepsilon}\}$ be the induced difference class and let $\widehat{\mathfrak R}_{N}(\mathcal{H}^{\varepsilon})$ be its empirical Rademacher complexity.
		Assume there exist constants $\Lambda^{\mathcal{H}}>0$ and $R^{\boldsymbol{\theta}}>0$ and a prompt dimension $D^{\boldsymbol{\theta}}$ such that
		\begin{equation*}
			\widehat{\mathfrak R}_{N}\bigl(\mathcal{H}^{\varepsilon}\bigr)
			\le
			\frac{24\tau}{\sqrt N}\sqrt{2C\,\varepsilon}
			\sqrt{D^{\boldsymbol{\theta}}\log\Bigl(\frac{3e\Lambda^{\mathcal{H}} R^{\boldsymbol{\theta}}}{2\tau\sqrt{2C\,\varepsilon}}\Bigr)}.
		\end{equation*}
	\end{corollary}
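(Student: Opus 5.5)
The plan is to reduce the statement to a uniform deviation bound for the \emph{difference} class $\mathcal{H}^{\varepsilon}$, and then apply standard Rademacher symmetrization and concentration. For every $\boldsymbol{\theta}\in\Omega^{\varepsilon}$ I will write
\[
R(\boldsymbol{\theta})-\widehat R_N(\boldsymbol{\theta})
=\bigl(\mathbb E[h_{\boldsymbol{\theta}}]-\widehat{\mathbb E}_N[h_{\boldsymbol{\theta}}]\bigr)
+\bigl(\mathbb E[\phi_y(\boldsymbol{\ell}^0_{\mathbf{x}})]-\widehat{\mathbb E}_N[\phi_y(\boldsymbol{\ell}^0_{\mathbf{x}})]\bigr),
\]
where $h_{\boldsymbol{\theta}}(\mathbf{x},y)=\phi_y(\boldsymbol{\ell}^{\boldsymbol{\theta}}_{\mathbf{x}})-\phi_y(\boldsymbol{\ell}^{0}_{\mathbf{x}})\in\mathcal{H}^{\varepsilon}$. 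The second bracket involves one fixed, data-independent function. Hoeffding's inequality therefore controls it at confidence $\rho/2$, and no complexity term is needed there.

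The first step is range control. All features are unit vectors, so every logit lies in $[-\tau,\tau]$. Since $\phi_y(\boldsymbol{\ell})=\log\sum_c e^{\ell_c-\ell_y}$, both $\phi_y(\boldsymbol{\ell}^{\boldsymbol{\theta}}_{\mathbf{x}})$ and $\phi_y(\boldsymbol{\ell}^0_{\mathbf{x}})$ lie in $[0,\log C+2\tau]=[0,B]$. Consequently every $h\in\mathcal{H}^{\varepsilon}$ takes values in $[-B,B]$.

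The second step handles the first bracket. I will apply the standard empirical-Rademacher uniform bound to $\mathcal{H}^{\varepsilon}$. Symmetrization gives $2\mathfrak R_N$, and McDiarmid's inequality, used once for the supremum and once to pass from $\mathfrak R_N$ to $\widehat{\mathfrak R}_N$, each at confidence $\rho/4$, yields
\[
2\widehat{\mathfrak R}_N(\mathcal{H}^{\varepsilon}) + c\,B\sqrt{\log(4/\rho)/(2N)}
\]
with the range width $2B$ entering the constant.

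The third step combines the two pieces with a union bound. The Hoeffding term contributes $B\sqrt{\log(4/\rho)/(2N)}$ (using $\log(2/\rho)\le\log(4/\rho)$), and the Rademacher part contributes roughly $3B$ of the same form. Adding these gives the stated $4B\sqrt{\log(4/\rho)/(2N)}$, with total failure probability at most $\rho$.

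The assumed bound on $\widehat{\mathfrak R}_N(\mathcal{H}^{\varepsilon})$ is a hypothesis, not something to prove. It is included only to make the rate explicit and can be substituted directly. For intuition it is motivated by Lemma~\ref{lem:logit_perturb}: $\phi_y$ is $\sqrt2$-Lipschitz in $\ell_2$, so $h_{\boldsymbol{\theta}}$ has empirical $L_2$ radius $O(\tau\sqrt{C\varepsilon})$, and a Dudley entropy integral over a $D^{\boldsymbol{\theta}}$-dimensional parameter ball of radius $R^{\boldsymbol{\theta}}$ with Lipschitz constant $\Lambda^{\mathcal H}$ then produces the $24\tau\sqrt{2C\varepsilon}\sqrt{D\log(\cdot)}/\sqrt N$ form.

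I expect the main obstacle to be bookkeeping of the constants: getting exactly $4B$ and $\log(4/\rho)$ requires allocating the confidence budget as $\rho/4,\rho/4,\rho/2$ and tracking the range width $2B$ of $\mathcal{H}^{\varepsilon}$ in McDiarmid's inequality. If the constants come out slightly larger, I would first check whether the paper's convention for the difference class uses a one-sided range.
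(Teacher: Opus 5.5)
Your route is the one the paper implicitly takes. The paper states the uniform bound (Theorem~\ref{thm:app_gen}) without proof, appealing to standard Rademacher arguments. It obtains the complexity estimate by the Dudley and covering computation you sketch: empirical radius $2\tau\sqrt{2C\varepsilon}$ from Lemma~\ref{lem:logit_perturb} and the $\sqrt2$-Lipschitz property of $\phi_y$.

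\textbf{The gap is in your constant bookkeeping, which does not deliver $4B$.} Every $h\in\mathcal H^{\varepsilon}$ takes values in $[-B,B]$, so its range width is $2B$. The three-step bound therefore costs $3\cdot 2B=6B$, not ``roughly $3B$''. The three steps are McDiarmid for the supremum, symmetrization, and McDiarmid again to pass from $\mathfrak R_N$ to $\widehat{\mathfrak R}_N$. With your split $\rho/4,\rho/4,\rho/2$, the Rademacher part gives $2B+4B$ and Hoeffding adds $B$, for a total of $7B\sqrt{\log(4/\rho)/(2N)}$. You could use that $\widehat{\mathfrak R}_S(\mathcal H^{\varepsilon})$ has bounded differences $B/N$, but you still reach only $5B$. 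The loss comes from concentrating the difference class and the reference loss separately. A ``one-sided range'' convention does not rescue this, because $h$ genuinely takes both signs.

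\textbf{The fix is not to split off $\phi_y(\boldsymbol{\ell}^{0}_{\mathbf{x}})$ at all.} Apply the standard bound to the loss class $\mathcal F^{\varepsilon}=\{(\mathbf{x},y)\mapsto\phi_y(\boldsymbol{\ell}^{\boldsymbol{\theta}}_{\mathbf{x}}):\boldsymbol{\theta}\in\Omega^{\varepsilon}\}$, whose range is $[0,B]$. With probability at least $1-\rho$,
\[
R(\boldsymbol{\theta})\le\widehat R_N(\boldsymbol{\theta})+2\widehat{\mathfrak R}_N(\mathcal F^{\varepsilon})+3B\sqrt{\log(2/\rho)/(2N)}.
\]
Now $\mathcal F^{\varepsilon}=\mathcal H^{\varepsilon}+g_0$ with the fixed function $g_0(\mathbf{x},y)=\phi_y(\boldsymbol{\ell}^{0}_{\mathbf{x}})$. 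Since $\mathbb E_{\sigma}\sum_i\sigma_i g_0(\mathbf{x}_i,y_i)=0$, the two empirical Rademacher complexities coincide, using the usual definition without absolute values. Finally, $3B\sqrt{\log(2/\rho)/(2N)}\le 4B\sqrt{\log(4/\rho)/(2N)}$. The difference class should enter only through its Rademacher complexity, where localization helps, and never through a concentration step.

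\textbf{A caveat you share with the paper.} $\Omega^{\varepsilon}$ is defined through $\mathcal L^{\mathrm{con}}$, whose image term averages over the training images, so $\mathcal H^{\varepsilon}$ itself depends on the sample. Symmetrization and the bounded-differences step both presuppose a class fixed before sampling. A fully rigorous version needs one of two things: a data-independent localization, or an extra argument relating the empirical constraint to a fixed one. Neither the paper nor your proposal supplies either.
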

	Corollary~\ref{cor:gen_bound_main} shows that ManiPT can achieve a smaller or equal population risk bound when empirical risks are comparable and the consistency loss is reduced, which alleviates overfitting under limited supervision.
	\begin{table*}[t!]
		\caption{Base-to-novel generalization results on 11 datasets. We report accuracy on base and novel classes and their harmonic mean (HM). ManiPT achieves the best average harmonic mean across datasets, indicating more balanced generalization between base and novel classes.}
		\label{tab:base2novel_style_match}
		\renewcommand{\arraystretch}{0.95}
		\centering
		\scriptsize
		\resizebox{\linewidth}{!}{
			\begin{tabular}{l ccc|ccc|ccc|ccc}
				\toprule
				\multirow{2}{*}{Method} & 
				\multicolumn{3}{c}{(a) Average} & 
				\multicolumn{3}{c}{(b) ImageNet} & 
				\multicolumn{3}{c}{(c) Caltech101} & 
				\multicolumn{3}{c}{(d) OxfordPets} \\
				\cmidrule(lr){2-4} \cmidrule(lr){5-7} \cmidrule(lr){8-10} \cmidrule(lr){11-13}
				& Base & Novel & HM & Base & Novel & HM & Base & Novel & HM & Base & Novel & HM \\
				\midrule
				CLIP       & 69.34 & 74.22 & 71.70 & 72.43 & 68.14 & 70.22 & 96.84 & 94.00 & 95.40 & 91.17 & 97.26 & 94.12 \\
				CoOp       & 82.69 & 63.22 & 71.66 & 76.47 & 67.88 & 71.92 & 98.00 & 89.81 & 93.73 & 93.67 & 95.29 & 94.47 \\
				CoCoOp     & 80.47 & 71.69 & 75.83 & 75.98 & 70.43 & 73.10 & 97.96 & 93.81 & 95.84 & 95.20 & 97.69 & 96.43 \\
				MaPLe      & 82.28 & 75.14 & 78.55 & 76.66 & 70.54 & 73.47 & 97.74 & 94.36 & 96.02 & 95.43 & 97.76 & 96.58 \\
				LLaMP      & 85.16 & \underline{77.71} & \underline{81.27} & 77.99 & \underline{71.27} & 74.48 & 98.45 & \underline{95.85} & 97.13 & \underline{96.31} & 97.74 & 97.02 \\
				PromptSRC  & 84.26 & 76.10 & 79.97 & 77.60 & 70.73 & 74.01 & 98.10 & 94.03 & 96.02 & 95.33 & 97.30 & 96.30 \\
				CoPrompt   & 84.00 & 77.23 & 80.48 & 77.67 & \underline{71.27} & 74.33 & 98.27 & 94.90 & 96.55 & 95.67 & 98.10 & 96.87 \\
				TAC        & \underline{85.24} & 77.60 & 81.24 & \underline{78.57} & 71.03 & \underline{74.61} & 98.57 & 95.27 & 96.89 & 95.93 & \textbf{98.17} & \underline{97.04} \\
				TAP        & 84.75 & 77.63 & 81.04 & 77.97 & 70.40 & 73.99 & \underline{98.90} & 95.50 & \underline{97.17} & 95.80 & 97.73 & 96.76 \\
				\rowcolor{gray!20}\textbf{ManiPT} & \textbf{85.82} & \textbf{78.67} & \textbf{82.09} & \textbf{78.67} & \textbf{72.01} & \textbf{75.20} & \textbf{98.95} & \textbf{95.94} & \textbf{97.42} & \textbf{96.52} & \underline{98.15} & \textbf{97.33} \\
				\midrule
				\midrule
				\multirow{2}{*}{Method} & 
				\multicolumn{3}{c}{(e) StanfordCars} & 
				\multicolumn{3}{c}{(f) Flowers102} & 
				\multicolumn{3}{c}{(g) Food101} & 
				\multicolumn{3}{c}{(h) FGVCAircraft} \\
				\cmidrule(lr){2-4} \cmidrule(lr){5-7} \cmidrule(lr){8-10} \cmidrule(lr){11-13}
				& Base & Novel & HM & Base & Novel & HM & Base & Novel & HM & Base & Novel & HM \\
				\midrule
				CLIP       & 63.37 & 74.89 & 68.65 & 72.08 & \textbf{77.80} & 74.83 & 90.10 & 91.22 & 90.66 & 27.19 & 36.29 & 31.09 \\
				CoOp       & 78.12 & 60.40 & 68.13 & 97.60 & 59.67 & 74.06 & 88.33 & 82.26 & 85.19 & 40.44 & 22.30 & 28.75 \\
				CoCoOp     & 70.49 & 73.59 & 72.01 & 94.87 & 71.75 & 81.71 & 90.70 & 91.29 & 90.99 & 33.41 & 23.71 & 27.74 \\
				MaPLe      & 72.94 & 74.00 & 73.47 & 95.92 & 72.46 & 82.56 & 90.71 & \underline{92.05} & 91.38 & 37.44 & 35.61 & 36.50 \\
				LLaMP      & 81.56 & 74.54 & \underline{77.89} & 97.82 & \underline{77.40} & \textbf{86.42} & \underline{91.05} & 91.93 & \underline{91.49} & \underline{47.30} & 37.61 & \underline{41.90} \\
				PromptSRC  & 78.27 & \underline{74.97} & 76.58 & \underline{98.07} & 76.50 & 85.95 & 90.67 & 91.53 & 91.10 & 42.73 & 37.87 & 40.15 \\
				CoPrompt   & 76.97 & 74.40 & 75.66 & 97.27 & 76.60 & 85.71 & 90.73 & \underline{92.07} & 91.40 & 40.20 & \underline{39.33} & 39.76 \\
				TAC        & \underline{81.63} & 74.17 & 77.72 & 97.97 & 76.87 & 86.15 & 90.87 & 91.87 & 91.37 & 44.60 & 37.70 & 40.86 \\
				TAP        & 80.70 & 74.27 & 77.35 & 97.90 & 75.57 & 85.30 & 90.97 & 91.83 & 91.40 & 44.40 & 36.50 & 40.06 \\
				\rowcolor{gray!20}\textbf{ManiPT} & \textbf{82.53} & \textbf{75.61} & \textbf{78.92} & \textbf{98.12} & 76.84 & \underline{86.19} & \textbf{91.17} & \textbf{92.16} & \textbf{91.66} & \textbf{48.28} & \textbf{39.87} & \textbf{43.68} \\
				\midrule
				\midrule
				\multirow{2}{*}{Method} & 
				\multicolumn{3}{c}{(i) SUN397} & 
				\multicolumn{3}{c}{(j) DTD} & 
				\multicolumn{3}{c}{(k) EuroSAT} & 
				\multicolumn{3}{c}{(l) UCF101} \\
				\cmidrule(lr){2-4} \cmidrule(lr){5-7} \cmidrule(lr){8-10} \cmidrule(lr){11-13}
				& Base & Novel & HM & Base & Novel & HM & Base & Novel & HM & Base & Novel & HM \\
				\midrule
				CLIP       & 69.36 & 75.35 & 72.23 & 53.24 & 59.90 & 56.37 & 56.48 & 64.05 & 60.03 & 70.53 & 77.50 & 73.85 \\
				CoOp       & 80.60 & 65.89 & 72.51 & 79.44 & 41.18 & 54.24 & 92.19 & 54.74 & 68.69 & 84.69 & 56.05 & 67.46 \\
				CoCoOp     & 79.74 & 76.86 & 78.27 & 77.01 & 56.00 & 64.85 & 87.49 & 60.04 & 71.21 & 82.33 & 73.45 & 77.64 \\
				MaPLe      & 80.82 & 78.70 & 79.75 & 80.36 & 59.18 & 68.16 & 94.07 & 73.23 & 82.35 & 83.00 & 78.66 & 80.77 \\
				LLaMP      & \underline{83.41} & 79.90 & 81.62 & 83.49 & 64.49 & 72.77 & 91.93 & \underline{83.66} & 87.60 & 87.13 & 80.66 & 83.77 \\
				PromptSRC  & 82.67 & 78.47 & 80.52 & 83.37 & 62.97 & 71.75 & 92.90 & 73.90 & 82.32 & 87.10 & 78.80 & 82.74 \\
				CoPrompt   & 82.63 & \underline{80.03} & 81.31 & 83.13 & 64.73 & 72.79 & \textbf{94.60} & 78.57 & 85.84 & 86.90 & 79.57 & 83.07 \\
				TAC        & \textbf{83.70} & \underline{80.03} & \textbf{81.82} & 83.37 & 64.27 & 72.58 & \underline{94.37} & 82.60 & \underline{88.10} & \underline{88.07} & 81.67 & 84.75 \\
				TAP        & 82.87 & 79.53 & 81.17 & \underline{84.20} & \underline{68.00} & \underline{75.24} & 90.70 & 82.17 & 86.22 & 87.90 & \textbf{82.43} & \underline{85.08} \\
				\rowcolor{gray!20}\textbf{ManiPT} & 83.18 & \textbf{80.18} & \underline{81.65} & \textbf{84.38} & \textbf{68.56} & \textbf{75.65} & 94.06 & \textbf{83.72} & \textbf{88.59} & \textbf{88.14} & \underline{82.31} & \textbf{85.13} \\
				\bottomrule
			\end{tabular}
		}
		\vspace{-15pt}
	\end{table*}
	\begin{table}[t!]
		\centering
		\caption{Cross-dataset transfer results from ImageNet to ten downstream datasets. Models are trained on ImageNet and evaluated in a zero-shot manner on each target dataset. ManiPT attains the highest average accuracy over all target datasets.}
		\resizebox{\linewidth}{!}{
			\begin{tabular}{cc|ccccccccccc}
				\toprule
				\multirow{2}[4]{*}{} & Source & \multicolumn{11}{c}{Target} \\
				\cmidrule{2-13}          & \rotatebox{90}{ImageNet} & \rotatebox{90}{Caltech101} & \rotatebox{90}{OxfordPets} & \rotatebox{90}{StanfordCars} & \rotatebox{90}{Flowers102} & \rotatebox{90}{Food101} & \rotatebox{90}{Aircraft} & \rotatebox{90}{SUN397} & \rotatebox{90}{DTD}   & \rotatebox{90}{EuroSAT} & \rotatebox{90}{UCF101} & \rotatebox{90}{Average} \\
				\midrule
				CoOp  & 71.51 & 93.70  & 89.14 & 64.51 & 68.71 & 85.30  & 18.47 & 64.15 & 41.92 & 46.39 & 66.55 & 63.88 \\
				CoCoOp & 71.02 & 94.43 & 90.14 & 65.32 & 71.88 & 86.06 & 22.94 & 67.36 & 45.73 & 45.37 & 68.21 & 65.74 \\
				MaPLe & 70.72 & \textbf{95.53} & 90.49 & 65.57 & \underline{72.20}  & 86.20  & \underline{24.74} & 67.01 & 46.49 & 48.06 & 68.69 & 66.50 \\
				CoPrompt & 70.80  & 94.50  & \underline{90.73} & \underline{65.67} & \textbf{72.30}  & \underline{86.43} & 24.00    & 67.57 & 47.07 & \textbf{51.90}  & 69.73 & \underline{66.99} \\
				TAC & \underline{72.77} & 94.53 & 90.67 & 65.30 & \underline{72.20} & 85.83 & 23.53 & 67.63 & 47.57 & 48.07 & \underline{70.00} & 66.53 \\
				TAP & 72.30 & 94.30 & 90.70 & 65.60 & 70.93 & 86.10 & 24.57 & \underline{68.30} & \underline{50.20} & 46.00 & 68.90 & 66.56 \\
				\rowcolor{gray!20}
				ManiPT & \textbf{72.92} & \underline{95.13} & \textbf{90.87} & \textbf{66.27} & 72.14 & \textbf{86.66} & \textbf{27.00} & \textbf{68.50} & \textbf{51.83} & \underline{50.88} & \textbf{71.11} & \textbf{68.04} \\
				\bottomrule
			\end{tabular}
		}
		\label{tab:cross_dataset}
		\vspace{-20pt}
	\end{table}
	\begin{figure*}[t!]
		\centering
		\includegraphics[width=\linewidth]{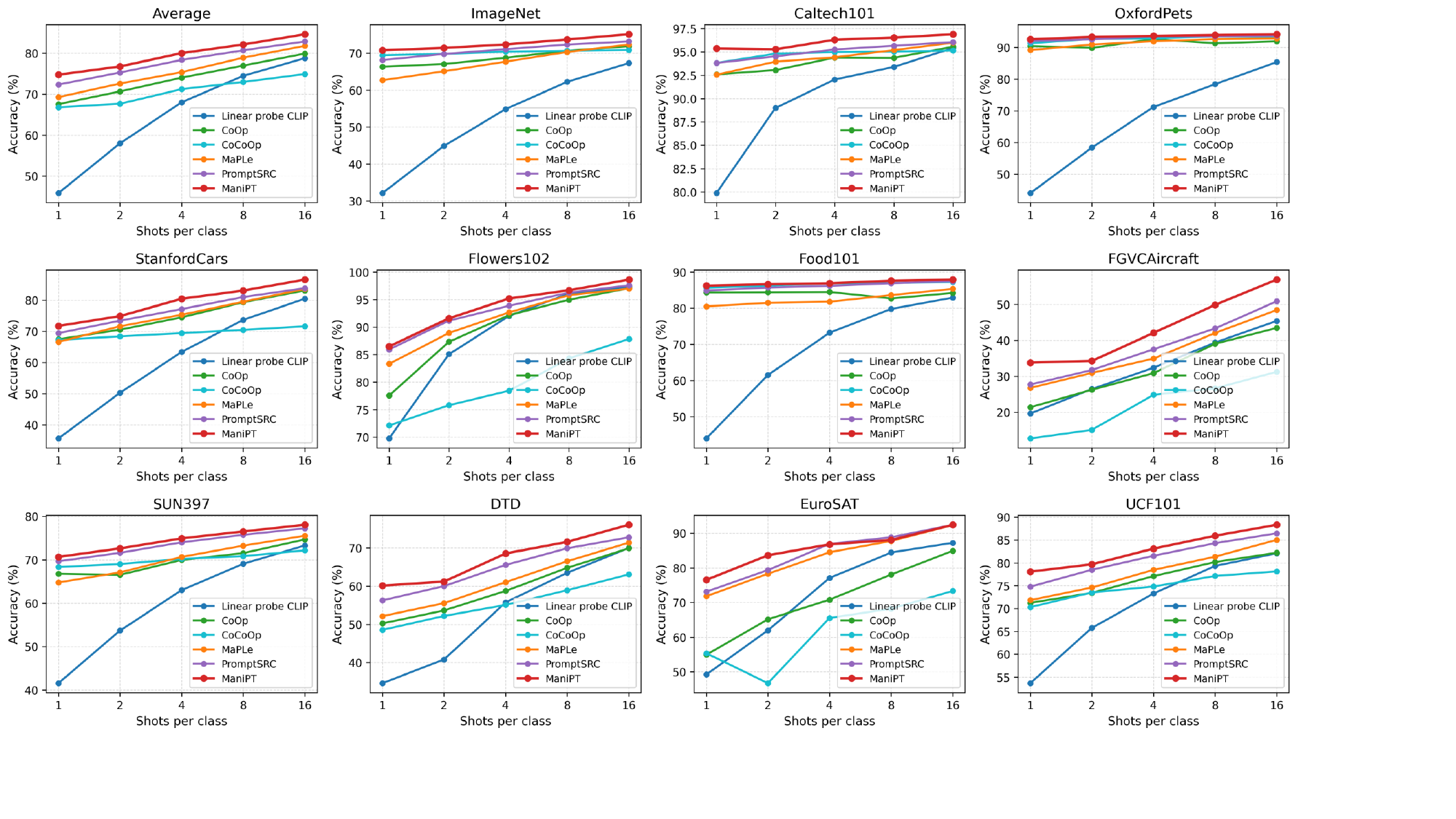}
		\vspace{-10pt}
		\caption{Few-shot classification performance averaged over 11 datasets under 1, 2, 4, 8, and 16 shot settings. ManiPT consistently outperforms baseline methods across all shot numbers, with especially clear gains in the 1-shot and 2-shot regimes.}  
		\label{fig:fewshot}
		\vspace{-15pt}
	\end{figure*}
	\vspace{-10pt}
	\section{Experiments}
	\textbf{Baselines and Benchmarks.} To evaluate ManiPT, we conduct experiments on 15 widely used datasets covering general object, fine-grained, scene, texture, and satellite image classification, as well as four ImageNet variants.
	
	We compare ManiPT with state-of-the-art methods including Zero-shot CLIP~\citep{radford2021learning}, basic prompt tuning approaches such as CoOp~\citep{zhou2022coop} and CoCoOp~\citep{zhou2022cocoop}, deep prompting methods such as MaPLe~\citep{khattak2023maple}, regularization strategies including PromptSRC~\citep{yao2023promptsrc} and CoPrompt~\citep{zhang2023coprompt}, and knowledge-enhanced approaches such as TAC~\citep{liu2023tac}, TAP~\citep{wang2023tap}, and LLaMP~\citep{zhang2023llamp} as baselines.
	
	Following standard protocols~\citep{zhou2022coop}, we evaluate performance under four settings. In Base-to-Novel Generalization, classes are split into base and novel sets. Cross-Dataset Transfer evaluates zero-shot transferability from ImageNet to the other ten datasets. Domain Generalization evaluates robustness on ImageNet~\citep{deng2009imagenet} variants. Few-Shot Classification evaluates performance on all datasets under 1, 2, 4, 8, and 16 shot settings. See Appendix~\ref{app:training_info} for dataset and implementation details.
	
	\textbf{Performance Comparison.}
	
	\textbf{Base-to-Novel Generalization.} We present the base-to-novel generalization results across 11 datasets in Table~\ref{tab:base2novel_style_match}. ManiPT consistently outperforms all comparison methods in terms of the average performance. These results indicate that while consistency constraints confine the learned representations within the pretrained geometric neighborhood, the structural bias further guides the model toward transferable directions. This dual mechanism effectively prevents overfitting to shortcuts while leveraging pretrained knowledge, leading to more balanced generalization.
	
	\textbf{Cross-Dataset Transfer.} To evaluate the transferability of the learned prompts to unseen distributions, we train the model on ImageNet and directly evaluate it on the other ten datasets. Table~\ref{tab:cross_dataset} reports the cross-dataset evaluation results. ManiPT achieves the highest average accuracy of 68.04\% and outperforms CoPrompt at 66.99\% and TAC at 66.53\%. These results suggest that by enforcing incremental corrections via structural bias, ManiPT ensures the learned representations align with robust, transferable directions rather than dataset-specific shortcuts, thereby facilitating transfer across substantial cross-domain distribution shifts.
	
	\textbf{Domain Generalization.} We further assess the robustness of the model against domain shifts using ImageNet variants. As reported in Table~\ref{tab:ood}, ManiPT achieves the best average performance and maintains strong accuracy across all target domains. This behavior validates our insight that anchoring the final representations to the frozen backbone allows the model to filter out domain-specific noise, ensuring that the feature adaptation remains along robust semantic directions.
	\vspace{-10pt}
	\begin{table}[t!]
		\renewcommand{\arraystretch}{0.9}
		\centering
		\caption{Domain generalization results on ImageNet variants. Models trained on ImageNet are tested on distribution-shifted versions such as ImageNet-V2, ImageNet-Sketch, ImageNet-A, and ImageNet-R. ManiPT achieves the best average performance and maintains strong accuracy across all domains.}
		\scriptsize
		\resizebox{\linewidth}{!}{
			\begin{tabular}{cc|ccccc}
				\toprule
				& Source & \multicolumn{5}{c}{Target} \\
				\cmidrule{2-7}          & ImageNet & -V2 & -S & -A & -R & Avg. \\
				\midrule
				CoOp  & 71.51 & 64.20  & 47.99  & 49.71 & 75.21 & 59.28 \\
				CoCoOp & 71.02 & 64.07 & 48.75  & 50.63 & 76.18 & 59.91 \\
				MaPLe & 70.72 & 64.07 & 49.15  & 50.90  & 76.98 & 60.27 \\
				CoPrompt & 70.80  & 64.25 & 49.43 & 50.50  & 77.51 & 60.42 \\
				TAC & \underline{72.77} & \underline{65.97} & \underline{50.30} & \underline{51.73} & \underline{78.50} & \underline{61.63} \\
				\rowcolor{gray!20}
				ManiPT & \textbf{72.92} & \textbf{66.12} & \textbf{50.63} & \textbf{51.97} & \textbf{78.79} & \textbf{61.88} \\
				\bottomrule
			\end{tabular}
		}
		\label{tab:ood}
		\vspace{-21pt}
	\end{table}
	
	\textbf{Few-Shot Classification.} Figure~\ref{fig:fewshot} illustrates average performance across 11 datasets under 1, 2, 4, 8, and 16 shot settings. ManiPT consistently outperforms the baseline methods across all shot settings. Even in the most data-scarce cases such as 1-shot and 2-shot, ManiPT maintains clear performance gains. These results support the view that regulating the feature adaptation via structural bias and consistency constraints alleviates overfitting under limited supervision. See Appendix~\ref{app:fewshot_results} for detailed few-shot results.
	\vspace{-15pt}
	\begin{table}[t!]
		\caption{Ablation study of ManiPT components. We report performance for the baseline, variants with a single component removed, and the full ManiPT model. Each component brings improvements over the baseline, and combining cosine consistency, structural bias, and LLM-based enrichment yields the best overall trade-off.}
		\label{tab:ablation}
		\centering
		\scalebox{0.7}{
			\begin{tabular}{c c c c c c}
				\toprule
				\textbf{Structural Bias} & \textbf{Cos. Consistency} & \textbf{LLM Enrich.} & \textbf{Base} & \textbf{New} & \textbf{HM} \\
				\midrule
				$\times$ & $\times$ & $\times$ & 82.91 & 64.15 & 72.82 \\
				\checkmark & $\times$ & $\times$ & 84.85 & 68.32 & 75.71 \\
				$\times$ & \checkmark & $\times$ & 82.85 & 76.50 & 79.54 \\
				\checkmark & \checkmark & $\times$ & 85.65 & 77.58 & 81.42 \\
				\rowcolor{gray!10} \checkmark & \checkmark & \checkmark & \textbf{85.82} & \textbf{78.67} & \textbf{82.09} \\
				\bottomrule
			\end{tabular}
		}
		\vspace{-10pt}
	\end{table}
	
	\textbf{Ablation Studies.}
	Table~\ref{tab:ablation} reports ablation results of ManiPT. Removing cosine consistency leads to a significant drop in novel class performance, validating that confining the learned representations to the geometric neighborhood is necessary to prevent catastrophic drift. Removing the structural bias degrades performance on both base and novel classes. This confirms that even within the neighborhood, the structural bias is crucial to enforce incremental corrections and guide the adaptation along transferable directions, rather than overfitting to local shortcuts. We also analyze the impact of LLM-based knowledge enrichment. When we substitute the semantic prototypes derived from LLMs with standard templates of the form ``a photo of a \{class\}'', the performance decreases, which indicates that constructing stable semantic references with prior knowledge is beneficial. Combining all components yields the best overall trade-off. See Appendix~\ref{app:ablation_results} for additional ablation results and Appendix~\ref{app:context_sensitivity} for hyperparameter sensitivity analysis.
	\begin{figure}[t!]
		\centering
		\includegraphics[width=0.6\linewidth]{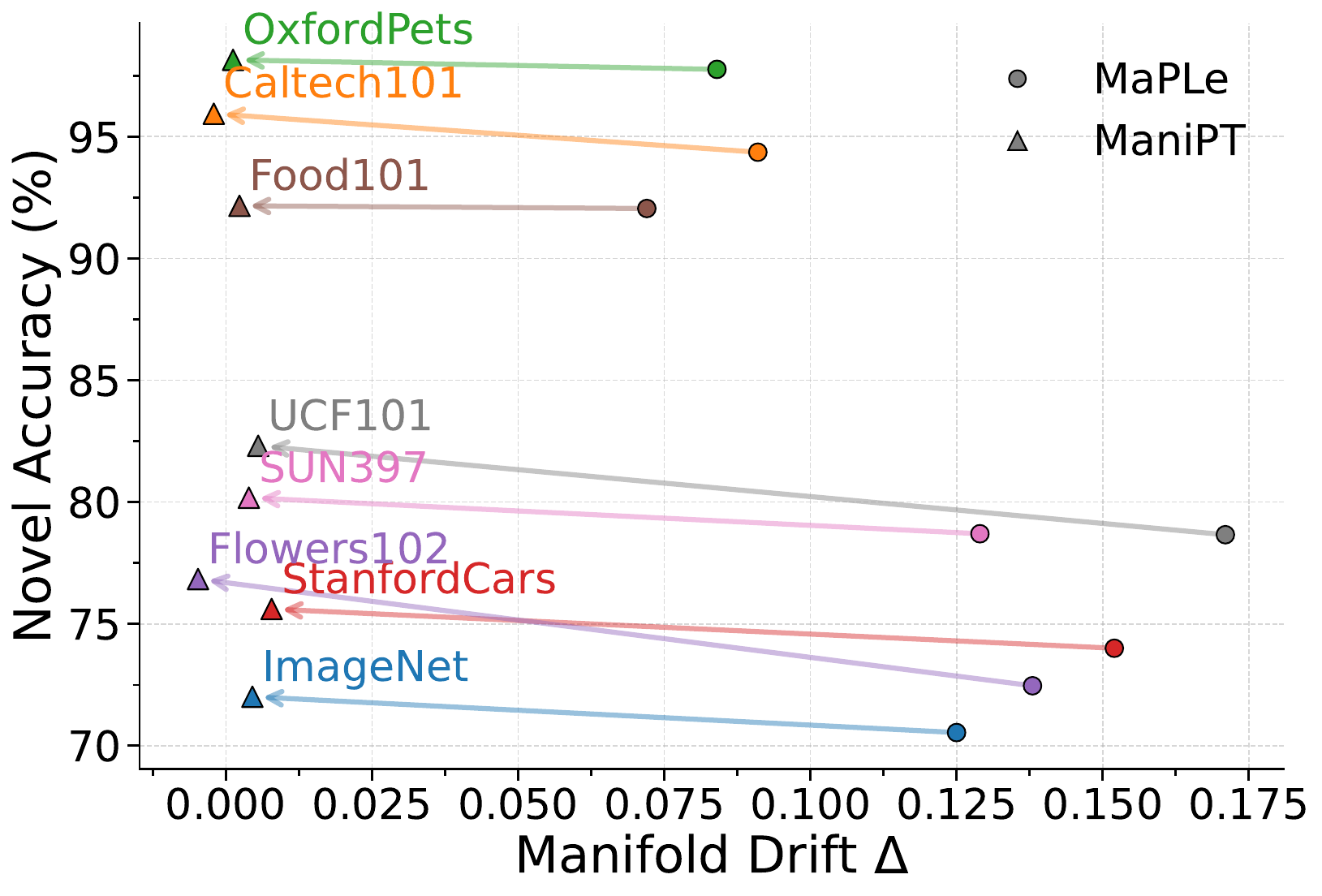}
		\caption{Quantitative analysis of manifold drift using PCA. We compare the distance between prompt-adapted features and pretrained features for different methods.}
		\label{fig:manifold_drift_quant}
		\vspace{-20pt}
	\end{figure}
	
	\textbf{Quantitative Analysis of Manifold Drift.}
	Since the true pretrained manifold is not directly accessible and the visual modality provides more samples, we follow Eq.~(\ref{eq:manifold_shift}) to approximate it. Specifically, utilizing the model trained on the base class training set, we perform the approximation using all images from the novel class test set in each dataset. The reference manifold is approximated by the PCA principal subspace spanned by the top 64 principal components of the pretrained feature point cloud derived from these novel images. We then measure the degree of deviation of the prompt-tuned features from this principal subspace. Figure~\ref{fig:manifold_drift_quant} compares ManiPT with a representative deep prompting baseline across datasets. ManiPT consistently achieves smaller estimated drift and better generalization to novel classes. These observations support the main claim that confining the feature adaptation within the pretrained manifold allows the model to find valid solutions while minimizing the risk of drifting into non-transferable subspaces. See Appendix~\ref{app:sensitivity_dpca} for a sensitivity analysis with respect to $d^{\mathrm{pca}}$.
	\vspace{-20pt}
	\section{Conclusion}
	In this paper, we propose ManiPT to address generalization degradation caused by manifold drift in prompt tuning. ManiPT confines representations within the pretrained neighborhood while enforcing incremental corrections. This dual mechanism biases adaptation toward transferable directions, mitigating overfitting to dataset-specific shortcuts in low-data regimes. Extensive experiments and theoretical analysis validate our framework. By elucidating geometric confinement and directional correction, ManiPT provides a new perspective on preventing low-data overfitting. We discuss the limitations of our framework in ~\ref{limitations}.
	\section{Impact Statements}
	This paper presents work whose goal is to advance the field of machine learning. There are many potential societal consequences of our work, none of which we feel must be specifically highlighted here.
	\bibliography{example_paper}
	\bibliographystyle{icml2026}
	
	\newpage
	\appendix
	\onecolumn
	\section{Appendix 1: More Training Information}
	\label{app:training_info}
	\subsection{Dataset Information}
	\begin{table}[h]
		\caption{Statistics of all 15 datasets used in our experiments, including standard classification benchmarks and ImageNet variants for domain generalization.}
		\label{tab:all_datasets}
		\centering
		\scalebox{0.85}{
			\begin{tabular}{l l c c c}
				\toprule
				\textbf{Dataset} & \textbf{Domain / Task} & \textbf{Classes} & \textbf{Train Size} & \textbf{Test Size} \\
				\midrule
				\multicolumn{5}{l}{\textit{\textbf{Standard Benchmarks}}} \\
				ImageNet & General Objects & 1,000 & 1,281,167 & 50,000 \\
				Caltech101 & General Objects & 100 & 4,128 & 2,465 \\
				OxfordPets & Fine-grained (Pets) & 37 & 2,944 & 3,669 \\
				StanfordCars & Fine-grained (Cars) & 196 & 6,509 & 8,041 \\
				Flowers102 & Fine-grained (Flowers) & 102 & 4,093 & 2,463 \\
				Food101 & Fine-grained (Food) & 101 & 50,500 & 30,300 \\
				FGVCAircraft & Fine-grained (Aircraft) & 100 & 3,334 & 3,333 \\
				SUN397 & Scene Recognition & 397 & 15,880 & 19,850 \\
				DTD & Texture Classification & 47 & 2,820 & 1,692 \\
				EuroSAT & Satellite Imagery & 10 & 13,500 & 8,100 \\
				UCF101 & Action Recognition & 101 & 7,639 & 3,783 \\
				\midrule
				\multicolumn{5}{l}{\textit{\textbf{ImageNet Variants (Domain Generalization)}}} \\
				ImageNet-V2 & Natural Shift & 1,000 & - & 10,000 \\
				ImageNet-Sketch & Style Shift (Sketches) & 1,000 & - & 50,889 \\
				ImageNet-A & Natural Adversarial & 200 & - & 7,500 \\
				ImageNet-R & Style Shift (Renditions) & 200 & - & 30,000 \\
				\bottomrule
			\end{tabular}
		}
	\end{table}
	We follow previous work~\citep{zhou2022coop} to set up the benchmark and adopt the same method to split training and test sets. All datasets are publicly available. To evaluate the effectiveness of ManiPT, we employ a total of 15 datasets that cover a wide range of visual tasks and domains. For standard evaluations including base-to-novel generalization, cross-dataset transfer, and few-shot classification, we select 11 datasets that span varying semantic granularities. These include general object recognition datasets ImageNet~\citep{deng2009imagenet} and Caltech101~\citep{fei2004caltech101}, alongside five fine-grained classification datasets, specifically OxfordPets~\citep{parkhi2012oxfordpets}, StanfordCars~\citep{krause2013stanfordcars}, Flowers102~\citep{nilsback2008flowers102}, Food101~\citep{bossard2014food101}, and FGVCAircraft~\citep{maji2013fgvcaircraft} as fine-grained benchmarks. Additionally, we incorporate datasets for distinct modalities, namely SUN397~\citep{xiao2010sun397} for scene recognition, UCF101~\citep{soomro2012ucf101} for action recognition, DTD~\citep{cimpoi2014dtd} for texture classification, and EuroSAT~\citep{helber2019eurosat} for satellite remote sensing. Furthermore, to assess robustness in the domain generalization setting, we utilize four variants of ImageNet as target domains representing distinct distribution shifts. These consist of ImageNet-V2~\citep{recht2019imagenetv2} which contains natural distribution shifts, ImageNet-Sketch~\citep{wang2019imagenetsketch} which tests structural invariance with hand-drawn sketches, ImageNet-A~\citep{hendrycks2021imageneta} which comprises natural adversarial examples, and ImageNet-R~\citep{hendrycks2021imagenetr} which encompasses artistic renditions. Table~\ref{tab:all_datasets} summarizes the detailed statistics for all employed datasets.
	\subsection{Implementation Details}
	In all experiments, we utilize the pretrained CLIP ViT-B/16 as the backbone. We adopt a deep prompting strategy~\citep{jia2022vpt}, inserting learnable prompt vectors into every layer of both the visual and text encoders. Specifically, we introduce 16 prompt vectors per layer for each modality. All prompt parameters are initialized using a normal distribution with a standard deviation of 0.02. We employ the Adam optimizer~\citep{kingma2015adam} with an initial learning rate of $2.5\times 10^{-3}$ and a weight decay of $5\times 10^{-4}$. The exponential decay rates for the first and second moment estimates are set to $0.9$ and $0.999$, respectively, with the numerical stability constant set to $1\times 10^{-8}$. We apply a constant warm-up for the first epoch with a learning rate of $1\times 10^{-5}$, followed by a cosine annealing learning rate schedule~\citep{loshchilov2017sgdr} during optimization. To ensure a fair comparison, we utilize the class descriptions generated by CoPrompt~\citep{zhang2023coprompt} for knowledge enrichment. The balance coefficient $\lambda$ in the loss function is fixed at 12 for all datasets and settings. In all experiments, we report the average results over three independent random seeds to mitigate the impact of randomness. For Base-to-Novel generalization and Few-Shot classification tasks, we train the model on the ImageNet dataset for 40 epochs with a batch size of 128. For the other 10 datasets, we train for 50 epochs with a batch size of 32. For Cross-Dataset Transfer and Domain Generalization tasks, we train the model on the ImageNet dataset for 5 epochs. All experiments are conducted using the PyTorch framework~\citep{paszke2019pytorch} on an NVIDIA RTX 5090 GPU.
	\subsection{Procedure of ManiPT}
	\label{alg:manipt1}
	We provide the pseudocode of ManiPT in Algorithm~\ref{alg:manipt}, which demonstrates the comprehensive and detailed optimization procedures.
	\begin{algorithm}[h]
		\caption{Pipeline of ManiPT}
		\label{alg:manipt}
		\begin{algorithmic}[1]
			\STATE {\bfseries Input:} Few-shot training data $\mathcal{D}^{\mathrm{train}}=\{(\mathbf{x}_{i},y_{i})\}_{i=1}^{N}$, class set $\mathcal{C}$, hyperparameters $\tau, \lambda$, epoch count $E$, and Pretrained CLIP with text encoder $\mathcal{T}$ and image encoder $\mathcal{V}$.
			\STATE {\bfseries Output:} Learned text prompts $\mathcal{P}^{\mathrm{txt}}$ and visual prompts $\mathcal{P}^{\mathrm{vis}}$.
			\STATE \# Before training: Knowledge Enrichment
			\FOR{each class $c \in \mathcal{C}$}
			\STATE Generate descriptions $A_c$ using LLM
			\STATE Obtain normalized feature bank $E_c$ using Eq.~(\ref{eq:feature_bank})
			\STATE Compute semantic prototype $\mathbf{w}_{c}$
			\ENDFOR
			\STATE \# Begin training
			\STATE Initialize prompt parameters $\mathcal{P}^{\mathrm{txt}}$ and $\mathcal{P}^{\mathrm{vis}}$
			\FOR{$e=1$ to $E$}
			\FOR{each batch $(\mathbf{x}, y)$ in $\mathcal{D}^{\mathrm{train}}$}
			\STATE Compute frozen features $\mathbf{z}^{\mathrm{vis}}_{\mathbf{x}}$ and $\mathbf{z}^{\mathrm{txt}}_{c}$
			\STATE Compute prompt features $\mathbf{h}^{\mathrm{vis}}_{\mathbf{x}}$ and $\mathbf{h}^{\mathrm{txt}}_{c}$ using Eq.~(\ref{eq:vis_prompt}) and Eq.~(\ref{eq:text_prompt})
			\STATE Compute final fused representations $\mathbf{f}^{\mathrm{vis}}_{\mathbf{x}}$ and $\mathbf{f}^{\mathrm{txt}}_{c}$ using Eq.~(\ref{eq:fused_features})
			\STATE Update prompts $\mathcal{P}^{\mathrm{txt}}, \mathcal{P}^{\mathrm{vis}}$ to minimize $\mathcal{L}^{\mathrm{total}}$ using Eq.~(\ref{eq:total_loss})
			\ENDFOR
			\ENDFOR
		\end{algorithmic}
	\end{algorithm}
	\section{Appendix 2: Additional Theoretical Justification}
	\label{app:theory2}
	Throughout this appendix, let $d\in\mathbb N$ be the embedding dimension and denote by $\mathbb S^{d-1}=\{\mathbf{z}\in\mathbb R^d:\|\mathbf{z}\|_2=1\}$ the unit sphere in $\mathbb R^d$.
	\subsection{Verifiable Anchor Neighborhood}
	Let $\mathcal{T}$ be the frozen text encoder and recall the class index set $\mathcal{C}=\{1,\dots,C\}$.
	Let $T^{\mathrm{all}}$ denote the set of all token sequences admissible as inputs to $\mathcal{T}$.
	For each class $c\in \mathcal{C}$, let $T^\star_c\subset T^{\mathrm{all}}$ be a nonempty finite set of token sequences.
	Define the realizable class-specific text feature set
	\begin{equation*}
		\mathcal{M}^{\star,\mathrm{txt}}_{c}
		=
		\left\{\frac{\mathcal{T}(u)}{\|\mathcal{T}(u)\|_2}:u\in T^\star_c\right\}
		\subset
		\mathbb S^{d-1}.
	\end{equation*}
	Define the global realizable text feature set as
	\begin{equation*}
		\mathcal{M}^{\mathrm{txt}}
		=
		\left\{\frac{\mathcal{T}(u)}{\|\mathcal{T}(u)\|_2}:u\in T^{\mathrm{all}}\right\}
		\subset
		\mathbb S^{d-1}.
	\end{equation*}
	For each class $c\in \mathcal{C}$, let the anchor be the unit vector $\mathbf{w}_{c}\in\mathbb S^{d-1}$.
	Define the discrete projection
	\begin{equation*}
		\tilde{\mathbf{w}}_{c}\in\arg\min_{z\in \mathcal{M}^{\star,\mathrm{txt}}_{c}}\|\mathbf{w}_{c}-z\|_2
		\quad
		\text{and}
		\quad
		\hat d_c=\|\mathbf{w}_{c}-\tilde{\mathbf{w}}_{c}\|_2.
	\end{equation*}
	Define
	\begin{equation*}
		\zeta^{\max}=\max_{c\in \mathcal{C}}\hat d_c
		\quad
		\text{and}
		\quad
		\varepsilon^{\mathrm{proj}}
		=
		\frac{1}{C}\sum_{c\in \mathcal{C}}\|\mathbf{w}_{c}-\tilde{\mathbf{w}}_{c}\|_2^2.
	\end{equation*}
	For a set $M\subset\mathbb R^d$, define $\mathrm{dist}(\mathbf{z},M)=\inf_{\mathbf{w}\in M}\|\mathbf{z}-\mathbf{w}\|_2$.
	Since $\mathcal{M}^{\star,\mathrm{txt}}_{c}\subset \mathcal{M}^{\mathrm{txt}}$ for every $c\in \mathcal{C}$, we have
	\begin{equation*}
		\mathrm{dist}\bigl(\mathbf{w}_{c},\mathcal{M}^{\mathrm{txt}}\bigr)
		\le
		\mathrm{dist}\bigl(\mathbf{w}_{c},\mathcal{M}^{\star,\mathrm{txt}}_{c}\bigr)
		=
		\|\mathbf{w}_{c}-\tilde{\mathbf{w}}_{c}\|_2
		\le
		\zeta^{\max}.
	\end{equation*}
	\subsection{Sphere Identity and Consistency Losses}
	Let $\mathbb S^{d-1}=\{\mathbf{z}\in\mathbb R^d:\|\mathbf{z}\|_2=1\}$ denote the unit sphere in $\mathbb R^d$. For any $\mathbf{b}_1,\mathbf{b}_2\in\mathbb S^{d-1}$,
	\begin{equation*}
		\|\mathbf{b}_1-\mathbf{b}_2\|_2^2=2\bigl(1-\langle \mathbf{b}_1,\mathbf{b}_2\rangle\bigr).
	\end{equation*}
	Let $\{(\mathbf{x}_i,y_i)\}_{i=1}^N$ be training samples with $\mathbf{x}_i\in \mathcal{X}$ and $y_i\in \mathcal{C}$.
	Let $\mathbf{z}^{\mathrm{vis}}_{\mathbf{x}}\in\mathbb S^{d-1}$ be the frozen visual feature.
	Let $\mathbf{f}^{\mathrm{vis}}_{\mathbf{x}}\in\mathbb S^{d-1}$ be the final visual feature produced by ManiPT.
	Let $\mathbf{f}^{\mathrm{txt}}_{c}\in\mathbb S^{d-1}$ be the final text feature for class $c$.
	Define the visual-side and text-side consistency losses as
	\begin{equation*}
		\mathcal L^{\mathrm{img}}(\boldsymbol{\theta})
		=
		\frac{1}{N}\sum_{i=1}^N\Bigl(1-\bigl\langle \mathbf{f}^{\mathrm{vis}}_{\mathbf{x}_i},\mathbf{z}^{\mathrm{vis}}_{\mathbf{x}_i}\bigr\rangle\Bigr)
		\quad
		\text{and}
		\quad
		\mathcal L^{\mathrm{txt}}(\boldsymbol{\theta})
		=
		\frac{1}{C}\sum_{c\in \mathcal{C}}\Bigl(1-\bigl\langle \mathbf{f}^{\mathrm{txt}}_{c},\mathbf{w}_{c}\bigr\rangle\Bigr).
	\end{equation*}
	Define $\mathcal L^{\mathrm{con}}(\boldsymbol{\theta})=\mathcal L^{\mathrm{img}}(\boldsymbol{\theta})+\mathcal L^{\mathrm{txt}}(\boldsymbol{\theta})$.
	\subsection{Consistency Loss as a Quantitative Control of Representation Shift}
	Let $\mathbf{z}_i=\mathbf{z}^{\mathrm{vis}}_{\mathbf{x}_i}\in\mathbb S^{d-1}$ and $\mathbf{v}_i=\mathbf{f}^{\mathrm{vis}}_{\mathbf{x}_i}\in\mathbb S^{d-1}$. Using the sphere identity and averaging over $i$ yields
	\begin{equation*}
		\frac{1}{N}\sum_{i=1}^N\|\mathbf{v}_i-\mathbf{z}_i\|_2^2
		=
		2\mathcal L^{\mathrm{img}}(\boldsymbol{\theta}).
	\end{equation*}
	Define the frozen visual feature set $\mathcal{M}^{\mathrm{vis}}=\{\mathbf{z}^{\mathrm{vis}}_{\mathbf{x}}:\mathbf{x}\in \mathcal{X}\}$. Since $\mathbf{z}_i\in \mathcal{M}^{\mathrm{vis}}$ for each $i$, we have $\mathrm{dist}(\mathbf{v}_i,\mathcal{M}^{\mathrm{vis}})\le\|\mathbf{v}_i-\mathbf{z}_i\|_2$, hence
	\begin{equation*}
		\frac{1}{N}\sum_{i=1}^N\mathrm{dist}\bigl(\mathbf{f}^{\mathrm{vis}}_{\mathbf{x}_i},\mathcal{M}^{\mathrm{vis}}\bigr)^2
		\le
		2\mathcal L^{\mathrm{img}}(\boldsymbol{\theta}).
	\end{equation*}
	For the text side, the same identity gives
	\begin{equation*}
		\frac{1}{C}\sum_{c\in \mathcal{C}}\bigl\|\mathbf{f}^{\mathrm{txt}}_{c}-\mathbf{w}_{c}\bigr\|_2^2
		=
		2\mathcal L^{\mathrm{txt}}(\boldsymbol{\theta}).
	\end{equation*}
	Using $\|\mathbf{b}_1+\mathbf{b}_2\|_2^2\le 2\|\mathbf{b}_1\|_2^2+2\|\mathbf{b}_2\|_2^2$ and the definition of $\varepsilon^{\mathrm{proj}}$, for each $c$ we have $\bigl\|\mathbf{f}^{\mathrm{txt}}_{c}-\tilde{\mathbf{w}}_{c}\bigr\|_2\le\bigl\|\mathbf{f}^{\mathrm{txt}}_{c}-\mathbf{w}_{c}\bigr\|_2+\|\mathbf{w}_{c}-\tilde{\mathbf{w}}_{c}\|_2$, hence
	\begin{equation*}
		\frac{1}{C}\sum_{c\in \mathcal{C}}\bigl\|\mathbf{f}^{\mathrm{txt}}_{c}-\tilde{\mathbf{w}}_{c}\bigr\|_2^2
		\le
		4\mathcal L^{\mathrm{txt}}(\boldsymbol{\theta})+2\varepsilon^{\mathrm{proj}}.
	\end{equation*}
	Since $\tilde{\mathbf{w}}_{c}\in \mathcal{M}^{\star,\mathrm{txt}}_{c}$, it follows that
	\begin{equation*}
		\frac{1}{C}\sum_{c\in \mathcal{C}}\mathrm{dist}\bigl(\mathbf{f}^{\mathrm{txt}}_{c},\mathcal{M}^{\star,\mathrm{txt}}_{c}\bigr)^2
		\le
		4\mathcal L^{\mathrm{txt}}(\boldsymbol{\theta})+2\varepsilon^{\mathrm{proj}}.
	\end{equation*}
	\subsection{Geometric Contraction and Stability Induced by the Structural Bias}
	\label{app:structural_contraction}
	The structural bias is implemented via normalized fusion between the frozen features and the prompt-branch features.
	Recall $\boldsymbol{\varphi}\in\mathbb S^{d-1}$ and $\boldsymbol{\psi}\in\mathbb S^{d-1}$, and define $\boldsymbol{\omega}=(\boldsymbol{\varphi}+\boldsymbol{\psi})/\|\boldsymbol{\varphi}+\boldsymbol{\psi}\|_2$ when $\boldsymbol{\varphi}+\boldsymbol{\psi}\neq 0$.
	The following lemma shows that this fusion contracts toward the frozen feature.
	\begin{lemma}\label{lem:app_contract}
		Assume $\boldsymbol{\varphi}\in\mathbb S^{d-1}$ and $\boldsymbol{\psi}\in\mathbb S^{d-1}$ satisfy $\boldsymbol{\varphi}+\boldsymbol{\psi}\neq 0$, which is equivalent to $\langle \boldsymbol{\varphi},\boldsymbol{\psi}\rangle>-1$.
		Define the fused unit vector $\boldsymbol{\omega}=(\boldsymbol{\varphi}+\boldsymbol{\psi})/\|\boldsymbol{\varphi}+\boldsymbol{\psi}\|_2$.
		Then
		\begin{equation*}
			\|\boldsymbol{\omega}-\boldsymbol{\varphi}\|_2^2 \le \frac12\|\boldsymbol{\psi}-\boldsymbol{\varphi}\|_2^2.
		\end{equation*}
	\end{lemma}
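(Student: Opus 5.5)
The plan is to reduce everything to the single scalar $t=\langle\boldsymbol{\varphi},\boldsymbol{\psi}\rangle\in(-1,1]$, using the sphere identity $\|\mathbf{b}_1-\mathbf{b}_2\|_2^2=2(1-\langle\mathbf{b}_1,\mathbf{b}_2\rangle)$ stated above. On the right-hand side this identity gives $\|\boldsymbol{\psi}-\boldsymbol{\varphi}\|_2^2=2(1-t)$, so the claim becomes
\begin{equation*}
\|\boldsymbol{\omega}-\boldsymbol{\varphi}\|_2^2 \le 1-t .
\end{equation*}

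For the left-hand side, first compute $\|\boldsymbol{\varphi}+\boldsymbol{\psi}\|_2^2=2+2t$, which is positive exactly when $t>-1$. This also settles the stated equivalence, since $\boldsymbol{\varphi}+\boldsymbol{\psi}=0$ iff $t=-1$ for unit vectors. Next, $\langle\boldsymbol{\omega},\boldsymbol{\varphi}\rangle=(1+t)/\sqrt{2+2t}=\sqrt{(1+t)/2}$. Since $\boldsymbol{\omega}$ is also a unit vector, the sphere identity applies again and yields
\begin{equation*}
\|\boldsymbol{\omega}-\boldsymbol{\varphi}\|_2^2=2-2\sqrt{(1+t)/2}.
\end{equation*}

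It remains to show $2-2s\le 1-t$ with $s=\sqrt{(1+t)/2}\in(0,1]$. Substituting $t=2s^2-1$ turns this into $2-2s\le 2-2s^2$, i.e.\ $s^2\le s$, which holds because $0\le s\le 1$. Equality occurs exactly at $s=1$, i.e.\ $\boldsymbol{\psi}=\boldsymbol{\varphi}$. Geometrically, the fusion halves the angle: if $\theta$ is the angle between $\boldsymbol{\varphi}$ and $\boldsymbol{\psi}$, the claim reads $1-\cos(\theta/2)\le\tfrac12(1-\cos\theta)=\sin^2(\theta/2)$, and this is the same inequality written in half-angle form.

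There is no real obstacle. The only care needed is the domain bookkeeping: we need $t>-1$ so the normalization is defined, and $s\in(0,1]$ so the final inequality $s^2\le s$ is valid.
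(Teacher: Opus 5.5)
Your proposal is correct and follows essentially the same argument as the paper's proof. Both reduce to the scalar $\langle\boldsymbol{\varphi},\boldsymbol{\psi}\rangle$, compute $\langle\boldsymbol{\omega},\boldsymbol{\varphi}\rangle=\sqrt{(1+\gamma)/2}$, apply the sphere identity twice, and finish with $\sqrt{q}\ge q$ on $(0,1]$ (your $s^2\le s$ with $s=\sqrt{q}$ is the same inequality); your equality case and half-angle reading are correct additions.
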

	\begin{proof}
		Let $\gamma=\langle \boldsymbol{\varphi},\boldsymbol{\psi}\rangle\in[-1,1]$. Since $\boldsymbol{\varphi}+\boldsymbol{\psi}\neq 0$, we have $\gamma>-1$ and
		\begin{equation*}
			\|\boldsymbol{\varphi}+\boldsymbol{\psi}\|_2^2=\|\boldsymbol{\varphi}\|_2^2+\|\boldsymbol{\psi}\|_2^2+2\langle \boldsymbol{\varphi},\boldsymbol{\psi}\rangle=2(1+\gamma).
		\end{equation*}
		Moreover,
		\begin{equation*}
			\langle \boldsymbol{\omega},\boldsymbol{\varphi}\rangle=\frac{\langle \boldsymbol{\varphi}+\boldsymbol{\psi},\boldsymbol{\varphi}\rangle}{\|\boldsymbol{\varphi}+\boldsymbol{\psi}\|_2}
			=\frac{1+\gamma}{\sqrt{2(1+\gamma)}}
			=\sqrt{\frac{1+\gamma}{2}}.
		\end{equation*}
		Using the sphere identity $\|\mathbf{b}_1-\mathbf{b}_2\|_2^2=2(1-\langle \mathbf{b}_1,\mathbf{b}_2\rangle)$ for $\mathbf{b}_1,\mathbf{b}_2\in\mathbb S^{d-1}$ yields
		\begin{equation*}
			\|\boldsymbol{\omega}-\boldsymbol{\varphi}\|_2^2
			=2\left(1-\sqrt{\frac{1+\gamma}{2}}\right),
			\qquad
			\|\boldsymbol{\psi}-\boldsymbol{\varphi}\|_2^2=2(1-\gamma).
		\end{equation*}
		Therefore, it suffices to show
		\begin{equation*}
			2\left(1-\sqrt{\frac{1+\gamma}{2}}\right)\le 1-\gamma
			\iff
			\sqrt{\frac{1+\gamma}{2}} \ge \frac{1+\gamma}{2}.
		\end{equation*}
		Let $q=(1+\gamma)/2\in(0,1]$. Then the inequality becomes $\sqrt{q}\ge q$, which holds for all $q\in[0,1]$.
		This completes the proof.
	\end{proof}
	To ensure the fusion is well defined and stable, we impose a non-near-opposition condition.
	Assume there exist constants $\kappa^{\mathrm{vis}}\in(0,2]$ and $\kappa^{\mathrm{txt}}\in(0,2]$ such that for every training image $x$ and every class $c$,
	\begin{equation*}
		\bigl\langle \mathbf{z}^{\mathrm{vis}}_{\mathbf{x}},\mathbf{h}^{\mathrm{vis}}_{\mathbf{x}}\bigr\rangle\ge -1+\kappa^{\mathrm{vis}}
		\quad
		\text{and}
		\quad
		\bigl\langle \mathbf{z}^{\mathrm{txt}}_{c},\mathbf{h}^{\mathrm{txt}}_{c}\bigr\rangle\ge -1+\kappa^{\mathrm{txt}},
	\end{equation*}
	where $\mathbf{h}^{\mathrm{vis}}_{\mathbf{x}}\in\mathbb S^{d-1}$ and $\mathbf{h}^{\mathrm{txt}}_{c}\in\mathbb S^{d-1}$ are the prompt-branch features and $\mathbf{z}^{\mathrm{txt}}_{c}\in\mathbb S^{d-1}$ is the frozen text feature.
	Then $\bigl\|\mathbf{z}^{\mathrm{vis}}_{\mathbf{x}}+\mathbf{h}^{\mathrm{vis}}_{\mathbf{x}}\bigr\|_2^2\ge 2\kappa^{\mathrm{vis}}$ and $\bigl\|\mathbf{z}^{\mathrm{txt}}_{c}+\mathbf{h}^{\mathrm{txt}}_{c}\bigr\|_2^2\ge 2\kappa^{\mathrm{txt}}$.
	Define the fusion map $\Psi(\boldsymbol{\psi})=(\boldsymbol{\varphi}+\boldsymbol{\psi})/\|\boldsymbol{\varphi}+\boldsymbol{\psi}\|_2$ for a fixed $\boldsymbol{\varphi}\in\mathbb S^{d-1}$.
	\paragraph{Empirical verification of the non-near-opposition condition and the role of $\lambda$.}
	The non-near-opposition assumption excludes the degenerate case in which the normalized fusion becomes ill defined. When $\langle \boldsymbol{\varphi},\boldsymbol{\psi}\rangle$ approaches $-1$, the norm $\|\boldsymbol{\varphi}+\boldsymbol{\psi}\|_2$ tends to zero, and stability bounds such as Lemma~\ref{lem:app_lip} cease to be applicable. Empirically, this condition is satisfied for the models after training: the prompt-branch feature remains well aligned with the frozen feature, and cosine similarities close to $-1$ are not observed.
	
	Concretely, on the DTD base-to-novel setting, we use the model trained on the base classes and measure the visual-side cosine similarities $\bigl\langle \mathbf{h}^{\mathrm{vis}}_{\mathbf{x}},\mathbf{z}^{\mathrm{vis}}_{\mathbf{x}}\bigr\rangle$ and $\bigl\langle \mathbf{f}^{\mathrm{vis}}_{\mathbf{x}},\mathbf{z}^{\mathrm{vis}}_{\mathbf{x}}\bigr\rangle$ over all images from the novel classes, where $\mathbf{h}^{\mathrm{vis}}_{\mathbf{x}}\in\mathbb S^{d-1}$ denotes the prompt-branch visual feature and $\mathbf{f}^{\mathrm{vis}}_{\mathbf{x}}\in\mathbb S^{d-1}$ denotes the final fused feature used for prediction.
	The summary statistics are
	\begin{align*}
		\bigl\langle \mathbf{h}^{\mathrm{vis}}_{\mathbf{x}},\mathbf{z}^{\mathrm{vis}}_{\mathbf{x}}\bigr\rangle &\quad \text{mean }0.9769,\ \text{5th percentile }0.9543,\ \text{minimum }0.8993,\\
		\bigl\langle \mathbf{f}^{\mathrm{vis}}_{\mathbf{x}},\mathbf{z}^{\mathrm{vis}}_{\mathbf{x}}\bigr\rangle &\quad \text{mean }0.9942,\ \text{5th percentile }0.9885,\ \text{minimum }0.9745.
	\end{align*}
	The fused feature $\mathbf{f}^{\mathrm{vis}}_{\mathbf{x}}$ is closer to $\mathbf{z}^{\mathrm{vis}}_{\mathbf{x}}$ than $\mathbf{h}^{\mathrm{vis}}_{\mathbf{x}}$, which is consistent with the contraction effect in Lemma~\ref{lem:app_contract}.
	
	We also examine whether the consistency weight $\lambda$ prevents representations from rotating away from the frozen branch.
	Recall that we define the visual consistency loss applied to the final decision feature as
	\begin{equation*}
		\mathcal L^{\mathrm{img}}(\boldsymbol{\theta})
		=
		\frac{1}{N}\sum_{i=1}^{N}\Bigl(1-\bigl\langle \mathbf{f}^{\mathrm{vis}}_{\mathbf{x}_i},\mathbf{z}^{\mathrm{vis}}_{\mathbf{x}_i}\bigr\rangle\Bigr).
	\end{equation*}
	This implies the average cosine alignment with the frozen feature satisfies
	\begin{equation*}
		\frac{1}{N}\sum_{i=1}^{N}\bigl\langle \mathbf{f}^{\mathrm{vis}}_{\mathbf{x}_i},\mathbf{z}^{\mathrm{vis}}_{\mathbf{x}_i}\bigr\rangle
		=
		1-\mathcal L^{\mathrm{img}}(\boldsymbol{\theta}).
	\end{equation*}
	Therefore, increasing $\lambda$ in $\mathcal L^{\mathrm{total}}=\mathcal L^{\mathrm{ce}}+\lambda\bigl(\mathcal L^{\mathrm{img}}+\mathcal L^{\mathrm{txt}}\bigr)$
	encourages higher alignment of the decision features with the frozen branch on average.
	Empirically, when we remove the consistency loss terms by taking $\lambda$ to be zero, the mean cosine similarity decreases and the drift metric $\Delta$ reported in Table~\ref{tab:lambda_effect} increases, which indicates a clear tendency to rotate away from the pretrained manifold.
	\begin{table}[h]
		\centering
		\small
		\caption{Full ablation results on the effect of consistency loss weights across all 11 datasets. We compare the default setting ($\lambda=12$) with the unconstrained setting ($\lambda=0$). The results consistently show that removing the constraint leads to a significant drop in cosine alignment and a sharp increase in manifold drift $\Delta$, confirming the necessity of geometric regularization.}
		\label{tab:lambda_effect}
		\scalebox{0.9}{
			\begin{tabular}{l cc c cc}
				\toprule
				& \multicolumn{2}{c}{Mean $\bigl\langle \mathbf{f}^{\mathrm{vis}}_{\mathbf{x}},\mathbf{z}^{\mathrm{vis}}_{\mathbf{x}}\bigr\rangle$} && \multicolumn{2}{c}{Drift $\Delta$} \\
				\cmidrule{2-3} \cmidrule{5-6}
				Dataset & $\lambda=12$ & $\lambda=0$ && $\lambda=12$ & $\lambda=0$ \\
				\midrule
				ImageNet      & 0.9952 & 0.8741 && 0.0045  & 0.1452 \\
				Caltech101    & 0.9978 & 0.8950 && -0.0021 & 0.1050 \\
				OxfordPets    & 0.9965 & 0.8620 && 0.0012  & 0.1523 \\
				StanfordCars  & 0.9910 & 0.8150 && 0.0078  & 0.2105 \\
				Flowers102    & 0.9930 & 0.8245 && -0.0048 & 0.1950 \\
				Food101       & 0.9955 & 0.8810 && 0.0023  & 0.1340 \\
				FGVCAircraft  & 0.9903 & 0.8102 && -0.0016 & 0.2284 \\
				SUN397        & 0.9928 & 0.8430 && 0.0039  & 0.1780 \\
				DTD           & 0.9942 & 0.8284 && -0.0019 & 0.1817 \\
				EuroSAT       & 0.9916 & 0.7650 && 0.0121  & 0.3284 \\
				UCF101        & 0.9925 & 0.7920 && 0.0055  & 0.2480 \\
				\bottomrule
			\end{tabular}
		}
	\end{table}
	\begin{lemma}\label{lem:app_lip}
		Fix $\boldsymbol{\varphi}\in\mathbb S^{d-1}$ and let $\Psi(\boldsymbol{\psi})=(\boldsymbol{\varphi}+\boldsymbol{\psi})/\|\boldsymbol{\varphi}+\boldsymbol{\psi}\|_2$.
		If $\|\boldsymbol{\varphi}+\boldsymbol{\psi}_1\|_2\ge \upsilon$ and $\|\boldsymbol{\varphi}+\boldsymbol{\psi}_2\|_2\ge \upsilon$ for some constant $\upsilon>0$, then the fusion map is $2/\upsilon$-Lipschitz in the sense that $\|\Psi(\boldsymbol{\psi}_1)-\Psi(\boldsymbol{\psi}_2)\|_2\le \frac{2}{\upsilon}\|\boldsymbol{\psi}_1-\boldsymbol{\psi}_2\|_2$.
	\end{lemma}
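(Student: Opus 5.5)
The plan is the standard "normalization map" estimate: write $\mathbf{a}_j=\boldsymbol{\varphi}+\boldsymbol{\psi}_j$ for $j=1,2$, so that $\Psi(\boldsymbol{\psi}_j)=\mathbf{a}_j/\|\mathbf{a}_j\|_2$ and $\mathbf{a}_1-\mathbf{a}_2=\boldsymbol{\psi}_1-\boldsymbol{\psi}_2$. The whole question then reduces to bounding how much the map $\mathbf{a}\mapsto \mathbf{a}/\|\mathbf{a}\|_2$ can expand distances on the region $\{\|\mathbf{a}\|_2\ge\upsilon\}$.

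First I insert and subtract the mixed term $\mathbf{a}_2/\|\mathbf{a}_1\|_2$:
\begin{equation*}
\frac{\mathbf{a}_1}{\|\mathbf{a}_1\|_2}-\frac{\mathbf{a}_2}{\|\mathbf{a}_2\|_2}
=\frac{\mathbf{a}_1-\mathbf{a}_2}{\|\mathbf{a}_1\|_2}
+\mathbf{a}_2\Bigl(\frac{1}{\|\mathbf{a}_1\|_2}-\frac{1}{\|\mathbf{a}_2\|_2}\Bigr).
\end{equation*}
For the first term, the hypothesis $\|\mathbf{a}_1\|_2\ge\upsilon$ gives a norm of at most $\|\boldsymbol{\psi}_1-\boldsymbol{\psi}_2\|_2/\upsilon$.

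For the second term, I use $\|\mathbf{a}_2\|_2\cdot\bigl|\,\|\mathbf{a}_2\|_2-\|\mathbf{a}_1\|_2\bigr|/(\|\mathbf{a}_1\|_2\|\mathbf{a}_2\|_2)$. The factor $\|\mathbf{a}_2\|_2$ cancels, leaving $\bigl|\,\|\mathbf{a}_1\|_2-\|\mathbf{a}_2\|_2\bigr|/\|\mathbf{a}_1\|_2$. By the reverse triangle inequality this is at most $\|\mathbf{a}_1-\mathbf{a}_2\|_2/\upsilon=\|\boldsymbol{\psi}_1-\boldsymbol{\psi}_2\|_2/\upsilon$. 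Summing the two contributions via the triangle inequality yields the claimed constant $2/\upsilon$.

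There is no real obstacle here. The only point needing care is choosing the splitting so that the unbounded factor $\|\mathbf{a}_2\|_2$ cancels exactly rather than having to be bounded; note that $\|\mathbf{a}_2\|_2\le 2$ would also suffice, but only with a worse constant. Only the lower bound on $\|\mathbf{a}_1\|_2$ is actually used, and the bound on $\|\mathbf{a}_2\|_2$ merely guarantees that $\Psi(\boldsymbol{\psi}_2)$ is well defined. Combined with the non-near-opposition condition stated above, one may take $\upsilon=\sqrt{2\kappa^{\mathrm{vis}}}$ or $\upsilon=\sqrt{2\kappa^{\mathrm{txt}}}$.
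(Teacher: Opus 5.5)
Your proposal is correct and follows essentially the same route as the paper: the same add-and-subtract of $\mathbf{b}_2/\|\mathbf{b}_1\|_2$, followed by the reverse triangle inequality, giving $2\|\mathbf{b}_1-\mathbf{b}_2\|_2/\|\mathbf{b}_1\|_2\le (2/\upsilon)\|\boldsymbol{\psi}_1-\boldsymbol{\psi}_2\|_2$. Your exact cancellation of $\|\mathbf{a}_2\|_2$ in the second term is slightly cleaner than the paper's appeal to $\|\mathbf{b}_2\|_2\le\|\mathbf{b}_1\|_2+\|\mathbf{b}_1-\mathbf{b}_2\|_2$, which is not actually needed.
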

	\begin{proof}
		Let $\mathbf{b}_1=\boldsymbol{\varphi}+\boldsymbol{\psi}_1$ and $\mathbf{b}_2=\boldsymbol{\varphi}+\boldsymbol{\psi}_2$.
		We bound
		\begin{equation*}
			\left\|\frac{\mathbf{b}_1}{\|\mathbf{b}_1\|_2}-\frac{\mathbf{b}_2}{\|\mathbf{b}_2\|_2}\right\|_2
			\le
			\left\|\frac{\mathbf{b}_1-\mathbf{b}_2}{\|\mathbf{b}_1\|_2}\right\|_2
			+
			\left\|\mathbf{b}_2\left(\frac{1}{\|\mathbf{b}_1\|_2}-\frac{1}{\|\mathbf{b}_2\|_2}\right)\right\|_2.
		\end{equation*}
		Using $|\|\mathbf{b}_1\|_2-\|\mathbf{b}_2\|_2|\le\|\mathbf{b}_1-\mathbf{b}_2\|_2$ and $\|\mathbf{b}_2\|_2\le\|\mathbf{b}_1\|_2+\|\mathbf{b}_1-\mathbf{b}_2\|_2$ yields the bound $\left\|\mathbf{b}_2\left(\frac{1}{\|\mathbf{b}_1\|_2}-\frac{1}{\|\mathbf{b}_2\|_2}\right)\right\|_2\le \frac{\|\mathbf{b}_1-\mathbf{b}_2\|_2}{\|\mathbf{b}_1\|_2}$.
		Thus the total is at most $\frac{2\|\mathbf{b}_1-\mathbf{b}_2\|_2}{\|\mathbf{b}_1\|_2}\le\frac{2}{\upsilon}\|\mathbf{b}_1-\mathbf{b}_2\|_2$.
		Since $\mathbf{b}_1-\mathbf{b}_2=\boldsymbol{\psi}_1-\boldsymbol{\psi}_2$, the claim follows.
	\end{proof}
	\subsection{Logit Perturbation Magnitude Controlled by the Consistency Loss}
	\label{app:logit_perturb_appendix}
	Recall the temperature $\tau>0$.
	Define logits $\ell^{\boldsymbol{\theta}}_c(\mathbf{x})=\tau\bigl\langle \mathbf{f}^{\mathrm{vis}}_{\mathbf{x}},\mathbf{f}^{\mathrm{txt}}_{c}\bigr\rangle$ and reference logits $\ell^{0}_c(\mathbf{x})=\tau\bigl\langle \mathbf{z}^{\mathrm{vis}}_{\mathbf{x}},\mathbf{w}_{c}\bigr\rangle$.
	Define the logit perturbation vector $\Delta\boldsymbol{\ell}^{\boldsymbol{\theta}}_{\mathbf{x}}=\boldsymbol{\ell}^{\boldsymbol{\theta}}_{\mathbf{x}}-\boldsymbol{\ell}^{0}_{\mathbf{x}}\in\mathbb R^{C}$.
	Define matrices
	\begin{equation*}
		\mathbf{F}^{\mathrm{txt}}=\bigl[\mathbf{f}^{\mathrm{txt}}_{1}\ \cdots\ \mathbf{f}^{\mathrm{txt}}_{C}\bigr]\in\mathbb R^{d\times C}
		\quad
		\text{and}
		\quad
		\mathbf{W}=\bigl[\mathbf{w}_{1}\ \cdots\ \mathbf{w}_{C}\bigr]\in\mathbb R^{d\times C}.
	\end{equation*}
	Then for any $\mathbf{x}$,
	\begin{equation*}
		\Delta\boldsymbol{\ell}^{\boldsymbol{\theta}}_{\mathbf{x}}
		=
		\tau \bigl(\mathbf{F}^{\mathrm{txt}}\bigr)^\top\bigl(\mathbf{f}^{\mathrm{vis}}_{\mathbf{x}}-\mathbf{z}^{\mathrm{vis}}_{\mathbf{x}}\bigr)
		+
		\tau\bigl(\mathbf{F}^{\mathrm{txt}}-\mathbf{W}\bigr)^\top \mathbf{z}^{\mathrm{vis}}_{\mathbf{x}}.
	\end{equation*}
	Since $\bigl\|\mathbf{z}^{\mathrm{vis}}_{\mathbf{x}}\bigr\|_2=1$ and each column of $\mathbf{F}^{\mathrm{txt}}$ has unit norm, we have $\|\mathbf{F}^{\mathrm{txt}}\|_F=\sqrt{C}$ and $\bigl\|\bigl(\mathbf{F}^{\mathrm{txt}}\bigr)^\top\bigr\|_{\mathrm{op}}\le\|\mathbf{F}^{\mathrm{txt}}\|_F=\sqrt{C}$.
	Thus
	\begin{equation*}
		\|\Delta\boldsymbol{\ell}^{\boldsymbol{\theta}}_{\mathbf{x}}\|_2
		\le
		\tau\sqrt{C}\,\bigl\|\mathbf{f}^{\mathrm{vis}}_{\mathbf{x}}-\mathbf{z}^{\mathrm{vis}}_{\mathbf{x}}\bigr\|_2
		+
		\tau\|\mathbf{F}^{\mathrm{txt}}-\mathbf{W}\|_F.
	\end{equation*}
	Moreover,
	\begin{equation*}
		\|\mathbf{F}^{\mathrm{txt}}-\mathbf{W}\|_F^2
		=
		\sum_{c\in \mathcal{C}}\bigl\|\mathbf{f}^{\mathrm{txt}}_{c}-\mathbf{w}_{c}\bigr\|_2^2
		=
		2C\,\mathcal L^{\mathrm{txt}}(\boldsymbol{\theta}).
	\end{equation*}
	Let $\Delta\mathbf{v}_i=\mathbf{f}^{\mathrm{vis}}_{\mathbf{x}_i}-\mathbf{z}^{\mathrm{vis}}_{\mathbf{x}_i}$.
	Using $\|\mathbf{b}_1+\mathbf{b}_2\|_2^2\le 2\|\mathbf{b}_1\|_2^2+2\|\mathbf{b}_2\|_2^2$ yields
	\begin{equation*}
		\|\Delta\boldsymbol{\ell}^{\boldsymbol{\theta}}_{\mathbf{x}_i}\|_2^2
		\le
		2\tau^2 C\,\|\Delta\mathbf{v}_i\|_2^2
		+
		4\tau^2 C\,\mathcal L^{\mathrm{txt}}(\boldsymbol{\theta}).
	\end{equation*}
	Averaging over $i$ and using $\frac{1}{N}\sum_{i=1}^N\|\Delta\mathbf{v}_i\|_2^2=2\mathcal L^{\mathrm{img}}(\boldsymbol{\theta})$ gives
	\begin{equation*}
		\frac{1}{N}\sum_{i=1}^N\|\Delta\boldsymbol{\ell}^{\boldsymbol{\theta}}_{\mathbf{x}_i}\|_2^2
		\le
		4\tau^2 C\,\mathcal L^{\mathrm{con}}(\boldsymbol{\theta}).
	\end{equation*}
	\subsection{Lipschitz and Boundedness Properties of the Cross-Entropy Loss}
	Recall $\phi_y(\boldsymbol{\ell})=\log\sum_{c\in \mathcal{C}}\exp(\ell_c)-\ell_y$ for $\boldsymbol{\ell}\in\mathbb R^{C}$.
	Let $\mathbf{p}=\mathrm{softmax}(\boldsymbol{\ell})$ and let $\mathbf{e}_y$ be the one hot vector.
	Then $\nabla_{\boldsymbol{\ell}}\phi_y(\boldsymbol{\ell})=\mathbf{p}-\mathbf{e}_y$ and $\|\mathbf{p}-\mathbf{e}_y\|_2\le\sqrt2$, hence for any $\boldsymbol{\ell},\boldsymbol{\ell}'\in\mathbb R^{C}$,
	\begin{equation*}
		|\phi_y(\boldsymbol{\ell})-\phi_y(\boldsymbol{\ell}')|
		\le
		\sqrt2\,\|\boldsymbol{\ell}-\boldsymbol{\ell}'\|_2.
	\end{equation*}
	If $\boldsymbol{\ell}\in[-\tau,\tau]^{C}$, then $\log\sum_{j}\exp(\ell_j)\le \log C+\tau$ and $-\ell_y\le\tau$, hence
	\begin{equation*}
		0\le \phi_y(\boldsymbol{\ell})\le \log C+2\tau.
	\end{equation*}
	Recall $B=\log C+2\tau$.
	\subsection{Localized Complexity and Generalization Bounds}
	\label{app:localized_complexity}
	Define the population risk $R(\boldsymbol{\theta})=\mathbb E^{(\mathbf{x},y)\sim \mathcal{D}}[\phi_y(\boldsymbol{\ell}^{\boldsymbol{\theta}}_{\mathbf{x}})]$ and the empirical risk $\widehat R_{N}(\boldsymbol{\theta})=\frac1N\sum_{i=1}^N\phi_{y_i}(\boldsymbol{\ell}^{\boldsymbol{\theta}}_{\mathbf{x}_i})$. Let $\boldsymbol{\theta}\in\mathbb R^{D^{\boldsymbol{\theta}}}$ denote the vector of prompt parameters.
	Fix $R^{\boldsymbol{\theta}}>0$ and define $\Omega^{\mathrm{base}}=\{\boldsymbol{\theta}:\|\boldsymbol{\theta}\|_2\le R^{\boldsymbol{\theta}}\}$.
	For $\varepsilon>0$, define $\Omega^{\varepsilon}=\{\boldsymbol{\theta}\in\Omega^{\mathrm{base}}:\mathcal L^{\mathrm{con}}(\boldsymbol{\theta})\le \varepsilon\}$.
	Define the difference class $\mathcal{H}^{\varepsilon}=\{(\mathbf{x},y)\mapsto \phi_y(\boldsymbol{\ell}^{\boldsymbol{\theta}}_{\mathbf{x}})-\phi_y(\boldsymbol{\ell}^{0}_{\mathbf{x}}):\boldsymbol{\theta}\in\Omega^{\varepsilon}\}$.
	For $\boldsymbol{\theta}\in\Omega^{\varepsilon}$ and each sample point define $h_i(\boldsymbol{\theta})=\phi_{y_i}(\boldsymbol{\ell}^{\boldsymbol{\theta}}_{\mathbf{x}_i})-\phi_{y_i}(\boldsymbol{\ell}^{0}_{\mathbf{x}_i})$.
	Using the Lipschitz bound for $\phi_y$ and the logit perturbation bound yields
	\begin{equation*}
		\frac{1}{N}\sum_{i=1}^N h_i(\boldsymbol{\theta})^2
		\le
		2\cdot\frac{1}{N}\sum_{i=1}^N\|\Delta\boldsymbol{\ell}^{\boldsymbol{\theta}}_{\mathbf{x}_i}\|_2^2
		\le
		8\tau^2 C\,\varepsilon.
	\end{equation*}
	Define $J^{\mathrm{emp}}(h)=\left(\frac{1}{N}\sum_{i=1}^N h(\mathbf{x}_i,y_i)^2\right)^{1/2}$ and define $\chi(\varepsilon)=2\tau\sqrt{2C\,\varepsilon}$.
	Then every $h\in \mathcal{H}^{\varepsilon}$ satisfies $J^{\mathrm{emp}}(h)\le \chi(\varepsilon)$.
	
	Assume there exists a constant $\Lambda^{\mathcal{H}}>0$ such that for any $\boldsymbol{\theta}_1,\boldsymbol{\theta}_2\in\Omega^{\mathrm{base}}$,
	\begin{equation*}
		\left(\frac{1}{N}\sum_{i=1}^N\bigl(h_i(\boldsymbol{\theta}_1)-h_i(\boldsymbol{\theta}_2)\bigr)^2\right)^{1/2}
		\le
		\Lambda^{\mathcal{H}}\|\boldsymbol{\theta}_1-\boldsymbol{\theta}_2\|_2.
	\end{equation*}
	Then for any $\eta\in(0,\chi(\varepsilon)]$,
	\begin{equation*}
		\mathcal N\bigl(\eta,\mathcal{H}^{\varepsilon},J^{\mathrm{emp}}\bigr)
		\le
		\left(\frac{3\Lambda^{\mathcal{H}} R^{\boldsymbol{\theta}}}{\eta}\right)^{D^{\boldsymbol{\theta}}}.
	\end{equation*}
	By Dudley entropy integral and standard covering-number bounds~\citep{bartlett2002rademacher,bartlett2005local},
	\begin{equation*}
		\widehat{\mathfrak R}_{N}\bigl(\mathcal{H}^{\varepsilon}\bigr)
		\le
		\frac{12}{\sqrt N}\int_0^{\chi(\varepsilon)}\sqrt{\log\mathcal N\bigl(\eta,\mathcal{H}^{\varepsilon},J^{\mathrm{emp}}\bigr)}\,d\eta
		\le
		\frac{12\,\chi(\varepsilon)}{\sqrt N}\sqrt{D^{\boldsymbol{\theta}}\log\left(\frac{3e\Lambda^{\mathcal{H}} R^{\boldsymbol{\theta}}}{\chi(\varepsilon)}\right)}.
	\end{equation*}
	Substituting $\chi(\varepsilon)=2\tau\sqrt{2C\,\varepsilon}$ yields
	\begin{equation*}
		\widehat{\mathfrak R}_{N}\bigl(\mathcal{H}^{\varepsilon}\bigr)
		\le
		\frac{24\tau}{\sqrt N}\sqrt{2C\,\varepsilon}\sqrt{D^{\boldsymbol{\theta}}\log\left(\frac{3e\Lambda^{\mathcal{H}} R^{\boldsymbol{\theta}}}{2\tau\sqrt{2C\,\varepsilon}}\right)}.
	\end{equation*}
	\begin{theorem}\label{thm:app_gen}
		Fix the confidence level $\rho\in(0,1)$.
		With probability at least $1-\rho$, for all $\boldsymbol{\theta}\in\Omega^{\varepsilon}$,
		\begin{equation*}
			R(\boldsymbol{\theta})
			\le
			\widehat R_{N}(\boldsymbol{\theta})
			+
			2\,\widehat{\mathfrak R}_{N}\bigl(\mathcal{H}^{\varepsilon}\bigr)
			+
			4B\sqrt{\frac{\log(4/\rho)}{2N}}.
		\end{equation*}
	\end{theorem}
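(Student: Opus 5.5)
The plan is to reduce the statement to a uniform deviation bound over the difference class $\mathcal{H}^{\varepsilon}$ plus a fixed-function concentration term for the frozen reference. For every $\boldsymbol{\theta}\in\Omega^{\varepsilon}$ we decompose
\begin{equation*}
R(\boldsymbol{\theta})-\widehat R_{N}(\boldsymbol{\theta})
=
\Bigl(\mathbb E[h_{\boldsymbol{\theta}}]-\tfrac1N\textstyle\sum_{i}h_{\boldsymbol{\theta}}(\mathbf{x}_i,y_i)\Bigr)
+
\Bigl(R^{0}-\widehat R^{0}_{N}\Bigr),
\end{equation*}
where $h_{\boldsymbol{\theta}}(\mathbf{x},y)=\phi_y(\boldsymbol{\ell}^{\boldsymbol{\theta}}_{\mathbf{x}})-\phi_y(\boldsymbol{\ell}^{0}_{\mathbf{x}})\in\mathcal{H}^{\varepsilon}$. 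Here $R^{0}$ and $\widehat R^{0}_{N}$ are the population and empirical cross-entropy of the reference logits $\boldsymbol{\ell}^{0}$, which do not depend on $\boldsymbol{\theta}$. The second bracket involves a single fixed function, so no complexity term is needed for it. The first bracket is handled uniformly over $\mathcal{H}^{\varepsilon}$ by the standard symmetrization and McDiarmid argument with the empirical Rademacher complexity, in the form of \citet{bartlett2002rademacher}.

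For boundedness, note that all features involved ($\mathbf{f}^{\mathrm{vis}}_{\mathbf{x}}$, $\mathbf{f}^{\mathrm{txt}}_{c}$, $\mathbf{z}^{\mathrm{vis}}_{\mathbf{x}}$, $\mathbf{w}_c$) are unit vectors. Hence both logit vectors lie in $[-\tau,\tau]^C$, and the boundedness property of $\phi_y$ gives $\phi_y\in[0,B]$ for both. It follows that every $h\in\mathcal{H}^{\varepsilon}$ takes values in $[-B,B]$, a range of width $2B$.

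The main steps, in order, are as follows.
\begin{enumerate}
\item Apply the one-sided Rademacher bound with empirical complexity to $\mathcal{H}^{\varepsilon}$ at confidence $\rho/2$. Changing one sample moves the supremum of the deviation by at most $2B/N$. McDiarmid's inequality applied once to the supremum of the deviation and once to $\widehat{\mathfrak R}_{N}$, each at level $\rho/4$, gives
\begin{equation*}
\sup_{h}\bigl(\mathbb E h-\widehat{\mathbb E}_N h\bigr)\le 2\widehat{\mathfrak R}_{N}(\mathcal{H}^{\varepsilon})+3\cdot 2B\sqrt{\log(4/\rho)/(2N)}\cdot\tfrac12 .
\end{equation*}
\item Apply Hoeffding's inequality to the fixed reference loss, which lies in $[0,B]$, at level $\rho/2$. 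This yields a deviation term $B\sqrt{\log(2/\rho)/(2N)}\le B\sqrt{\log(4/\rho)/(2N)}$.
\item Take a union bound over the two events and sum the deviation terms.
\end{enumerate}

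The main obstacle is bookkeeping the constants so that all deviation terms total at most $4B\sqrt{\log(4/\rho)/(2N)}$. Step 1 contributes roughly $3B$ times this rate, since the range $2B$ is combined with the standard factor $3/2$ from the empirical-complexity version. Step 2 contributes $B$ times the rate, giving exactly $4B$ in total. If the constants from the standard theorem come out slightly larger under a different normalization, I would fall back on an alternative split. In that split, Hoeffding is applied to the reference term at level $\rho/4$, and the loose slack in $\log(4/\rho)$ versus $\log(2/\rho)$ absorbs the difference.
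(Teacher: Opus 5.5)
\paragraph{Verdict: the route does not reach the stated constant.} Your decomposition is legitimate, but the constants in Step~1 are miscounted. For a class whose functions have range width $M$, both McDiarmid applications have bounded differences $M/N$, so the empirical-complexity bound reads
\begin{equation*}
\sup_{h}\bigl(\mathbb E h-\widehat{\mathbb E}_N h\bigr)\le 2\,\widehat{\mathfrak R}_{N}+M\sqrt{\frac{\log(1/\delta_1)}{2N}}+2M\sqrt{\frac{\log(1/\delta_2)}{2N}} .
\end{equation*}
The second deviation term is doubled because it enters through $2\,\mathfrak R_N\le 2\,\widehat{\mathfrak R}_N+\cdots$. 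So the standard factor is $3$, not $3/2$, and nothing justifies your extra $\tfrac12$.

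With $M=2B$ and $\delta_1=\delta_2=\rho/4$, Step~1 costs $6B\sqrt{\log(4/\rho)/(2N)}$. Step~2 adds $B\sqrt{\log(2/\rho)/(2N)}$, so you end near $7B$ times the rate rather than $4B$. Redistributing confidence among the events only changes the arguments of the logarithms. Your fallback therefore cannot absorb an excess in the leading constant.

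\paragraph{Where the loss comes from and how to fix it.} The loss comes entirely from splitting off the reference. The difference $h$ has range $[-B,B]$, of width $2B$, while $\phi_y(\boldsymbol{\ell}^{\boldsymbol{\theta}}_{\mathbf{x}})$ alone has width $B$. The split also forces the extra Hoeffding term.

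The missing observation is that subtracting a fixed function does not change the empirical Rademacher complexity. The reference terms can be pulled out of the supremum, and $\mathbb E_{\boldsymbol{\sigma}}\frac1N\sum_i\sigma_i\phi_{y_i}(\boldsymbol{\ell}^{0}_{\mathbf{x}_i})=0$. Hence, for the usual definition without absolute values, $\widehat{\mathfrak R}_N(\mathcal H^{\varepsilon})$ equals the complexity of the loss class $\{(\mathbf{x},y)\mapsto\phi_y(\boldsymbol{\ell}^{\boldsymbol{\theta}}_{\mathbf{x}}):\boldsymbol{\theta}\in\Omega^{\varepsilon}\}$. With absolute values it only upper-bounds it, which suffices.

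Apply the standard bound directly to that loss class, whose range is $[0,B]$. Equivalently, run McDiarmid on $\sup_{\boldsymbol{\theta}}\bigl(R(\boldsymbol{\theta})-\widehat R_N(\boldsymbol{\theta})\bigr)$, whose bounded differences are $B/N$. This gives $R(\boldsymbol{\theta})\le\widehat R_N(\boldsymbol{\theta})+2\,\widehat{\mathfrak R}_N(\mathcal H^{\varepsilon})+3B\sqrt{\log(2/\rho)/(2N)}$, which lies below the stated bound and needs no Hoeffding step. The paper states the theorem without proof, citing only the standard Rademacher results, and this merged argument is what makes such constants come out.

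\paragraph{A caveat shared with the paper.} $\Omega^{\varepsilon}$ is defined through $\mathcal L^{\mathrm{img}}$, which is an average over the training images, so $\mathcal H^{\varepsilon}$ depends on the sample. Symmetrization and McDiarmid both require a class fixed before sampling. You should state explicitly that you treat the localization as data-independent.
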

	\subsection{Adaptive Bound via Peeling}
	Let $H=\lceil\log_2 N\rceil$ and set $\varepsilon_{\nu}=2^{-\nu}$ for $\nu=0,1,\dots,H$. Define the peeling layers $\Omega^{\nu}=\{\boldsymbol{\theta}\in\Omega^{\mathrm{base}}:\varepsilon_{\nu+1}<\mathcal L^{\mathrm{con}}(\boldsymbol{\theta})\le \varepsilon_{\nu}\}$. Applying Theorem~\ref{thm:app_gen} to each $\Omega^{\varepsilon_{\nu}}$ with confidence level $\rho/(H+1)$ and taking a union bound over $\nu$ yields the following result. With probability at least $1-\rho$, for all $\boldsymbol{\theta}\in\Omega^{\mathrm{base}}$ satisfying $\mathcal L^{\mathrm{con}}(\boldsymbol{\theta})\le 1$,
	\begin{equation*}
		R(\boldsymbol{\theta})
		\le
		\widehat R_{N}(\boldsymbol{\theta})
		+
		\frac{24\tau}{\sqrt N}\sqrt{4C\,\mathcal L^{\mathrm{con}}(\boldsymbol{\theta})}\sqrt{D^{\boldsymbol{\theta}}\log\left(\frac{3e\Lambda^{\mathcal{H}} R^{\boldsymbol{\theta}}}{2\tau\sqrt{4C\,\mathcal L^{\mathrm{con}}(\boldsymbol{\theta})}}\right)}
		+
		4B\sqrt{\frac{\log\left(4(H+1)/\rho\right)}{2N}}.
	\end{equation*}
	This bound shows that the leading term scales with $\sqrt{\mathcal L^{\mathrm{con}}(\boldsymbol{\theta})/N}$ up to logarithmic factors.
	\definecolor{GrayColor}{gray}{0.85}
	\begin{table*}[t!]
		\caption{Comparison of ManiPT performance with various methods for each dataset in few-shot setting. Best and second-best results are highlighted in bold and underline, respectively.}
		\label{tab:fewshot}
		\renewcommand{\arraystretch}{0.9}
		\scriptsize
		\centering
		\begin{tabular}{>{\centering}p{8em}p{8em}|>{\centering}p{5em}>{\centering}p{5em}>{\centering}p{5em}>{\centering}p{5em}c}
			\toprule
			{\textbf{Dataset}} & {\textbf{Method}} & {{\textbf{1 shot}}} & {{\textbf{2 shots}}} & {{\textbf{4 shots}}} & {{\textbf{8 shots}}} & {{\textbf{16 shots}}} \\
			\midrule
			\multicolumn{1}{c}{\multirow{6}[2]{*}{{Average}}} & {Linear probe CLIP} & 45.83 & 57.98 & 68.01 & 74.47 & 78.79 \\
			& {CoOp} & 67.56 & 70.65 & 74.02 & 76.98 & 79.89 \\
			& {CoCoOp} & 66.79 & 67.65 & 71.21 & 72.96 & 74.90 \\
			& {MaPLe} & 69.27 & 72.58 & 75.37 & 78.89 & 81.79 \\
			& {PromptSRC} & \underline{72.32} & \underline{75.29} & \underline{78.35} & \underline{80.69} & \underline{82.87} \\
			\rowcolor{GrayColor} & {ManiPT} & \textbf{74.75} & \textbf{76.77} & \textbf{80.01} & \textbf{82.12} & \textbf{84.64} \\
			\midrule
			\multicolumn{1}{c}{\multirow{6}[2]{*}{{ImageNet}}} & {Linear probe CLIP} & 32.13 & 44.88 & 54.85 & 62.23 & 67.31 \\
			& {CoOp} & 66.33 & 67.07 & 68.73 & 70.63 & 71.87 \\
			& {CoCoOp} & \underline{69.43} & \underline{69.78} & \underline{70.39} & \underline{70.63} & \underline{70.83} \\
			& {MaPLe} & 62.67 & 65.10 & 67.70 & 70.30 & 72.33 \\
			& {PromptSRC} & 68.13 & 69.77 & 71.07 & \underline{72.33} & \underline{73.17} \\
			\rowcolor{GrayColor} & {ManiPT} & \textbf{70.76} & \textbf{71.41} & \textbf{72.35} & \textbf{73.70} & \textbf{75.13} \\
			\midrule
			\multicolumn{1}{c}{\multirow{6}[2]{*}{{Caltech101}}} & {Linear probe CLIP} & 79.88 & 89.01 & 92.05 & 93.41 & 95.43 \\
			& {CoOp} & 92.60 & 93.07 & 94.40 & 94.37 & 95.57 \\
			& {CoCoOp} & \underline{93.83} & \underline{94.82} & \underline{94.98} & \underline{95.04} & 95.16 \\
			& {MaPLe} & 92.57 & 93.97 & 94.43 & 95.20 & \underline{96.00} \\
			& {PromptSRC} & \underline{93.83} & 94.53 & 95.27 & \underline{95.67} & \underline{96.07} \\
			\rowcolor{GrayColor} & {ManiPT} & \textbf{95.38} & \textbf{95.29} & \textbf{96.32} & \textbf{96.55} & \textbf{96.91} \\
			\midrule
			\multicolumn{1}{c}{\multirow{6}[2]{*}{{OxfordPets}}} & {Linear probe CLIP} & 44.06 & 58.37 & 71.17 & 78.36 & 85.34 \\
			& {CoOp} & 90.37 & 89.80 & 92.57 & 91.27 & 91.87 \\
			& {CoCoOp} & \underline{91.27} & \underline{92.64} & \underline{92.81} & \underline{93.45} & 93.34 \\
			& {MaPLe} & 89.10 & 90.87 & 91.90 & 92.57 & 92.83 \\
			& {PromptSRC} & \underline{92.00} & 92.50 & \textbf{93.43} & \underline{93.50} & \underline{93.67} \\
			\rowcolor{GrayColor} & {ManiPT} & \textbf{92.53} & \textbf{93.27} & \textbf{93.60} & \textbf{93.87} & \textbf{94.12} \\
			\midrule
			\multicolumn{1}{c}{\multirow{6}[2]{*}{{StanfordCars}}} & {Linear probe CLIP} & 35.66 & 50.28 & 63.38 & 73.67 & 80.44 \\
			& {CoOp} & 67.43 & 70.50 & 74.47 & 79.30 & 83.07 \\
			& {CoCoOp} & 67.22 & 68.37 & 69.39 & 70.44 & 71.57 \\
			& {MaPLe} & 66.60 & 71.60 & 75.30 & 79.47 & 83.57 \\
			& {PromptSRC} & \underline{69.40} & \underline{73.40} & \underline{77.13} & \underline{80.97} & \underline{83.83} \\
			\rowcolor{GrayColor} & {ManiPT} & \textbf{71.72} & \textbf{74.87} & \textbf{80.42} & \textbf{83.05} & \textbf{86.59} \\
			\midrule
			\multicolumn{1}{c}{\multirow{6}[2]{*}{{Flowers102}}} & {Linear probe CLIP} & 69.74 & 85.07 & 92.02 & \underline{96.10} & \underline{97.37} \\
			& {CoOp} & 77.53 & 87.33 & 92.17 & 94.97 & 97.07 \\
			& {CoCoOp} & 72.08 & 75.79 & 78.40 & 84.30 & 87.84 \\
			& {MaPLe} & 83.30 & 88.93 & 92.67 & 95.80 & 97.00 \\
			& {PromptSRC} & \underline{85.93} & \underline{91.17} & \underline{93.87} & 96.27 & \underline{97.60} \\
			\rowcolor{GrayColor} & {ManiPT} & \textbf{86.48} & \textbf{91.60} & \textbf{95.19} & \textbf{96.70} & \textbf{98.65} \\
			\midrule
			\multicolumn{1}{c}{\multirow{6}[2]{*}{{Food101}}} & {Linear probe CLIP} & 43.96 & 61.51 & 73.19 & 79.79 & 82.90 \\
			& {CoOp} & 84.33 & 84.40 & 84.47 & 82.67 & 84.20 \\
			& {CoCoOp} & \underline{85.65} & \underline{86.22} & \textbf{86.88} & 86.97 & 87.25 \\
			& {MaPLe} & 80.50 & 81.47 & 81.77 & 83.60 & 85.33 \\
			& {PromptSRC} & 84.87 & 85.70 & 86.17 & \underline{86.90} & \underline{87.50} \\
			\rowcolor{GrayColor} & {ManiPT} & \textbf{86.24} & \textbf{86.63} & \textbf{86.90} & \textbf{87.60} & \textbf{87.92} \\
			\midrule
			\multicolumn{1}{c}{\multirow{6}[2]{*}{{FGVCAircraft}}} & {Linear probe CLIP} & 19.61 & 26.41 & 32.33 & 39.35 & 45.36 \\
			& {CoOp} & 21.37 & 26.20 & 30.83 & 39.00 & 43.40 \\
			& {CoCoOp} & 12.68 & 15.06 & 24.79 & 26.61 & 31.21 \\
			& {MaPLe} & 26.73 & 30.90 & 34.87 & 42.00 & 48.40 \\
			& {PromptSRC} & \underline{27.67} & \underline{31.70} & \underline{37.47} & \underline{43.27} & \underline{50.83} \\
			\rowcolor{GrayColor} & {ManiPT} & \textbf{33.75} & \textbf{34.23} & \textbf{42.03} & \textbf{49.82} & \textbf{56.88} \\
			\midrule
			\multicolumn{1}{c}{\multirow{6}[2]{*}{{SUN397}}} & {Linear probe CLIP} & 41.58 & 53.70 & 63.00 & 69.08 & 73.28 \\
			& {CoOp} & 66.77 & 66.53 & 69.97 & 71.53 & 74.67 \\
			& {CoCoOp} & \underline{68.33} & 69.03 & 70.21 & 70.84 & 72.15 \\
			& {MaPLe} & 64.77 & 67.10 & \underline{70.67} & \underline{73.23} & \underline{75.53} \\
			& {PromptSRC} & 69.67 & \underline{71.60} & \underline{74.00} & \underline{75.73} & \underline{77.23} \\
			\rowcolor{GrayColor} & {ManiPT} & \textbf{70.68} & \textbf{72.61} & \textbf{74.91} & \textbf{76.51} & \textbf{78.06} \\
			\midrule
			\multicolumn{1}{c}{\multirow{6}[2]{*}{{DTD}}} & {Linear probe CLIP} & 34.59 & 40.76 & 55.71 & 63.46 & 69.96 \\
			& {CoOp} & 50.23 & 53.60 & 58.70 & 64.77 & 69.87 \\
			& {CoCoOp} & 48.54 & 52.17 & 55.04 & 58.89 & 63.04 \\
			& {MaPLe} & 52.13 & 55.50 & 61.00 & 66.50 & 71.33 \\
			& {PromptSRC} & \underline{56.23} & \underline{59.97} & \underline{65.53} & \underline{69.87} & \underline{72.73} \\
			\rowcolor{GrayColor} & {ManiPT} & \textbf{60.11} & \textbf{61.17} & \textbf{68.52} & \textbf{71.54} & \textbf{76.03} \\
			\midrule
			\multicolumn{1}{c}{\multirow{6}[2]{*}{{EuroSAT}}} & {Linear probe CLIP} & 49.23 & 61.98 & 77.09 & 84.43 & 87.21 \\
			& {CoOp} & 54.93 & 65.17 & 70.80 & 78.07 & 84.93 \\
			& {CoCoOp} & 55.33 & 46.74 & 65.56 & 68.21 & 73.32 \\
			& {MaPLe} & \underline{71.80} & \underline{78.30} & \underline{84.50} & \underline{87.73} & \underline{92.33} \\
			& {PromptSRC} & 73.13 & 79.37 & \textbf{86.90} & \textbf{88.80} & \textbf{92.43} \\
			\rowcolor{GrayColor} & {ManiPT} & \textbf{76.54} & \textbf{83.64} & \underline{86.77} & \underline{88.02} & \underline{92.40} \\
			\midrule
			\multicolumn{1}{c}{\multirow{6}[2]{*}{{UCF101}}} & {Linear probe CLIP} & 53.66 & 65.78 & 73.28 & 79.34 & 82.11 \\
			& {CoOp} & 71.23 & 73.43 & 77.10 & 80.20 & 82.23 \\
			& {CoCoOp} & 70.30 & 73.51 & 74.82 & 77.14 & 78.14 \\
			& {MaPLe} & 71.83 & 74.60 & 78.47 & 81.37 & 85.03 \\
			& {PromptSRC} & \underline{74.80} & \underline{78.50} & \underline{81.57} & \underline{84.30} & \underline{86.47} \\
			\rowcolor{GrayColor} & {ManiPT} & \textbf{78.09} & \textbf{79.70} & \textbf{83.12} & \textbf{85.93} & \textbf{88.35} \\
			\bottomrule
		\end{tabular}
	\end{table*}
	\section{Appendix 3: More Experimental Results}
	\label{app:more_exp}
	\subsection{Few-shot experiments}
	\label{app:fewshot_results}
	Table~\ref{tab:fewshot} reports the numerical results for few-shot classification across all 11 datasets under 1, 2, 4, 8, and 16 shot settings. The results show that ManiPT consistently achieves higher average accuracy than the compared methods across all shot counts. In the 1-shot regime, ManiPT attains the highest average accuracy and exceeds methods such as PromptSRC and MaPLe. As the number of shots increases, ManiPT continues to maintain the lead. On fine-grained datasets such as FGVCAircraft and on texture datasets such as DTD, ManiPT achieves clear gains over baseline methods, which indicates that these datasets are more sensitive to overfitting. These observations support our core insight: while the consistency constraints prevent the feature adaptation from drifting into invalid regions under extreme data scarcity (e.g., 1-shot), the structural bias is essential to guide the limited updates along transferable directions, ensuring that the model learns robust features rather than overfitting to noise.
	\subsection{Ablation Studies}
	\label{app:ablation_results}
	\subsubsection{Structural Bias Strategy}
	\begin{table}[h]
		\caption{Comparison of different structural bias strategies. Our additive structure ensures that the adapted features are geometrically anchored by the pretrained manifold, achieving the best trade-off between base retention and novel generalization.}
		\label{tab:structural_bias_comparison}
		\centering
		\scalebox{0.85}{
			\begin{tabular}{l c c c}
				\toprule
				\textbf{Strategy} & \textbf{Base} & \textbf{New} & \textbf{HM} \\
				\midrule
				Direct Tuning & 82.85 & 76.50 & 79.54 \\
				Gate             & 85.35 & 78.40 & 81.73 \\
				\rowcolor{gray!10} Additive & \textbf{85.82} & \textbf{78.67} & \textbf{82.09} \\
				\bottomrule
			\end{tabular}
		}
	\end{table}
	In Table~\ref{tab:structural_bias_comparison}, we compare three structural bias strategies. Direct Tuning performs classification using only the prompt-branch features (visual and text), i.e., $\mathbf{f}^{\mathrm{vis,prompt}}$ and $\mathbf{f}^{\mathrm{txt,prompt}}$, and removes the fusion with the frozen features $\mathbf{f}^{\mathrm{vis,clip}}$ and $\mathbf{f}^{\mathrm{txt,clip}}$, which is the standard practice in CoOp and MaPLe. Gate introduces a learnable coefficient $\alpha$ to interpolate between the frozen and prompt features for each modality, e.g., $\alpha \mathbf{f}^{\mathrm{vis,clip}} + (1-\alpha) \mathbf{f}^{\mathrm{vis,prompt}}$ on the visual side (and analogously for text), to balance pretrained retention and task adaptation. In the few-shot regime, the learnable gate can overfit by pushing $\alpha$ toward an extreme, discarding the frozen branch and weakening the anchoring effect. Our additive strategy employs equal-weight fusion with re-normalization, which physically enforces incremental corrections. By treating the learned prompt as a perturbation rather than a replacement, the structural bias ensures the final decision features remain geometrically anchored to the frozen representations. This mechanism explicitly biases the feature adaptation toward transferable directions inherent in the pretrained backbone, achieving the best trade-off between base accuracy and novel generalization, as reflected by the highest harmonic mean.
	\subsubsection{Text Anchor Source}
	\begin{table}[h]
		\caption{Ablation study on different text anchor sources. While handcrafted templates offer a marginal improvement over "a photo of a \{class\}" template, LLM descriptions provide rich semantic knowledge, significantly boosting generalization on novel classes.}
		\label{tab:text_anchor_ablation}
		\centering
		\scalebox{0.85}{
			\begin{tabular}{l c c c}
				\toprule
				\textbf{Anchor Source} & \textbf{Base} & \textbf{New} & \textbf{HM} \\
				\midrule
				a photo of a \{class\} \ & 85.65 & 77.58 & 81.42 \\
				Handcrafted Templates & 85.70 & 77.80 & 81.56 \\
				\rowcolor{gray!10} LLM Descriptions & \textbf{85.82} & \textbf{78.67} & \textbf{82.09} \\
				\bottomrule
			\end{tabular}
		}
	\end{table}
	Table~\ref{tab:text_anchor_ablation} presents the ablation study on different text anchor sources. Using an ensemble of handcrafted templates offers only a marginal improvement over the single template ``a photo of a \{class\}''. This suggests that simply increasing the quantity of templates does not guarantee a better semantic prototype. In contrast, utilizing LLM descriptions yields significant improvements. This indicates that leveraging rich semantic knowledge from LLMs constructs a robust geometric center for the textual consistency constraint, which effectively defines a more accurate validity neighborhood for the textual features, thereby enhancing generalization.
	\subsubsection{Complementarity of Visual and Textual Consistency}
	\begin{table}[h]
		\caption{Ablation study on the complementary roles of visual and textual consistency constraints. The results show that $\mathcal{L}^{\mathrm{img}}$ and $\mathcal{L}^{\mathrm{txt}}$ are complementary. While each single constraint improves over the baseline, combining them yields the best performance across all metrics, confirming their synergistic effect.}
		\label{tab:loss_component_ablation}
		\centering
		\scalebox{0.85}{
			\begin{tabular}{c c c c c}
				\toprule
				\textbf{$\mathcal{L}^{\mathrm{img}}$} & \textbf{$\mathcal{L}^{\mathrm{txt}}$} & \textbf{Base} & \textbf{New} & \textbf{HM} \\
				\midrule
				$\times$ & $\times$ & 84.85 & 68.32 & 75.71 \\
				\checkmark & $\times$ & 85.45 & 73.15 & 78.82 \\
				$\times$ & \checkmark & 85.15 & 74.80 & 79.64 \\
				\rowcolor{gray!10} \checkmark & \checkmark & \textbf{85.82} & \textbf{78.67} & \textbf{82.09} \\
				\bottomrule
			\end{tabular}
		}
	\end{table}
	Table~\ref{tab:loss_component_ablation} reports the individual contributions of the visual and textual consistency constraints. The results indicate that the two constraints complement each other. Applying visual consistency alone stabilizes the visual manifold and improves performance on base classes by reducing feature distortion during adaptation. Applying textual consistency alone provides semantic guidance that is more effective for generalizing to novel classes. The full model that employs both constraints achieves the best performance across all metrics. This pattern shows that the visual constraint limits geometric drift in the embedding space, while the textual constraint anchors the semantics. Collectively, they confine the feature adaptation within the robust geometric structure defined by the pretrained multimodal alignment.
	\subsubsection{Comparison with Standard Regularization}
	\begin{table}[h]
		\caption{Comparison with standard regularization methods. L1 and L2 denote constraining the distance between prompt-adapted features and pretrained features via L1 and L2 distances, respectively.}
		\label{tab:regularization_comparison}
		\centering
		\scalebox{0.85}{
			\begin{tabular}{l c c c}
				\toprule
				\textbf{Regularization Method} & \textbf{Base} & \textbf{New} & \textbf{HM} \\
				\midrule
				No Constraint      & 84.85 & 68.32 & 75.71 \\
				L1 & 83.50 & 72.45 & 77.58 \\
				L2 & 78.95 & 76.10 & 77.50 \\
				\rowcolor{gray!10} Cosine & \textbf{85.82} & \textbf{78.67} & \textbf{82.09} \\
				\bottomrule
			\end{tabular}
		}
	\end{table}
	Table~\ref{tab:regularization_comparison} compares ManiPT with standard regularization techniques. The results show that Euclidean distance penalties (L1 and L2) lead to a noticeable drop in base class performance. In CLIP, classification is driven by cosine similarity between normalized features, so predictions depend on feature direction rather than magnitude. Applying Euclidean penalties in the non-normalized space introduces task-irrelevant magnitude gradients that can interfere with directional alignment. Cosine consistency directly regularizes angular alignment, which is equivalent to confining the features within a hyperspherical neighborhood of the pretrained manifold. This geometric constraint is necessary to prevent the feature adaptation from deviating into orthogonal shortcut subspaces, which Euclidean penalties fail to address effectively in the normalized feature space.
	\subsubsection{Disentangling Structural Bias Components}
	\begin{table}[h]
		\caption{Ablation study on the components of the structural bias.}
		\label{tab:component_disentanglement}
		\centering
		\scalebox{0.85}{
			\begin{tabular}{l c c c}
				\toprule
				\textbf{Variant} & \textbf{Base} & \textbf{New} & \textbf{HM} \\
				\midrule
				ManiPT (Full) & \textbf{85.82} & \textbf{78.67} & \textbf{82.09} \\
				w/o Frozen Text ($\mathbf{f}^{\mathrm{txt,clip}}$) & 83.45 & 77.25 & 80.23 \\
				w/o Frozen Visual ($\mathbf{f}^{\mathrm{vis,clip}}$) & 85.40 & 78.25 & 81.67 \\
				\bottomrule
			\end{tabular}
		}
	\end{table}
	Table~\ref{tab:component_disentanglement} analyzes the impact of explicitly incorporating frozen CLIP features into the final representation. We observe a distinct asymmetry between modalities. On the textual side, removing the frozen text feature leads to a significant degradation, particularly in Base accuracy. This confirms that prompt-learned text features alone are prone to optimization instability, and the explicit inclusion of the frozen prototype acts as the necessary basis for incremental corrections, ensuring the final representation remains grounded. In contrast, removing the frozen visual feature results in only a marginal drop. This suggests that our visual consistency constraint is highly efficient, successfully restricting the prompt-adapted visual features to the pretrained manifold.
	\subsubsection{Constraint Target: Prompt vs. Final Features}
	\begin{table}[h]
		\caption{Ablation study on the target of consistency constraints. We compare applying the constraints to the final fused representations versus applying them only to the intermediate prompt features. The results show that constraining the final representation yields better performance.}
		\label{tab:constraint_target}
		\centering
		\scalebox{0.85}{
			\begin{tabular}{l c c c}
				\toprule
				Constraint Target & Base & New & HM \\
				\midrule
				Prompt Features (prompt-branch $\mathbf{f}^{\mathrm{vis,prompt}}$, $\mathbf{f}^{\mathrm{txt,prompt}}$) & 84.65 & 77.45 & 80.89 \\
				\rowcolor{gray!10} Final Features (fused $\mathbf{f}^{\mathrm{vis,final}}$, $\mathbf{f}^{\mathrm{txt,final}}$) & \textbf{85.82} & \textbf{78.67} & \textbf{82.09} \\
				\bottomrule
			\end{tabular}
		}
	\end{table}
	Table~\ref{tab:constraint_target} investigates whether the consistency constraints should be applied to the final fused representations or the intermediate prompt features. The results demonstrate that constraining the final features achieves superior performance. We attribute this to the alignment between the regularization objective and the decision boundary. Since the classification logits are computed using $\mathbf{f}^{\mathrm{vis,final}}$ and $\mathbf{f}^{\mathrm{txt,final}}$, applying consistency constraints directly to these final features ensures that the geometric properties of the actual decision variables are preserved. In contrast, constraining only the prompt-branch features $\mathbf{f}^{\mathrm{vis,prompt}}$ and $\mathbf{f}^{\mathrm{txt,prompt}}$ creates an optimization misalignment: it restricts the direction of the learnable update rather than the final output. This overly restricts the model's plasticity, preventing it from making necessary incremental corrections. By constraining the final fused features, we ensure that the resulting feature adaptation remains valid within the manifold neighborhood while allowing the prompt components sufficient freedom to adjust along transferable directions.
	\subsubsection{Initialization Robustness}
	\begin{table}[h]
		\caption{Ablation study on initialization strategies. We compare the Normal initialization with Kaiming Uniform initialization. The results indicate that ManiPT is relatively robust to initialization strategies, though Normal initialization yields slightly better performance by avoiding excessive initial variance.}
		\label{tab:init_ablation}
		\centering
		\scalebox{0.85}{
			\begin{tabular}{l c c c}
				\toprule
				Initialization & Base & New & HM \\
				\midrule
				Kaiming Uniform & 85.45 & 78.10 & 81.61 \\
				\rowcolor{gray!10} Normal (std=0.02) & \textbf{85.82} & \textbf{78.67} & \textbf{82.09} \\
				\bottomrule
			\end{tabular}
		}
	\end{table}
	Table~\ref{tab:init_ablation} compares the performance of different initialization strategies. The results show that ManiPT is relatively robust to the choice of initialization, with the harmonic mean dropping only slightly from 82.09\% to 81.61\% when switching to Kaiming Uniform. However, Normal initialization consistently yields better results. This is likely because Kaiming initialization is designed for training deep networks from scratch and introduces a larger initial variance. In the context of prompt tuning with a frozen backbone, this aggressive initialization can cause the feature adaptation to diverge early into dataset-specific local minima, leading to potential overfitting to base classes and slightly degrading the transferability to novel classes.
	\subsubsection{Modality-Specific Deep Prompting}
	\begin{table}[h]
		\caption{Ablation study on modality-specific deep prompting. We compare our dual-branch design with single-branch deep prompting variants namely Visual-only and Textual-only. Vis-only and Txt-only denote applying deep prompting, consistency constraints, and structural bias exclusively to the image or text encoder respectively.}
		\label{tab:modality_ablation}
		\centering
		\scalebox{0.85}{
			\begin{tabular}{l c c c}
				\toprule
				Method & Base & New & HM \\
				\midrule
				Txt-only Deep Prompting & 85.45 & 69.40 & 76.60 \\
				Vis-only Deep Prompting & 84.20 & 74.85 & 79.25 \\
				\rowcolor{gray!10} ManiPT & 85.82 & 78.67 & 82.09 \\
				\bottomrule
			\end{tabular}
		}
	\end{table}
	Table~\ref{tab:modality_ablation} investigates the effectiveness of dual-branch deep prompting by comparing it with variants where deep prompting, consistency constraints, and structural bias are applied to a single modality. When restricted to the text encoder, the model exhibits signs of overfitting as it maintains high accuracy on base classes but the generalization to novel classes lags behind the full model. This suggests that adapting only the text encoder tends to optimize decision boundaries for seen classes without sufficiently aligning them with the generalizable visual manifold. Conversely, adapting only the image encoder yields better generalization than the text-only variant reflecting the inherent transferability of visual features yet it still falls short of the performance achieved by the dual-branch design. By coordinating adaptation across both modalities, ManiPT achieves the best overall performance.
	\subsection{Hyperparameter Sensitivity}
	\label{app:context_sensitivity}
	\subsection{Sensitivity to Consistency Constraint}
	\begin{figure}[h]
		\centering
		\includegraphics[width=0.3\linewidth]{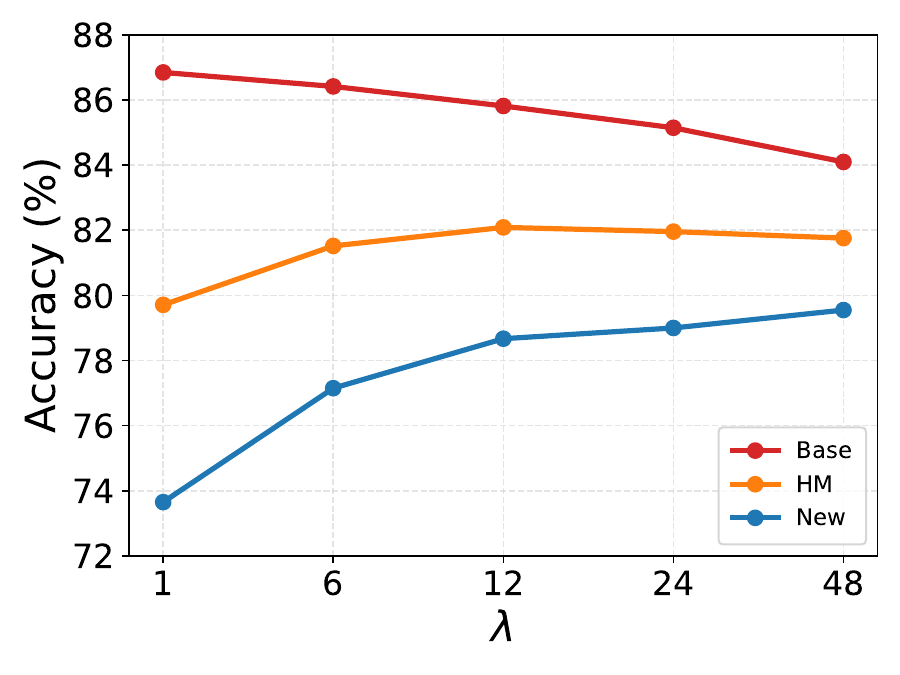}
		\caption{Sensitivity analysis of the consistency constraint weight.}
		\label{fig:ablation_lambda}
	\end{figure}
	We vary the weight $\lambda$ in Eq.~(\ref{eq:total_loss}). Figure~\ref{fig:ablation_lambda} shows a clear trade off. When $\lambda$ is too small, the consistency constraint is insufficient to constrain feature adaptation, leading to manifold drift. When $\lambda$ is too large, the adaptation is over-confined to initial geometry, restricting the model's ability to adjust along transferable directions for task adaptation.
	\subsection{Sensitivity to Context Length}
	\begin{figure}[h]
		\centering
		\includegraphics[width=0.3\linewidth]{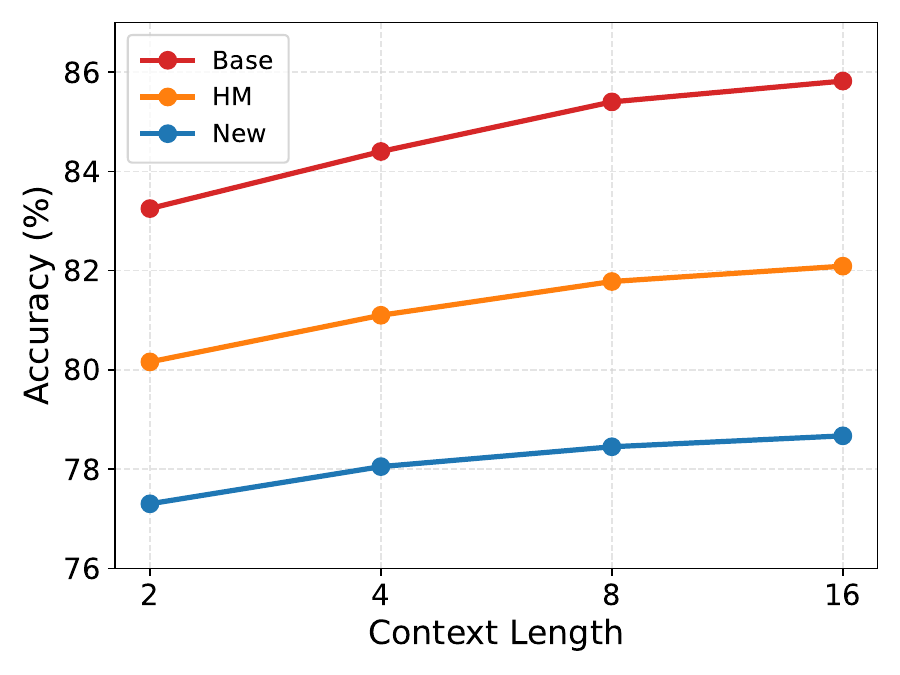}
		\caption{Sensitivity analysis of the context length. Unlike standard prompt tuning where increasing parameters often leads to overfitting, ManiPT effectively utilizes the increased capacity to boost both Base and Novel performance.}
		\label{fig:context_length}
	\end{figure}
	Figure~\ref{fig:context_length} illustrates the impact of context length. We observe a consistent improvement in both Base and Novel accuracy as $m$ increases from 2 to 16. Typically, a larger context length increases the risk of overfitting to base classes, thereby hurting novel class generalization. However, ManiPT breaks this trend. This is due to the synergy between the manifold consistency constraints and the structural bias. The consistency constraints confine the feature adaptation within the pretrained geometric neighborhood, while the structural bias, via a geometric contraction mechanism, enforces incremental corrections. This dual safeguard enables the model to safely exploit the increased capacity to capture richer domain-specific semantics without overfitting to shortcut subspaces as capacity grows.
	
	\subsection{Training and Inference sEfficiency}
	\label{app:efficiency}
	\begin{table}[h]
		\caption{Training and inference efficiency comparison. OOM denotes out of memory.}
		\label{tab:efficiency_comparison}
		\centering
		\scalebox{0.85}{
			\begin{tabular}{l c c c c}
				\toprule
				\textbf{Method} & \textbf{Training Time} & \textbf{Peak Memory} & \textbf{Infer. Latency} & \textbf{Params} \\
				& (s/epoch) & (GB) & (ms/img) & (M) \\
				\midrule
				Zero-shot CLIP & N/A & N/A & 3.42 & 0.00 \\
				CoOp           & 29.12 & 8.33 & 3.50 & 0.01 \\
				CoCoOp         & 836.08 & 8.25 & 39.97 & 0.04 \\
				MaPLe          & 241.51 & 8.30 & 3.83 & 3.56 \\
				PromptSRC      & 248.49 & 8.63 & 3.87 & 0.05 \\
				CoPrompt       & 255.22 & 8.84 & 4.07 & 5.01 \\
				TAC            & 131.33 & 10.46 & 4.20 & 0.73 \\
				TAP            & N/A & OOM & 3041.71 & 3.73 \\
				LLaMP          & N/A & OOM & 4.19 & 52.77 \\
				\rowcolor{gray!10} ManiPT & 34.23 & 10.23 & 6.96 & 0.25 \\
				\bottomrule
			\end{tabular}
		}
	\end{table}
	We evaluate the computational efficiency on an NVIDIA RTX 5090 GPU. The training time per epoch and peak GPU memory are measured on the ImageNet dataset under the base-to-novel setting, following the default configurations of each codebase. Note that for static prompt methods, the text features are pre-computed and cached offline. For inference latency, we report the per-image processing time averaged over 100 forward passes using a random input tensor of shape $1 \times 3 \times 224 \times 224$, following a warm-up phase of 50 iterations to ensure stable measurements. As shown in Table~\ref{tab:efficiency_comparison}, ManiPT achieves a superior balance between efficiency and performance. In terms of training, it is exceptionally fast, requiring only 34.23 seconds per epoch. This is comparable to the lightweight baseline CoOp and approximately 7$\times$ faster than MaPLe and 24$\times$ faster than CoCoOp. Furthermore, ManiPT demonstrates remarkable parameter efficiency by training only the context vectors inserted at each layer. It utilizes just 0.25 M parameters which is an order of magnitude fewer than other deep prompting methods like MaPLe with 3.56 M parameters. Regarding inference, ManiPT maintains a low latency of 6.96 ms/img. While slightly higher than single-branch methods due to the dual-branch fusion, it remains highly suitable for real-time deployment and is significantly more efficient than instance-conditional methods like CoCoOp. It is also worth noting that TAP and LLaMP require substantial GPU memory, often exceeding the capacity of commodity hardware. In particular, TAP's dynamic caption retrieval mechanism leads to extremely high inference latency, making it ill-suited for real-time deployment.
	\section{Appendix 4: Manifold Drift Metric}
	\label{app:drift_computation}
	\subsection{Definition}
	\label{app:drift_definition}
	In this section, we provide the detailed computation procedure for the manifold drift metric. The metric quantifies the degree to which the features learned by prompt tuning deviate from the intrinsic geometric structure established by the pretrained CLIP model.
	
	Let $\{\mathbf{x}_i\}_{i=1}^{N}$ denote the set of evaluation samples. Here, $N$ denotes the number of evaluation samples. We extract two sets of normalized feature representations. Pretrained features $\mathbf{Z}$ are generated by the frozen CLIP encoder. Let $\mathbf{Z} \in \mathbb{R}^{N \times d}$ denote the matrix whose $i$-th row is $\bigl(\mathbf{z}^{\mathrm{vis}}_{\mathbf{x}_i}\bigr)^\top$. Prompt-tuned features $\mathbf{H}$ are generated by the final fused representations. Let $\mathbf{H} \in \mathbb{R}^{N \times d}$ denote the matrix whose $i$-th row is $\bigl(\mathbf{f}^{\mathrm{vis}}_{\mathbf{x}_i}\bigr)^\top$.
	
	Since the true nonlinear manifold of the pretrained model is analytically inaccessible, we approximate its local tangent space using PCA on the pretrained feature point cloud $\mathbf{Z}$.
	
	We first compute the centroid of the pretrained features
	\begin{equation}
		\boldsymbol{\mu} = \frac{1}{N} \sum_{i=1}^N \mathbf{z}^{\mathrm{vis}}_{\mathbf{x}_i}.
		\label{eq:drift_mu}
	\end{equation}
	Based on this centroid, we construct the centered data matrix
	with $\mathbf{1}\in\mathbb{R}^{N}$ denoting the all-ones vector.
	\begin{equation}
		\bar{\mathbf{Z}} = \mathbf{Z} - \mathbf{1}\boldsymbol{\mu}^\top.
		\label{eq:drift_center_pre}
	\end{equation}
	We then perform Singular Value Decomposition (SVD) on $\bar{\mathbf{Z}}$ to obtain the principal directions
	\begin{equation}
		\bar{\mathbf{Z}} = \mathbf{U} \mathbf{\Sigma} \mathbf{V}^\top,
		\label{eq:drift}
	\end{equation}
	where $\mathbf{U}\in\mathbb{R}^{N\times d}$ contains the left singular vectors, $\mathbf{\Sigma}\in\mathbb{R}^{d\times d}$ is diagonal with singular values, and $\mathbf{V} \in \mathbb{R}^{d \times d}$ contains the right singular vectors sorted by singular value magnitude. We define the pretrained principal subspace $\mathcal{M}^{\mathrm{pca}}$ as the subspace spanned by the $d^{\mathrm{pca}}$ leading principal components. The projection matrix onto this subspace is given by
	\begin{equation}
		\mathbf{P}^{\mathrm{pca}} = \mathbf{V}^{\mathrm{pca}} \left(\mathbf{V}^{\mathrm{pca}}\right)^\top,
		\label{eq:drift_proj}
	\end{equation}
	where $\mathbf{V}^{\mathrm{pca}} \in \mathbb{R}^{d \times d^{\mathrm{pca}}}$ denotes the first $d^{\mathrm{pca}}$ columns of $\mathbf{V}$. In our experiments, we set $d^{\mathrm{pca}}=64$ to capture the dominant semantic directions.
	
	Manifold drift is defined as the increase in the proportion of energy orthogonal to the principal subspace in the prompt-tuned features relative to the pretrained features. We define the ratio function $\mathcal{R}(\mathbf{Z})$ for a feature set $\mathbf{Z}$ relative to the pretrained center $\boldsymbol{\mu}$ and subspace $\mathbf{P}^{\mathrm{pca}}$
	\begin{equation}
		\mathcal{R}(\mathbf{Z}) =
		\frac{\sum_{i=1}^N \left\| \left[\mathbf{I}_d - \mathbf{P}^{\mathrm{pca}}\right]\left[\mathbf{z}_i - \boldsymbol{\mu}\right] \right\|_2^2}
		{\sum_{i=1}^N \left\| \mathbf{z}_i - \boldsymbol{\mu} \right\|_2^2 + \varepsilon^{\mathrm{num}}},
		\label{eq:drift_ratio}
	\end{equation}
	where $\mathbf{z}_i$ is the $i$-th row of $\mathbf{Z}$, and $\varepsilon^{\mathrm{num}}=10^{-12}$ is a numerical stability constant. The numerator represents the variance orthogonal to the pretrained manifold, and the denominator represents the total variance.
	
	With these definitions in place, we compute the manifold drift $\Delta$ as
	\begin{equation}
		\Delta = \mathcal{R}(\mathbf{H}) - \mathcal{R}(\mathbf{Z}).
		\label{eq:drift_delta}
	\end{equation}
	A positive value $\Delta>0$ indicates that prompt tuning has shifted the feature distribution into directions that were previously low variance in the pretrained model, signifying a deviation from the pretrained manifold. The computation procedure is summarized in Algorithm~\ref{alg:drift}.
	\begin{algorithm}[h]
		\caption{Computation of Manifold Drift $\Delta$}
		\label{alg:drift}
		\begin{algorithmic}[1]
			\STATE {\bfseries Input:} Pretrained features $\mathbf{Z}$, Prompt-tuned features $\mathbf{H}$, PCA rank $d^{\mathrm{pca}}$.
			\STATE {\bfseries Output:} Manifold drift $\Delta$.
			\STATE \# 1. Approximating the Pretrained Manifold
			\STATE Compute centroid $\boldsymbol{\mu}$ using Eq.~(\ref{eq:drift_mu})
			\STATE Construct the centered data matrix $\bar{\mathbf{Z}}$ using Eq.~(\ref{eq:drift_center_pre})
			\STATE Perform SVD on $\bar{\mathbf{Z}}$ to obtain the principal directions using Eq.~(\ref{eq:drift})
			\STATE Compute the projection matrix $\mathbf{P}^{\mathrm{pca}}$ using Eq.~(\ref{eq:drift_proj})
			\STATE \# 2. Computing the Ratio Function $\mathcal{R}(\mathbf{Z})$
			\STATE Construct the centered data matrix $\bar{\mathbf{H}} \leftarrow \mathbf{H} - \mathbf{1}\boldsymbol{\mu}^\top$
			\STATE Compute $\mathcal{R}(\mathbf{Z})$ using Eq.~(\ref{eq:drift_ratio})
			\STATE Compute $\mathcal{R}(\mathbf{H})$ using Eq.~(\ref{eq:drift_ratio})
			\STATE \# 3. Computing Drift
			\STATE Compute $\Delta$ using Eq.~(\ref{eq:drift_delta})
		\end{algorithmic}
	\end{algorithm}
	\subsection{Sensitivity to PCA Rank $d^{\mathrm{pca}}$}
	\label{app:sensitivity_dpca}
	\begin{table}[h]
		\centering
		\caption{The manifold drift metric $\Delta$ with respect to the PCA rank $d^{\mathrm{pca}}$ across all 11 datasets.}
		\label{tab:sensitivity_k_full}
		\scalebox{0.72}{
			\begin{tabular}{l | cccc | cccc}
				\toprule
				& \multicolumn{4}{c|}{MaPLe} & \multicolumn{4}{c}{ManiPT} \\
				Dataset & $d^{\mathrm{pca}}=32$ & $d^{\mathrm{pca}}=64$ & $d^{\mathrm{pca}}=128$ & $d^{\mathrm{pca}}=256$ & $d^{\mathrm{pca}}=32$ & $d^{\mathrm{pca}}=64$ & $d^{\mathrm{pca}}=128$ & $d^{\mathrm{pca}}=256$ \\
				\midrule
				ImageNet      & 0.1320 & 0.1250 & 0.1085 & 0.0750 & 0.0038 & 0.0045 & 0.0051 & 0.0055 \\
				Caltech101    & 0.0980 & 0.0910 & 0.0750 & 0.0480 & -0.0025 & -0.0021 & -0.0010 & 0.0005 \\
				OxfordPets    & 0.0890 & 0.0840 & 0.0680 & 0.0420 & 0.0008 & 0.0012 & 0.0015 & 0.0020 \\
				StanfordCars  & 0.1585 & 0.1520 & 0.1350 & 0.0950 & 0.0072 & 0.0078 & 0.0082 & 0.0085 \\
				Flowers102    & 0.1450 & 0.1380 & 0.1120 & 0.0750 & -0.0055 & -0.0048 & -0.0030 & -0.0010 \\
				Food101       & 0.0780 & 0.0720 & 0.0620 & 0.0450 & 0.0018 & 0.0023 & 0.0028 & 0.0032 \\
				SUN397        & 0.1350 & 0.1290 & 0.1100 & 0.0820 & 0.0035 & 0.0039 & 0.0042 & 0.0048 \\
				UCF101        & 0.1780 & 0.1710 & 0.1450 & 0.1020 & 0.0048 & 0.0055 & 0.0065 & 0.0075 \\
				DTD           & 0.1130 & 0.1139 & 0.1063 & 0.0669 & -0.0025 & -0.0019 & 0.0009 & 0.0024 \\
				EuroSAT       & 0.1698 & 0.1448 & 0.0961 & 0.0454 & 0.0095 & 0.0103 & 0.0084 & 0.0050 \\
				FGVCAircraft  & 0.1452 & 0.1619 & 0.1627 & 0.1076 & -0.0102 & -0.0037 & 0.0042 & 0.0048 \\
				\midrule
				Average       & 0.1310 & 0.1257 & 0.1073 & 0.0713 & 0.0010 & 0.0020 & 0.0035 & 0.0039 \\
				\bottomrule
			\end{tabular}
		}
	\end{table}
	Table~\ref{tab:sensitivity_k_full} presents the results of the manifold drift metric $\Delta$ with respect to different PCA ranks $d^{\mathrm{pca}} \in \{32, 64, 128, 256\}$. The rank $d^{\mathrm{pca}}$ determines the dimensionality of the reference subspace used to approximate the pretrained manifold. A smaller $d^{\mathrm{pca}}$ defines a stricter manifold constraint, capturing only the most dominant semantic directions, whereas a larger $d^{\mathrm{pca}}$ defines a more permissive subspace, encompassing subtle variations and potential noise. We observe three key phenomena that corroborate our theoretical analysis. First, metric sensitivity: for MaPLe, the drift metric generally decreases as $d^{\mathrm{pca}}$ increases. This behavior is physically consistent with our approximation calculation of manifold drift. As the dimensionality of the principal subspace increases, it covers a larger feature space, so even significantly drifted features are more likely to fall close to this expanded subspace, resulting in lower manifold drift metrics. Second, robustness of ManiPT: ManiPT exhibits remarkable stability across all ranks, with $\Delta$ consistently remaining near zero. This insensitivity indicates that our method constrains downstream task features to align with the intrinsic geometric structure of the frozen CLIP features. Whether measured against a strict $d^{\mathrm{pca}}=32$ or a permissive $d^{\mathrm{pca}}=256$ manifold approximation, ManiPT maintains high consistency, confirming that the learned prompts do not undergo orthogonal deviation during semantic adjustment. Third, domain-specific trends: the behavior of MaPLe reveals that overfitting manifests differently across domains. On general datasets such as Caltech101, the drift drops rapidly at $d^{\mathrm{pca}}=256$, suggesting that the deviation lies primarily in high-frequency variance directions. However, on fine-grained tasks like FGVCAircraft, the drift remains substantial even at $d^{\mathrm{pca}}=256$. This implies that without our imposed consistency constraints and structural bias, prompt tuning in fine-grained low-data regimes tends to drive features toward directions structurally orthogonal to the pretrained manifold structure, and these directions likely correspond to shortcut features that cannot be captured by the low-dimensional approximation of the pretrained distribution. 
	
	Based on these empirical observations, we adopt $d^{\mathrm{pca}}=64$ as the default value for our main experiments. It captures the dominant semantic components of the CLIP representation while being compact enough to sensitively detect semantic drift, whereas $d^{\mathrm{pca}}=32$ may be too restrictive, leading to manifold underfitting, and $d^{\mathrm{pca}}=256$ is too permissive, reducing the metric's discriminative power against noise-induced drift.
	\section{Appendix 5: Limitations}\label{limitations}
	Experimental results show that ManiPT generalizes across multiple settings. However, the framework has limitations inherent to its design. First, the structural bias, which enforces incremental corrections through dual-branch fusion, prioritizes geometric stability over raw inference speed, leading to increased latency compared to single-branch methods. Second, since LLM-generated descriptions serve as the geometric center for the textual consistency constraint, the effectiveness of this semantic anchoring depends on the quality of the external knowledge source, particularly in specialized narrow domains. Finally, the core premise of ManiPT is to confine the learned representations within the pretrained geometric neighborhood. While this prevents drift, it assumes that the pretrained manifold provides valid support for the downstream task. In scenarios with extreme distribution shifts where the optimal solution lies far outside the pretrained neighborhood, this geometric confinement might overly restrict the feature adaptation, necessitating a careful trade-off between manifold preservation and task-specific plasticity.
	\section{Appendix 6: Notation}
	\label{app:notation}
	We summarize the main symbols used in the paper for clarity and consistency.
	\begin{table}[h]
		\centering
		\caption{Notation for core sets, indices, and scalars.}
		\scalebox{0.68}{
			\begin{tabular}{ll}
				\toprule
				Symbol & Description \\
				\midrule
				$\mathcal{X}$, $\mathcal{Y}$ & Input image space and label space \\
				$\mathcal{C}=\{1,\dots,C\}$ & Class index set, $C$ denotes the number of classes \\
				$\mathcal{D}$ & Data distribution over $\mathcal{X}\times \mathcal{Y}$ \\
				$\mathcal{D}^{\mathrm{train}}=\{(\mathbf{x}_i,y_i)\}_{i=1}^N$ & Training dataset, $N$ denotes the number of samples \\
				$T^{\mathrm{all}},\,T^\star_c$ & Sets of all and class-specific token sequences for the text encoder \\
				$d$ & Embedding dimension of CLIP features \\
				$m$ & Number of learnable context vectors (prompt length) \\
				$n$ & Number of LLM-generated descriptions per class \\
				$K$ & Number of Transformer layers in the CLIP encoders \\
				$d^{\mathrm{pca}}$ & PCA rank used to approximate the pretrained manifold \\
				$i,j,c,k$ & Indices for samples, descriptions, classes, and layers, respectively \\
				$\tau$ & Temperature parameter in the softmax logits \\
				$\lambda$ & Weight of the consistency regularization \\
				$E$ & Number of training epochs \\
				$\rho$ & Confidence level in the generalization bounds \\
				$B$ & Uniform upper bound on the cross-entropy loss \\
				$\Lambda^{\mathcal{H}}$ & Lipschitz constant in the localized complexity analysis \\
				$R^{\boldsymbol{\theta}}$ & Radius of the base parameter set for $\boldsymbol{\theta}$ \\
				$D^{\boldsymbol{\theta}}$ & Dimension of the prompt parameter vector in the generalization analysis \\
				$\varepsilon^{\mathrm{num}}$ & Numerical stability constant in the drift ratio \\
				$\kappa^{\mathrm{vis}},\,\kappa^{\mathrm{txt}}$ & Non-near-opposition margins for visual and text fusion \\
				$\zeta^{\max}$, $\varepsilon^{\mathrm{proj}}$ & Maximum anchor neighborhood radius and average projection error bound \\
				$\hat d_c$ & Distance between anchor $\mathbf{w}_{c}$ and its discrete projection $\tilde{\mathbf{w}}_{c}$ \\
				\bottomrule
			\end{tabular}
		}
	\end{table}
	\makeatletter
	\setlength{\@fptop}{0pt}
	\makeatother
	\begin{table}[t!]
		\centering
		\caption{Notation for features, prompts, and manifolds.}
		\scalebox{0.7}{
			\begin{tabular}{ll}
				\toprule
				Symbol & Description \\
				\midrule
				$\mathcal{T},\,\mathcal{V}$ & Frozen CLIP text and image encoders (prompts are provided as additional inputs during tuning) \\
				$\mathbf{z}^{\mathrm{txt}}_{c}$ & Frozen CLIP text feature for class $c$ \\
				$\mathbf{z}^{\mathrm{vis}}_{\mathbf{x}}$ & Frozen CLIP visual feature for image $\mathbf{x}$ \\
				$\mathbf{h}^{\mathrm{txt}}_{c}$ & Prompt-branch text feature for class $c$ \\
				$\mathbf{h}^{\mathrm{vis}}_{\mathbf{x}}$ & Prompt-branch visual feature for image $\mathbf{x}$ \\
				$\mathbf{f}^{\mathrm{txt}}_{c}$ & Final fused text feature for class $c$ \\
				$\mathbf{f}^{\mathrm{vis}}_{\mathbf{x}}$ & Final fused visual feature for image $\mathbf{x}$ \\
				$\mathcal{P}^{\mathrm{txt}}, \mathcal{P}^{\mathrm{vis}}$ & Sets of deep text and visual prompt parameters \\
				$A_c,\,A$ & LLM-generated description sets per class and over all classes \\
				$E_c$, $\mathbf{w}_{c}$ & LLM-based description feature set and semantic prototype for class $c$ \\
				$\tilde{\mathbf{w}}_{c}$ & Discrete projection of $\mathbf{w}_{c}$ onto $\mathcal{M}^{\star,\mathrm{txt}}_{c}$ \\
				$\mathbf{F}^{\mathrm{txt}},\,\mathbf{W}$ & Matrices whose columns are class text features and semantic prototypes, respectively \\
				$\mathbf{Z},\,\mathbf{H}$ & Pretrained and prompt-tuned feature matrices in $\mathbb{R}^{N\times d}$ \\
				$\bar{\mathbf{Z}},\,\bar{\mathbf{H}}$ & Centered pretrained and prompt-tuned feature matrices \\
				$\mathbf{U},\,\mathbf{\Sigma},\,\mathbf{V}$ & Matrix factors in the SVD of $\bar{\mathbf{Z}}$ \\
				$\boldsymbol{\mu}$ & Mean of pretrained features used for centering \\
				$\mathbf{V}^{\mathrm{pca}}$ & Top $d^{\mathrm{pca}}$ principal directions of pretrained features \\
				$\mathbf{P}^{\mathrm{pca}}$ & Projection matrix onto the PCA subspace \\
				$\mathcal{Q},\,\mathcal{S}$ & Transferable and shortcut feature subspaces \\
				$\boldsymbol{\delta},\,\boldsymbol{\delta}^{\mathcal{Q}},\,\boldsymbol{\delta}^{\mathcal{S}}$ & Prompt-induced feature shift and its decomposition along $\mathcal{Q}$ and $\mathcal{S}$ \\
				$\mathcal{M}^{\mathrm{vis}}$ & Manifold of frozen CLIP visual features \\
				$\mathcal{M}^{\mathrm{txt}},\,\mathcal{M}^{\star,\mathrm{txt}}_{c}$ & Global and class-specific realizable text feature sets \\
				\bottomrule
			\end{tabular}}
			\vspace{20pt}
			\caption{Notation for losses and drift-related quantities.}
			\scalebox{0.8}{
				\begin{tabular}{ll}
					\toprule
					Symbol & Description \\
					\midrule
					$\ell_c(\mathbf{x})$ & Logit for class $c$ given input $\mathbf{x}$ \\
					$p(c\,|\,\mathbf{x})$ & Predicted class probability for input $\mathbf{x}$ \\
					$\mathcal{L}^{\mathrm{ce}}$ & Cross-entropy classification loss \\
					$\mathcal{L}^{\mathrm{img}}$ & Visual-side cosine consistency loss \\
					$\mathcal{L}^{\mathrm{txt}}$ & Text-side cosine consistency loss \\
					$\mathcal{L}^{\mathrm{con}}$ & Total consistency loss $\mathcal{L}^{\mathrm{img}}+\mathcal{L}^{\mathrm{txt}}$ \\
					$\mathcal{L}^{\mathrm{total}}$ & Overall ManiPT training objective function \\
					$R(\boldsymbol{\theta})$ & Population risk with respect to distribution $\mathcal{D}$ \\
					$\widehat R_{N}(\boldsymbol{\theta})$ & Empirical risk over $N$ training samples \\
					$\mathcal{R}(\mathbf{Z})$ & Fraction of off-manifold energy for feature set $\mathbf{Z}$ \\
					$\Delta$ & Manifold drift between prompt-tuned and pretrained features \\
					$\Delta\boldsymbol{\ell}^{\boldsymbol{\theta}}_{\mathbf{x}}$ & Logit perturbation vector relative to the frozen model \\
					$J^{\mathrm{emp}}(h)$ & Empirical $\ell_2$ pseudo-metric on functions evaluated on the training sample \\
					\bottomrule
				\end{tabular}
			
		}
	\end{table}
\section*{Author Biographies}
\begin{icmlbiography}[{\includegraphics[width=1in,height=1.25in,clip,keepaspectratio]{"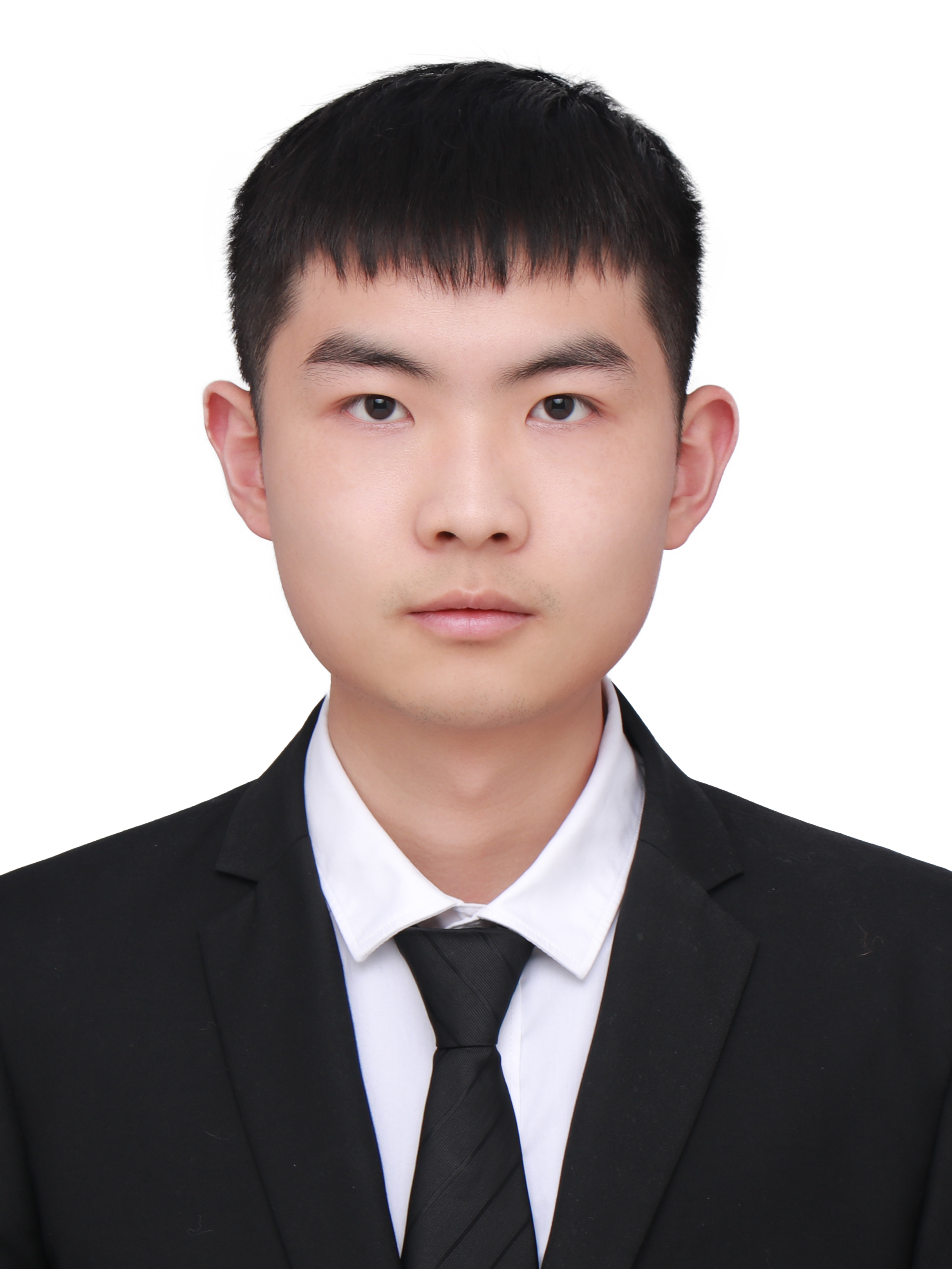"}}]{Xi Yang}
	is a senior undergraduate student at the College of Big Data and Information Engineering, Guizhou University. He majors in Data Science and Big Data Technology. His current research interests primarily include Spatial Transcriptomics and Computer Vision.
\end{icmlbiography}
\vspace{50pt}
\begin{icmlbiography}[{\includegraphics[width=1in,height=1.25in,clip,keepaspectratio]{"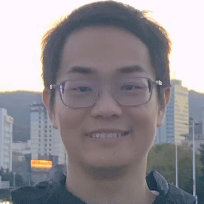"}}]{Yuanrong Xu}
	is currently with the School of Computer Science and Technology, Harbin Institute of Technology. His research interests primarily include computer vision, multimedia computing, and video analysis. He has published several papers in prestigious international journals and conferences, including IEEE Transactions on Circuits and Systems for Video Technology and IEEE Transactions on Instrumentation and Measurement.
\end{icmlbiography}

\begin{icmlbiography}[\vspace{15pt}{\includegraphics[width=1in,height=1.25in,clip,keepaspectratio]{"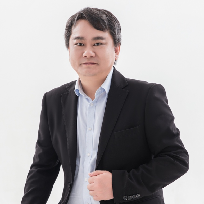"}}]{Weigang Zhang}
	received the B.S. degree in computer science and technology in 2003, and the M.S. and Ph.D. degrees in computer applied technology in 2005 and 2016, respectively, all from the Harbin Institute of Technology, China. He is currently a Professor and the Vice Dean of the School of Computer Science and Technology, Harbin Institute of Technology, Weihai. His research interests include video analysis, image processing, and pattern recognition. He has published more than 80 academic papers.
\end{icmlbiography}

\begin{icmlbiography}[{\includegraphics[width=1in,height=1.25in,clip,keepaspectratio]{"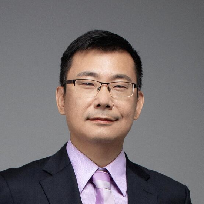"}}]{Guangming Lu}
	received the Ph.D. degree in computer applications from the Harbin Institute of Technology in 2005. From 2005 to 2007, he conducted postdoctoral research at the Information and Communication Engineering Postdoctoral Research Station of Tsinghua University. Since 2007, he has been with the Harbin Institute of Technology, Shenzhen, where he is currently a Professor and Doctoral Supervisor. His research interests include computer vision, machine learning, and medical image and signal processing. 
\end{icmlbiography}

\begin{icmlbiography}[{\includegraphics[width=1in,height=1.25in,clip,keepaspectratio]{"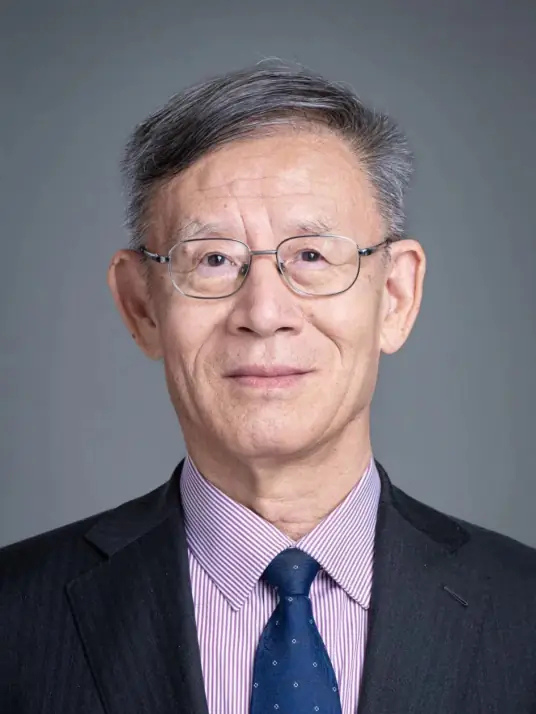"}}]{David Zhang}
	(Life Fellow, IEEE) graduated in computer science from Peking University. He received the M.Sc. and Ph.D. degrees in computer science from the Harbin Institute of Technology in 1982 and 1985, respectively, and a second Ph.D. degree in electrical and computer engineering from the University of Waterloo, Canada, in 1994. He is currently the X.Q. Deng Presidential Chair Professor with the School of Data Science, The Chinese University of Hong Kong, Shenzhen. His research interests include pattern recognition, image processing, and biometrics. He has published over 500 international journal papers and holds 60 patents. From 2014 to 2021, he was continuously listed as a Highly Cited Researcher in Engineering by Clarivate Analytics. He is a Fellow of the Royal Society of Canada, the Canadian Academy of Engineering, and the International Association for Pattern Recognition (IAPR).
\end{icmlbiography}

\begin{icmlbiography}[{\includegraphics[width=1in,height=1.25in,clip,keepaspectratio]{"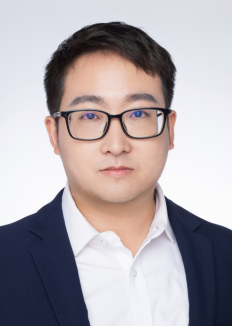"}}]{Jie Wen}
	(Senior Member, IEEE) received the PhD degree in computer science and technology from the Harbin Institute of Technology, Shenzhen, in 2019. He is currently a professor with the School of Computer Science and Technology, Harbin Institute of Technology. His research interests include image and video enhancement, pattern recognition, and machine learning. He has authored or co-authored more than 150 technical papers at prestigious international journals and conferences. He serves as an associate editor of \textit{IEEE Transactions on Pattern Analysis and Machine Intelligence}, \textit{IEEE Transactions on Image Processing}, \textit{IEEE Transactions on Information Forensics and Security}, \textit{IEEE Transactions on Circuits and Systems for Video Technology}, \textit{Pattern Recognition}, and \textit{International Journal of Image and Graphics}, an area editor of the \textit{Information Fusion}. He also served as the area chair of ACM MM, NeurIPS, ICLR, and ICML. He was selected for the `World's Top 2\% Scientists List' in 2021--2025.
\end{icmlbiography}
\end{document}